%% file: arxiv.tex
\documentclass{article}

\usepackage[preprint]{neurips_2026}

\usepackage[utf8]{inputenc} 
\usepackage[T1]{fontenc}    
\usepackage{url}            
\usepackage{booktabs}       
\usepackage{amsfonts}       
\usepackage{nicefrac}       
\usepackage{microtype}      
\usepackage{xcolor}         

\usepackage{colortbl}

\usepackage{amsmath}
\usepackage{amssymb}
\usepackage{mathtools}
\usepackage{amsthm}
\usepackage{wrapfig}
\usepackage{listings}
\usepackage{xcolor}
\usepackage{colortbl}
\usepackage{tabulary}
\usepackage{etoolbox}
\usepackage{multirow}
\usepackage{enumitem}
\usepackage{xspace}
\usepackage{algorithm,algorithmicx,algpseudocode}
\usepackage{subfig}

\usepackage[pagebackref=true,breaklinks=true,colorlinks,urlcolor={red!90!black},linkcolor={red!90!black},citecolor={blue!90!black},bookmarks=false]{hyperref}
\usepackage[toc,page,header]{appendix}
\usepackage{minitoc}

\newtheorem{Theorem}{Theorem}[section]

\newtheorem{Proposition}[Theorem]{Proposition}
\newtheorem{Assumption}[Theorem]{Assumption}
\newtheorem{Lemma}[Theorem]{Lemma}

\theoremstyle{definition}
\newtheorem{Remark}[Theorem]{Remark}

\newcommand{\tablestyle}[2]{\setlength{\tabcolsep}{#1}\renewcommand{\arraystretch}{#2}\centering
\small
}
\newcolumntype{x}[1]{>{\centering\arraybackslash}p{#1pt}}
\newcolumntype{y}[1]{>{\raggedright\arraybackslash}p{#1pt}}
\newcolumntype{z}[1]{>{\raggedleft\arraybackslash}p{#1pt}}

\newcommand\tablefontsize{%
  \fontsize{8pt}{9pt}\selectfont
}

\newcommand{\method}{W-Flow\xspace}
\newcommand{\OT}{\mathrm{OT}}

\newcommand{\1}{\mathbf{1}}

\newcommand{\black}[1]{{\textcolor{black}{#1}}}

\title{One-Step Generative Modeling via \\Wasserstein Gradient Flows}

\author{
  \centering
  Jiaqi Han$^{1}\footnotemark[1]$
  \And
  Puheng Li$^{1}$\thanks{Equal contribution. Correspondence to: \href{mailto:jiaqihan@stanford.edu}{\texttt{jiaqihan@stanford.edu}}, \href{mailto:puhengli@stanford.edu}{\texttt{puhengli@stanford.edu}}.} \And
  Qiushan Guo$^2$ \AND
  Renyuan Xu$^1$\footnotemark[2] \And
  Stefano Ermon$^1$\footnotemark[2] \And 
  Emmanuel J. Cand\`es$^1$\thanks{Equal advising.} \AND
  \\[-1em] 
  \centerline{$^1$Stanford University \qquad $^2$ByteDance
  }
\\[1em]
  \centerline{\texttt{Project Page:}~\url{https://hanjq17.github.io/W-Flow/}}
}

\input{math_commands}

\begin{document}

\maketitle
\doparttoc 
\faketableofcontents 

\begin{abstract}

{\color{black}Diffusion models and flow-based methods have shown impressive generative
capability, especially for images, but their sampling is expensive because it requires
many iterative updates. We introduce~\emph{\method}, a framework for training a
generator that transforms samples from a simple reference distribution into
samples from a target data distribution in a single step. This is achieved in two steps: we first define an evolution from the reference
distribution to the target distribution through a Wasserstein gradient flow that
minimizes an energy functional; second, we train a static neural
generator to compress this evolution into one-step generation. We instantiate the energy functional with the Sinkhorn divergence, which yields an efficient
optimal-transport-based update rule that captures global distributional
discrepancy and improves coverage of the target distribution. We further prove that the
finite-sample training dynamics converge to the 
continuous-time distributional dynamics under suitable assumptions.} Empirically, \method sets a new state of the art for one-step ImageNet 256$\times$256 generation, {\color{black}achieving} 1.29 FID, with improved mode coverage and domain transfer. Compared to multi-step diffusion models with similar FID scores, our method yields approximately 100$\times$ faster sampling. These results {\color{black}show} that Wasserstein gradient flows provide a principled and effective foundation for fast and high-fidelity generative modeling.

\end{abstract}

\input{sec/intro}

\input{sec/related_work}

\input{sec/method}

\input{sec/experiments}

\input{sec/conclusion}

\section*{Acknowledgments}
We gratefully acknowledge the Stanford Marlowe cluster~\cite{kapfer2025marlowe} for supporting part of the GPU computing resources. RX was partially supported by an NSF CAREER Award DMS-2614933 and a gift fund from Point72. SE was supported by ARO
(W911NF-21-1-0125), ONR (N00014-23-1-2159), and the CZ Biohub. EC was supported by ONR (N00014-24-1-2305) and the NIH (1R01AG08950901A1). The authors thank Yaniv Romano, Chieh-Hsin Lai, Gabriel Peyr\'e, Mingyang Deng, and Tudor Manole for feedback on an early version of this paper.

\bibliographystyle{plain}
\bibliography{ref}


\newpage
\appendix

{
\hypersetup{linkcolor=black}
\addcontentsline{toc}{section}{Appendix} 
\part{Appendix} 
\parttoc 
}

\section{Additional discussions}
\subsection{Wasserstein gradient flows of energy functionals}\label{app:other}

In this section, we derive the WGFs induced by the functionals listed in Table~\ref{tab:wgf_functionals}. For MMD and KL divergence, we also provide their particle-based estimators in the same spirit as
the Sinkhorn estimator used in Sec.~\ref{subsec:particle}. The estimators are then used in the ablation study in Table~\ref{subtab:ablation-divergence}.

Throughout this section, $p$ denotes the target data distribution. Given an energy functional $\mathcal F$, recall that the corresponding
WGF follows
\begin{equation*}
    \partial_t q_t
    =
    \nabla \cdot \left(q_t \nabla \frac{\delta \mathcal F}{\delta q}(q_t)\right),
\end{equation*}
or equivalently, through the continuity equation, we have
\begin{equation*}
    \partial_t q_t + \nabla \cdot (q_t V_t) = 0,
    \qquad
    V_t(x) = - \nabla \frac{\delta \mathcal F}{\delta q}(q_t)(x).
\end{equation*}

\paragraph{Gradient flow of Sinkhorn divergence.}
To compute the first variation of $S_\varepsilon(q,p)$ under quadratic cost with respect to $q$, we first consider the dual formulation of the EOT problem:

\begin{align}
\label{eq:EOT_dual}
    \nonumber
    \mathrm{OT}_\varepsilon(q,p) = \max_{u,v} &\int_{\mathbb{R}^d} u(x) dq(x) + \int_{\mathbb{R}^d} v(y) dp(y)\\ &- \varepsilon \iint_{\mathbb{R}^d \times \mathbb{R}^d} \exp\left(\frac{u(x)+v(y)-\frac{1}{2}\|x-y\|^2}{\varepsilon}\right) dq(x) dp(y) + \varepsilon,
\end{align}
where $u(x)$ and $v(y)$ are the dual potentials.

By the envelope theorem, the first variation of the EOT cost Eq.~\eqref{eq:EOT_dual} with respect to the marginal distribution $q$ is exactly the optimal dual potential $u_{q,p}^\varepsilon(x)$:
\begin{equation*}
    \frac{\delta \mathrm{OT}_\varepsilon(q,p)}{\delta q}(x) = u_{q,p}^\varepsilon(x).
\end{equation*}

Consequently, the first variation of $S_\varepsilon(\cdot, p)$ evaluated at $q$ is:
\begin{equation*}
    \frac{\delta S_\varepsilon(\cdot, p)}{\delta q}(q)(x) = u_{q,p}^\varepsilon(x) - u_{q,q}^\varepsilon(x).
\end{equation*}

From the optimality conditions of the dual problem, $u_{q,p}^\varepsilon(x)$ satisfies the following structural equation:
\begin{equation}\label{eq:sinkhorn_dual}
    u_{q,p}^\varepsilon(x) = -\varepsilon \log \int_{\mathbb{R}^d} \exp\left(\frac{v_{q,p}^\varepsilon(y) - \frac{1}{2}\|x-y\|^2}{\varepsilon}\right) dp(y).
\end{equation}
Taking the gradient with respect to $x$ yields:
\begin{align}
    \nabla_x u_{q,p}^\varepsilon(x) = &\frac{\int_{\mathbb{R}^d} \nabla_x \frac{1}{2}\|x-y\|^2 \exp\left(\frac{v_{q,p}^\varepsilon(y) - \frac{1}{2}\|x-y\|^2}{\varepsilon}\right) dp(y)}{\int_{\mathbb{R}^d} \exp\left(\frac{v_{q,p}^\varepsilon(y) - \frac{1}{2}\|x-y\|^2}{\varepsilon}\right) dp(y)}\nonumber\\
    =& \int_{\mathbb{R}^d} \nabla_x \frac{1}{2}\|x-y\|^2 \pi_{q,p}^\varepsilon(dy|x)\nonumber\\
    =&\int_{\mathbb{R}^d} (x - y) \pi_{q,p}^\varepsilon(dy|x) \nonumber\\
    =& x - T_{q,p}^\varepsilon(x).\label{eq:dual_grad_representation}
\end{align}

Combining the previous results, we obtain
\begin{align*}
    V_{q_t,p}^\varepsilon(x) &= -\nabla_x \frac{\delta S_\varepsilon(\cdot, p)}{\delta q}(q_t)(x) \nonumber \\
    &= -\nabla_x \left( u_{q_t,p}^\varepsilon(x) - u_{q_t,q_t}^\varepsilon(x) \right) \nonumber \\
    &= -\left( x - T_{q_t,p}^\varepsilon(x) \right) + \left( x - T_{q_t,q_t}^\varepsilon(x) \right) \nonumber \\
    &= T_{q_t,p}^\varepsilon(x) - T_{q_t,q_t}^\varepsilon(x).
\end{align*}

This recovers the Sinkhorn divergence velocity field reported in Table~\ref{tab:wgf_functionals}.

\paragraph{Gradient flow of MMD.}
Let $k:\mathbb R^d \times \mathbb R^d \to \mathbb R$ be a differentiable positive definite
kernel. For probability measures $p,q\in \mathcal{P}(\mathbb{R}^d)$, we consider the energy
\begin{equation}\label{eq:MMD_energy}
    \mathcal F_{\mathrm{MMD}}(q)
    :=
    \frac12 \mathrm{MMD}^2(q,p),
\end{equation}
where
\begin{equation}\label{eq:MMD}
    \mathrm{MMD}^2(q,p)
    =
    \iint k(x,y) q(dx) q(dy)
    -
    2 \iint k(x,y) q(dx) p(dy)
    +
    \iint k(x,y) p(dx) p(dy).
\end{equation}
Equivalently,
\begin{equation*}
    \mathcal F_{\mathrm{MMD}}(q)
    =
    \frac12
    \iint k(x,y) (q-p)(dx)(q-p)(dy).
\end{equation*}
To compute the first variation, consider a perturbation
$q_s = q + s\,\chi$ with signed measure $\chi$ satisfying $\int \chi(dx)=0$. Then
\begin{align*}
    \frac{d}{ds}\mathcal F_{\mathrm{MMD}}(q_s)\bigg|_{s=0}
    &=
    \iint k(x,y) \chi(dx) (q-p)(dy)  \\
    &=
    \int
    \left[
        \int k(x,y) (q-p)(dy)
    \right]\chi(dx).
\end{align*}
Therefore, up to an additive constant,
\begin{equation*}
    \frac{\delta \mathcal F_{\mathrm{MMD}}}{\delta q}(q)(x)
    =
    \int k(x,y)q(dy)
    -
    \int k(x,y)p(dy).
\end{equation*}
Taking the negative spatial gradient gives the Wasserstein velocity field
\begin{align*}
    V_t^{\mathrm{MMD}}(x)
    &=
    -\nabla_x
    \frac{\delta \mathcal F_{\mathrm{MMD}}}{\delta q}(q_t)(x) \\
    &=
    \int \nabla_x k(x,y) p(dy)
    -
    \int \nabla_x k(x,y) q_t(dy),
\end{align*}
with $q_t$ the corresponding flow under the initial condition $q_0$.
This is the MMD velocity field reported in Table~\ref{tab:wgf_functionals}.

\paragraph{Particle-based estimator for MMD.}
Given generated particles $\{x_i\}_{i=1}^N \sim q_t$ and target particles
$\{y_j\}_{j=1}^M \sim p$, define the empirical measures
\begin{equation*}
    \widehat q_t = \sum_{i=1}^N a_i \delta_{x_i},
    \qquad
    \widehat p = \sum_{j=1}^M b_j \delta_{y_j},
    \qquad
    \sum_i a_i = \sum_j b_j = 1.
\end{equation*}
As in the Sinkhorn self-transport estimator, we  draw an independent second batch
$\{x'_\ell\}_{\ell=1}^{N'} \sim q_t$ and define
\begin{equation*}
    \widehat q'_t = \sum_{\ell=1}^{N'} a'_\ell \delta_{x'_\ell}.
\end{equation*}
Then the particle estimator of the MMD-induced velocity at $x_i$ is
\begin{equation*}
    \widehat V_{\widehat q_t,\widehat p}^{\mathrm{MMD}}(x_i)
    =
    \sum_{j=1}^M b_j \nabla_x k(x_i,y_j)
    -
    \sum_{\ell=1}^{N'} a'_\ell \nabla_x k(x_i,x'_\ell).
\end{equation*}
For uniform weights, this becomes
\begin{equation*}
    \widehat V_{\widehat q_t,\widehat p}^{\mathrm{MMD}}(x_i)
    =
    \frac1M \sum_{j=1}^M \nabla_x k(x_i,y_j)
    -
    \frac1{N'} \sum_{\ell=1}^{N'} \nabla_x k(x_i,x'_\ell).
\end{equation*}
If $k(\cdot,\cdot)$ is the Gaussian kernel
\begin{equation*}
    k_\sigma(x,y)
    =
    \exp\left(-\frac{\|x-y\|^2}{2\sigma^2}\right),
\end{equation*}
then
\begin{equation*}
    \nabla_x k_\sigma(x,y)
    =
    \frac{y-x}{\sigma^2} k_\sigma(x,y),
\end{equation*}
and hence
\begin{equation*}
    \widehat V_{\widehat q_t,\widehat p}^{\mathrm{MMD}}(x_i)
    =
    \frac1{M\sigma^2}
    \sum_{j=1}^M
    (y_j-x_i) k_\sigma(x_i,y_j)
    -
    \frac1{N'\sigma^2}
    \sum_{\ell=1}^{N'}
    (x'_\ell-x_i) k_\sigma(x_i,x'_\ell).
\end{equation*}
The corresponding particle update follows:
\begin{equation*}
    x_i
    \leftarrow
    x_i
    +
    \eta
    \widehat V_{\widehat q_t,\widehat p}^{\mathrm{MMD}}(x_i).
\end{equation*}

\paragraph{Gradient flow of KL divergence.}
For $p,q\in \mathcal{P}(\mathbb{R}^d)$, we next consider the KL energy
\begin{equation*}
    \mathcal F_{\mathrm{KL}}(q)
    :=
    D_{\mathrm{KL}}(q\|p)
    =
    \int q(x)\log \frac{q(x)}{p(x)} dx,
\end{equation*}
assuming that $q$ and $p$ admit smooth positive densities with respect to the Lebesgue measure.
For a perturbation $q_s = q + s\,\chi$ with $\int \chi(x)dx=0$, we have
\begin{align*}
    \frac{d}{ds}\mathcal F_{\mathrm{KL}}(q_s)\bigg|_{s=0}
    &=
    \int
    \left(
        \log q(x) - \log p(x) + 1
    \right)\chi(x)dx.
\end{align*}
Therefore, up to an additive constant,
\begin{equation*}
    \frac{\delta \mathcal F_{\mathrm{KL}}}{\delta q}(q)(x)
    =
    \log q(x) - \log p(x).
\end{equation*}
The induced Wasserstein velocity field is
\begin{align*}
    V_t^{\mathrm{KL}}(x)
    &=
    -\nabla_x
    \frac{\delta \mathcal F_{\mathrm{KL}}}{\delta q}(q_t)(x) \\
    &=
    \nabla_x \log p(x)
    -
    \nabla_x \log q_t(x),
\end{align*}
with $q_t$ the corresponding flow under the initial condition $q_0$. This recovers the KL velocity field reported in Table~\ref{tab:wgf_functionals}. The first term
attracts particles toward high-density regions of the target distribution, while the second term is
a repulsive score term induced by the current model distribution.

\paragraph{Particle-based estimator for KL.}
Unlike MMD and Sinkhorn divergence, the KL velocity field requires access to the score functions
$\nabla \log p$ and $\nabla \log q_t$. In the absence of analytic scores, a particle-based estimator
can be constructed through kernel density estimation (KDE).

Let $k_\sigma(\cdot,\cdot)$ be a smooth kernel with bandwidth $\sigma>0$, for example,
\begin{equation*}
    k_\sigma(x,y)
    =
    \exp\left(-\frac{\|x-y\|^2}{2\sigma^2}\right).
\end{equation*}
Using target particles $\{y_j\}_{j=1}^M \sim p$ and an independent generated batch
$\{x'_\ell\}_{\ell=1}^{N'} \sim q_t$, define
\begin{equation*}
    \widehat p_\sigma(x)
    =
    \sum_{j=1}^M b_j k_\sigma(x,y_j),
    \qquad
    \widehat q_{t,\sigma}(x)
    =
    \sum_{\ell=1}^{N'} a'_\ell k_\sigma(x,x'_\ell).
\end{equation*}
The KDE-based score estimators are
\begin{equation*}
    \widehat s_p(x)
    :=
    \nabla_x \log \widehat p_\sigma(x)
    =
    \frac{\sum_{j=1}^M b_j \nabla_x k_\sigma(x,y_j)}
    {\sum_{j=1}^M b_j k_\sigma(x,y_j)},
\end{equation*}
and
\begin{equation*}
    \widehat s_{q_t}(x)
    :=
    \nabla_x \log \widehat q_{t,\sigma}(x)
    =
    \frac{\sum_{\ell=1}^{N'} a'_\ell \nabla_x k_\sigma(x,x'_\ell)}
    {\sum_{\ell=1}^{N'} a'_\ell k_\sigma(x,x'_\ell)}.
\end{equation*}
The particle estimator of the KL-induced velocity is
\begin{equation*}
    \widehat V_{\widehat q_t,\widehat p}^{\mathrm{KL}}(x_i)
    =
    \widehat s_p(x_i)
    -
    \widehat s_{q_t}(x_i).
\end{equation*}
For the Gaussian kernel, we have
\begin{equation*}
    \nabla_x k_\sigma(x,y)
    =
    \frac{y-x}{\sigma^2}k_\sigma(x,y).
\end{equation*}
Therefore,
\begin{equation*}
    \widehat s_p(x_i)
    =
    \frac{1}{\sigma^2}
    \left(
        \frac{\sum_{j=1}^M b_j k_\sigma(x_i,y_j)y_j}
        {\sum_{j=1}^M b_j k_\sigma(x_i,y_j)}
        -
        x_i
    \right),
\end{equation*}
and similarly
\begin{equation*}
    \widehat s_{q_t}(x_i)
    =
    \frac{1}{\sigma^2}
    \left(
        \frac{\sum_{\ell=1}^{N'} a'_\ell k_\sigma(x_i,x'_\ell)x'_\ell}
        {\sum_{\ell=1}^{N'} a'_\ell k_\sigma(x_i,x'_\ell)}
        -
        x_i
    \right).
\end{equation*}
The particle-based estimate of the velocity field simplifies to the difference between two kernel-weighted local
means:
\begin{equation*}
    \widehat V_{\widehat q_t,\widehat p}^{\mathrm{KL}}(x_i)
    =
    \frac{1}{\sigma^2}
    \left[
        \frac{\sum_{j=1}^M b_j k_\sigma(x_i,y_j)y_j}
        {\sum_{j=1}^M b_j k_\sigma(x_i,y_j)}
        -
        \frac{\sum_{\ell=1}^{N'} a'_\ell k_\sigma(x_i,x'_\ell)x'_\ell}
        {\sum_{\ell=1}^{N'} a'_\ell k_\sigma(x_i,x'_\ell)}
    \right].
\end{equation*}
The corresponding particle update then follows:
\begin{equation*}
    x_i
    \leftarrow
    x_i
    +
    \eta
    \widehat V_{\widehat q_t,\widehat p}^{\mathrm{KL}}(x_i).
\end{equation*}

\subsection{More discussion on the estimators for self-transport}\label{app:self_transport}

\begin{figure}[t]
    \centering
    \includegraphics[width=0.8\linewidth]{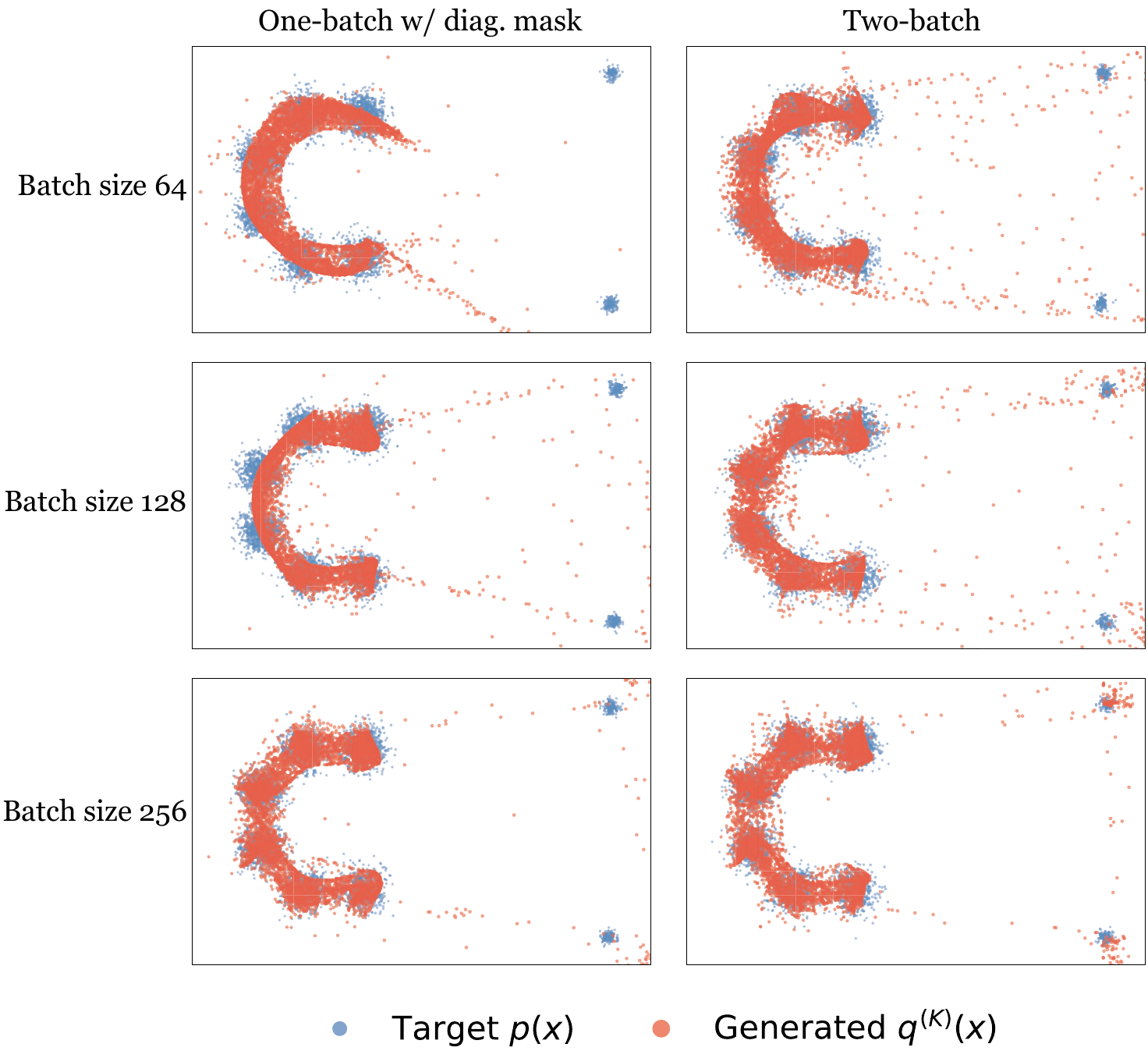}
    \caption{Evaluation of self-transport estimators on a 2D Gaussian mixture dataset featuring six dominant modes and two distant minority modes. ``One-batch'' refers to the single batch estimator with diagonal masking. ``Two-batch'' refers to our two-batch  estimator with resampling.}
    \label{fig:debiased_self_transport}
\end{figure}

As introduced in Sec.~\ref{subsec:particle}, estimating the self-interaction velocity field $T_{q,q}^\varepsilon(x)$ requires computing the OT plan of the model distribution $q$ with itself. A naive empirical approximation based on a single mini-batch of size $B$, i.e., $\{x_i\}_{i=1}^B$, inherently suffers from a self-matching bias, as shown in Fig.~\ref{fig:two_batch}. Following \cite{deng2026generative}, which addresses a similar issue, one may instead apply a diagonal mask to the cost matrix (setting $C_{ii} = \infty$), which explicitly prevents any particle from matching with itself.

However, this masking strategy introduces a new source of bias that is sensitive to the batch size $B$. For any given particle $x_i$, the diagonal mask effectively forces it to transport its mass to an empirical target distribution composed of the remaining particles:
\begin{equation*}
    \widehat{q}_{-i} = \frac{1}{B-1} \sum_{j \neq i} \delta_{x_j}.
\end{equation*}
Because $x_i$ is excluded from its own target space, the particle is artificially forced to move a strictly positive distance towards its neighbors, even if it is already situated perfectly at a high-density peak of the true distribution $q$. The magnitude of this structural bias scales proportionally to $\mathcal{O}(1/B)$, effectively meaning that a fraction of the mass corresponding to roughly $(B-1)/B$ is mapped sub-optimally. As $B$ becomes smaller, this forced redistribution dominates the velocity field, pushing particles away from their true local density centers (as observed in the left column of Fig.~\ref{fig:debiased_self_transport}).

By drawing an independent second batch $\{x_l'\}_{l=1}^B \sim q_t$, our two-batch estimator effectively avoids this issue. The OT plan is computed between two independent empirical measures, allowing a particle $x_i$ to match freely with a particle $x_l'$ that happens to be infinitesimally close to it, without any artificial $\infty$-cost penalty. This yields a consistent estimate of the continuous self-transport map that remains robust even at small batch sizes.

\subsection{More discussions on  velocity guidance}
\label{app:kl_velocity_guidance}

In this section, we first show that our proposed velocity guidance recovers the standard
exponentially tilted target distribution when the energy functional is chosen as the KL
divergence. We then show how this understanding is leveraged to construct the proposed velocity guidance in Eq.~\eqref{eq:velocity-cfg}, which also serves as a conceptual justification for Eq.~\eqref{eq:velocity-cfg}.

Let $p(\cdot|c)$ denote the conditional data distribution, $p(\cdot|\varnothing)$ 
denote the unconditional data distribution, and $w \ge 0$ be the guidance weight. 
In analogy with Eq.~\eqref{eq:velocity-cfg}, we construct the velocity-guidance field by adding a term scaled by $w$ that points from the unconditional to the conditional distribution:
\begin{equation}
    \widetilde V_{t}^{\mathrm{KL},w}(x)
    :=
    \underbrace{\Big(
        \nabla_x \log p(x|c)
        -
        \nabla_x \log q_t(x|c)
    \Big)}_{V^{\mathrm{KL}}_{q_t(\cdot|c),p(\cdot|c)}}
    +
    w
    \underbrace{\Big(
        \nabla_x \log p(x|c)
        -
        \nabla_x \log p(x|\varnothing)
    \Big)}_{V^{\mathrm{KL}}_{q_t(\cdot|c),p(\cdot|c)} - V^{\mathrm{KL}}_{q_t(\cdot|c),p(\cdot|\varnothing)}}.
    \label{eq:kl_velocity_guidance}
\end{equation}

We formalize the connection between this constructed velocity field and CFG in the following proposition.

\begin{Proposition}
    \label{prop:kl_cfg}
    The velocity-guidance field $\widetilde V_{t}^{\mathrm{KL},w}$ in Eq.~\eqref{eq:kl_velocity_guidance} is exactly the WGF velocity field of the combined energy functional
    \begin{equation*}
        \mathcal F_{\mathrm{KL}}^w(q_t(\cdot|c))
        =
        (1+w) D_{\mathrm{KL}}\big(q_t(\cdot|c)\|p(\cdot|c)\big)
        -
        w D_{\mathrm{KL}}\big(q_t(\cdot|c)\|p(\cdot|\varnothing)\big).
    \end{equation*}
    Furthermore, the unique global minimizer of $\mathcal F_w$ is the exponentially tilted target distribution $p_w(\cdot|c)$, defined as
    \begin{equation*}
        p_w(x|c)
        :=
        \frac{1}{Z_w(c)}
        p(x|c)
        \left(
            \frac{p(x|c)}{p(x|\varnothing)}
        \right)^w,
    \end{equation*}
    where $Z_w(c)$ is the normalizing constant.
\end{Proposition}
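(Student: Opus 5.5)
The proof has two parts. First we compute the WGF velocity of $\mathcal F^w_{\mathrm{KL}}$ using the first-variation computation for the KL energy from Sec.~\ref{app:other}. Then we rewrite $\mathcal F^w_{\mathrm{KL}}$ as a single KL divergence to $p_w(\cdot|c)$, up to an additive constant. Throughout we assume, as in the KL paragraph, that $q_t(\cdot|c)$, $p(\cdot|c)$ and $p(\cdot|\varnothing)$ have smooth positive densities. We also assume $Z_w(c)=\int p(x|c)^{1+w}p(x|\varnothing)^{-w}\,dx<\infty$; this is needed for $p_w$ to be well defined and should be stated explicitly.

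\textbf{Velocity field.} The first variation is linear in the functional. Using $\frac{\delta}{\delta q}D_{\mathrm{KL}}(q\|p)=\log q-\log p$ up to constants, we get
\begin{equation*}
\frac{\delta \mathcal F^w_{\mathrm{KL}}}{\delta q}(q)(x)=(1+w)\big(\log q(x)-\log p(x|c)\big)-w\big(\log q(x)-\log p(x|\varnothing)\big)=\log q(x)-(1+w)\log p(x|c)+w\log p(x|\varnothing).
\end{equation*}
The $\log q$ coefficients combine to $(1+w)-w=1$. Taking $-\nabla_x$ at $q=q_t(\cdot|c)$ gives
\begin{equation*}
\nabla\log p(x|c)-\nabla\log q_t(x|c)+w\big(\nabla\log p(x|c)-\nabla\log p(x|\varnothing)\big),
\end{equation*}
which is exactly $\widetilde V^{\mathrm{KL},w}_t$ in Eq.~\eqref{eq:kl_velocity_guidance}. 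Additive constants in the first variation disappear under the gradient, so they do not matter.

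\textbf{Minimizer.} Expand both KL terms as integrals against $q$:
\begin{equation*}
\mathcal F^w_{\mathrm{KL}}(q)=\int q\log q-\int q\big[(1+w)\log p(\cdot|c)-w\log p(\cdot|\varnothing)\big].
\end{equation*}
By the definition of $p_w$, the bracket equals $\log p_w(\cdot|c)+\log Z_w(c)$. Since $\int q=1$, this gives the identity
\begin{equation*}
\mathcal F^w_{\mathrm{KL}}(q)=D_{\mathrm{KL}}\big(q\|p_w(\cdot|c)\big)-\log Z_w(c).
\end{equation*}
Gibbs' inequality then shows that $q=p_w(\cdot|c)$ is the unique global minimizer, with minimum value $-\log Z_w(c)$. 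It also follows that the flow is the ordinary KL gradient flow toward $p_w$, which is a useful consistency check.

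\textbf{Main obstacle.} The only delicate step is justifying the splitting of integrals. The individual terms $\int q\log p(\cdot|c)$ and $\int q\log p(\cdot|\varnothing)$ must be finite, or else the identity should be understood on the domain where $\mathcal F^w_{\mathrm{KL}}$ is finite. We would restrict to $q$ with $D_{\mathrm{KL}}(q\|p(\cdot|c))<\infty$ and $D_{\mathrm{KL}}(q\|p(\cdot|\varnothing))<\infty$. On that set every term is finite and the rearrangement is legitimate. The difference of KLs is not a priori bounded below, so the identity with $D_{\mathrm{KL}}(q\|p_w)$ is precisely what supplies the lower bound.
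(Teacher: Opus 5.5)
Your proposal is correct and follows essentially the same route as the paper. You rewrite $\mathcal F^w_{\mathrm{KL}}$ as $D_{\mathrm{KL}}(q\|p_w(\cdot|c))-\log Z_w(c)$ to get the unique minimizer via Gibbs' inequality, and you take the negative gradient of the first variation to recover $\widetilde V^{\mathrm{KL},w}_t$. The differences are minor: the paper reads the first variation off the rewritten form $\log q-\log p_w$ rather than by linearity, and it leaves implicit the integrability caveats you state ($Z_w(c)<\infty$, finiteness of the individual KL terms).
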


\begin{proof}
    We first expand the combined energy functional $\mathcal F_{\mathrm{KL}}^w(q_t(\cdot|c))$:
    \begin{align*}
        \mathcal F_{\mathrm{KL}}^w(q_t)
        &=
        (1+w) \int q_t \log \frac{q_t}{p(\cdot|c)}
        -
        w \int q_t \log \frac{q_t}{p(\cdot|\varnothing)} \\
        &=
        \int q_t \log q_t
        -
        \int q_t \log \Big( p(\cdot|c)^{1+w} p(\cdot|\varnothing)^{-w} \Big) \\
        &=
        \int q_t \log \frac{q_t}{p_w(\cdot|c)}
        -
        \log Z_w(c) \\
        &=
        D_{\mathrm{KL}}\big(q_t(\cdot|c)\|p_w(\cdot|c)\big)
        -
        \log Z_w(c).
    \end{align*}
    Since $\log Z_w(c)$ is a constant with respect to the distribution $q_t$, it is evident that $\mathcal F_{\mathrm{KL}}^w(q_t(\cdot|c))$ achieves its unique global minimum when $q_t(\cdot|c) = p_w(\cdot|c)$, which proves the second part of the proposition.

    To derive the induced WGF, we compute the first variation of $\mathcal F_{\mathrm{KL}}^w(q_t(\cdot|c))$. Because the additive constant $-\log Z_w(c)$ vanishes upon differentiation, the first variation is identical to that of $D_{\mathrm{KL}}\big(q_t(\cdot|c)\|p_w(\cdot|c)\big)$:
    \begin{equation*}
        \frac{\delta \mathcal F_{\mathrm{KL}}^w}{\delta q}(q_t(\cdot|c))(x)
        =
        \log q_t(x|c)
        -
        \log p_w(x|c)
        +
        \mathrm{const.}
    \end{equation*}
    The corresponding Wasserstein velocity field is the negative spatial gradient of the first variation:
    \begin{align*}
        -\nabla_x \frac{\delta \mathcal F_{\mathrm{KL}}^w}{\delta q}(q_t(\cdot|c))(x)
        &=
        \nabla_x \log p_w(x|c)
        -
        \nabla_x \log q_t(x|c) \\
        &=
        \nabla_x \Big( (1+w)\log p(x|c) - w\log p(x|\varnothing) - \log Z_w(c) \Big)
        -
        \nabla_x \log q_t(x|c) \\
        &=
        (1+w)\nabla_x \log p(x|c)
        -
        w\nabla_x \log p(x|\varnothing)
        -
        \nabla_x \log q_t(x|c).
    \end{align*}
    Rearranging the terms, we obtain:
    \begin{equation*}
        -\nabla_x \frac{\delta \mathcal F_{\mathrm{KL}}^w}{\delta q}(q_t(\cdot|c))(x)
        =
        \Big( \nabla_x \log p(x|c) - \nabla_x \log q_t(x|c) \Big)
        +
        w \Big( \nabla_x \log p(x|c) - \nabla_x \log p(x|\varnothing) \Big),
    \end{equation*}
    which is exactly the constructed velocity-guidance field $\widetilde V_{t}^{\mathrm{KL},w}(x)$ defined in Eq.~\eqref{eq:kl_velocity_guidance}.
\end{proof}

Therefore, under the KL energy, our velocity-guidance formulation is precisely equivalent to running the WGF toward the exponentially tilted distribution $p_w(x|c)$. This corresponds exactly to the standard CFG target form, where the conditional distribution is amplified relative to the unconditional distribution by the density ratio $p(x|c)/p(x|\varnothing)$. The results in Appendix~\ref{app:additional-ablation} verify that velocity guidance yields enhanced performance over distribution guidance when KL divergence is leveraged as the energy functional.

\begin{figure}[t!]
    \centering
    \includegraphics[width=1.0\linewidth]{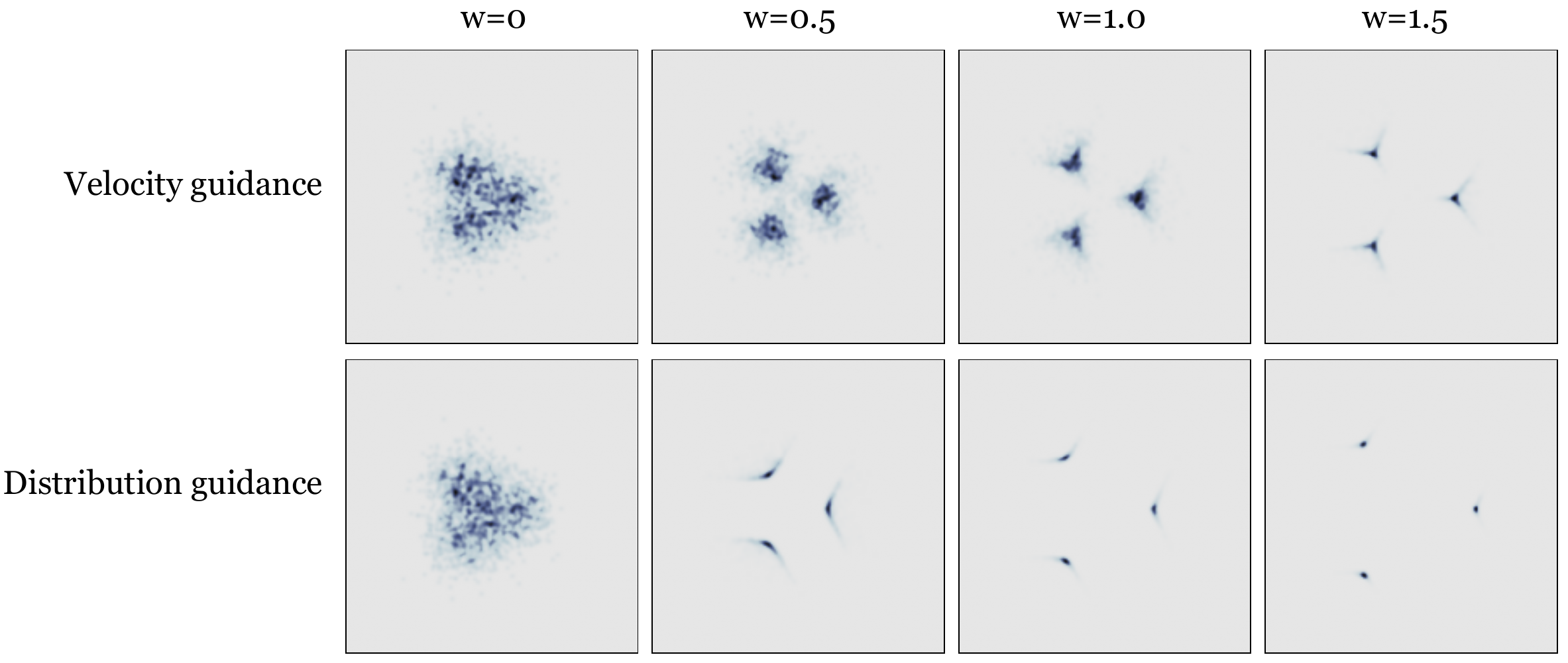}
    \caption{Comparison of velocity guidance and distribution guidance for conditional generation on a 2D 3-mode Gaussian mixture dataset.}
    \label{fig:cfg_compare}
\end{figure}

\paragraph{Connection to the Sinkhorn velocity guidance.}
For the Sinkhorn divergence, the same closed-form target distribution is generally unavailable,
because the velocity field is expressed through barycentric projections rather than score functions.
Nevertheless, the KL case shows that adding the velocity-level correction
\begin{equation*}
    V_{q_t(\cdot|c),p(\cdot|c)} - V_{q_t(\cdot|c),p(\cdot|\varnothing)}
\end{equation*}
has the correct interpretation in the score-based limit: it changes the effective target from
$p(x|c)$ to the exponentially tilted distribution
$p(x|c)(p(x|c)/p(x|\varnothing))^w$. This motivates applying the analogous guidance design to the Sinkhorn-induced velocity field. Specifically, we have
\begin{align}
\nonumber
\tilde{V}^{\varepsilon,w}_{q_\theta,p}(x_i)\!=\! \underbrace{\left(T^\varepsilon_{q_\theta(\cdot|c),p(\cdot|c)}(x_i) \!-\! T^\varepsilon_{q_\theta(\cdot|c),q_\theta(\cdot|c)}(x_i)\right)}_{V^\epsilon_{q_\theta(\cdot|c),p(\cdot|c)}} \!+\! w \underbrace{\left(T^\varepsilon_{q_\theta(\cdot|c),p(\cdot|c)}(x_i) \!-\! T^\varepsilon_{q_\theta(\cdot|c),p(\cdot|\varnothing)}(x_i)\right)}_{V^\epsilon_{q_\theta(\cdot|c),p(\cdot|c)} - V^\epsilon_{q_\theta(\cdot|c),p(\cdot|\varnothing)}},
\end{align}
where 
\begin{align}
V^\epsilon_{q_\theta(\cdot|c),p(\cdot|c)}(x_i)&=T^\varepsilon_{q_\theta(\cdot|c),p(\cdot|c)}(x_i) \!-\! T^\varepsilon_{q_\theta(\cdot|c),q_\theta(\cdot|c)}(x_i),\\
V^\epsilon_{q_\theta(\cdot|c),p(\cdot|\varnothing)}(x_i)&=T^\varepsilon_{q_\theta(\cdot|c),p(\cdot|\varnothing)}(x_i) \!-\! T^\varepsilon_{q_\theta(\cdot|c),q_\theta(\cdot|c)}(x_i),
\end{align}
which recovers Eq.~\eqref{eq:velocity-cfg}.

\textbf{Toy illustration.} To empirically compare our proposed velocity guidance with distribution guidance, we evaluate conditional generation on a 2D 3-mode Gaussian mixtures. Here, each mode is treated as a distinct class. We train a generator $x=f_\theta(z,c,w)$ where the guidance scale $w$ is randomly sampled during training. 

As illustrated in Fig.~\ref{fig:cfg_compare} (top row), velocity guidance reflects the behavior of an exponentially tilted target distribution $p_w(x|c)$. As $w$ increases, the probability mass is amplified at the highest-density regions of the conditional mode. In contrast, the distribution guidance implicitly targets a linear mixture between the conditional and unconditional distributions. Fig.~\ref{fig:cfg_compare} (bottom row) demonstrates that increasing $w$ causes particles to scatter away from the modes.

\subsection{More discussions on related work}\label{subsec:discussion}
Our proposed~\method framework is related to several prominent directions in generative modeling. We highlight the key distinctions below.

\begin{figure}[t!]
    \centering
    \includegraphics[width=0.9\linewidth]{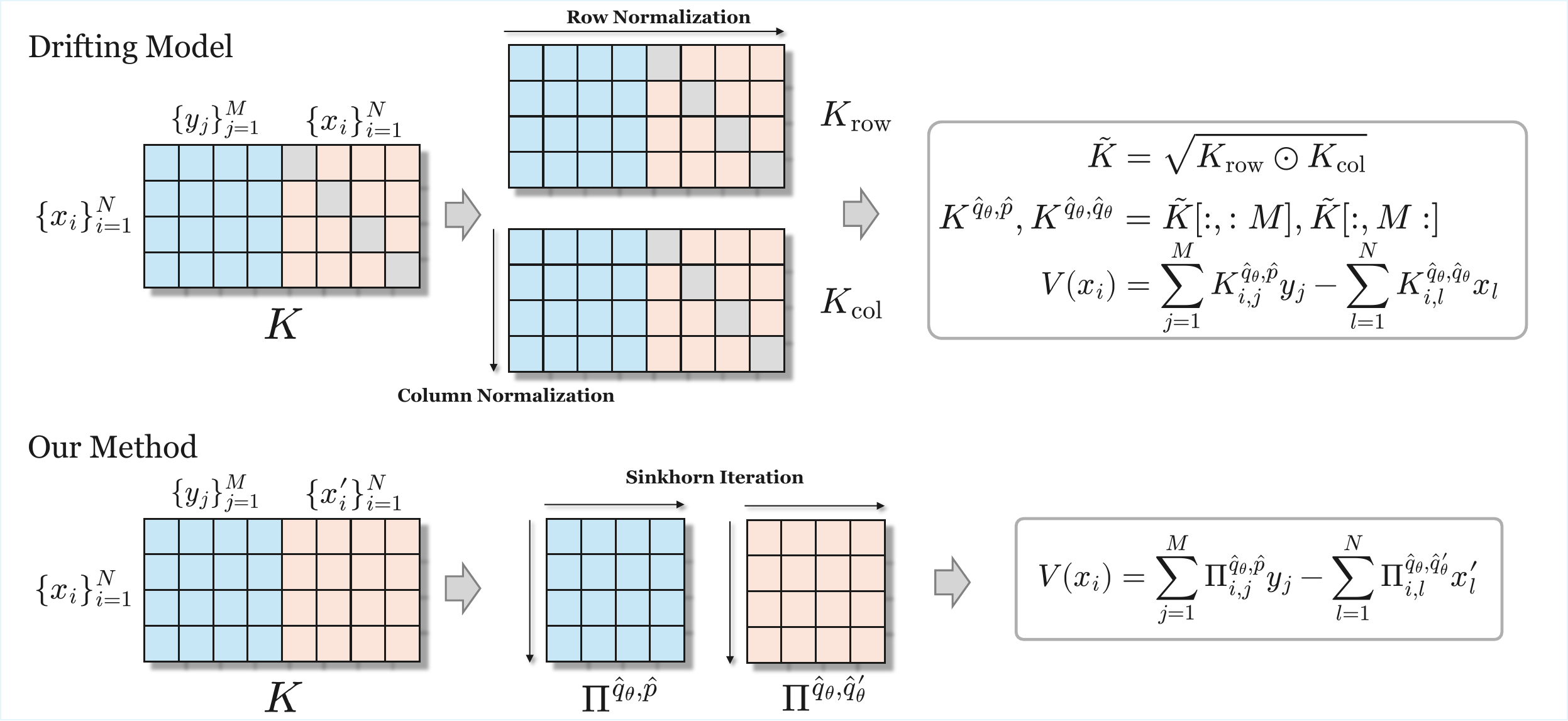}
    \caption{Illustrations of the difference in the velocity field computation between Drifting Model (Alg. 2 in~\cite{deng2026generative}) and our approach. ``$\sqrt{\cdot}$'' refers to the element-wise square root, and the \textcolor{gray}{gray blocks} refer to the diagonal mask adopted in~\cite{deng2026generative}. Drifting Model adopts a heuristic construction with separate row-wise and column-wise Softmax normalizations. Our approach leverages Sinkhorn iteration to compute the barycentric projection.}
    \label{fig:compute-illustration}
\end{figure}

\textbf{Connection to Drifting Model.}
Our formulation shares the forward, iterative pushforward structure with Drifting Model~\cite{deng2026generative}. The main difference lies in the construction of the velocity field. Drifting Model employs heuristic attraction and repulsion terms ($V_p^{+}$ and $V_q^{-}$) based on localized kernel interactions, resulting in a mean-shift-type dynamics driven by local proximity. Such local updates can lead to unstable behavior and require careful tuning to avoid mode collapse.

In contrast,~\method derives the velocity field from a WGF and instantiates it via Sinkhorn divergence. The resulting dynamics are governed by global OT couplings, which enforce marginal constraints and coordinate particle movement at the distribution level. This yields a principled and globally consistent transport mechanism.

\textbf{Distinction from JKO-based WGF models.}
Another line of work formulates generative processes as WGFs using the JKO scheme~\cite{choi2024scalable}, which corresponds to an implicit time discretization. However, each JKO step requires solving a nontrivial optimization problem, which in practice often necessitates adversarial training, thereby introducing training instability and limiting scalability.

Distinct from~\cite{choi2024scalable},~\method adopts an explicit Euler discretization of the flow. By computing the Sinkhorn velocity field $V^\varepsilon$ via efficient Sinkhorn scaling, the learning problem reduces to a simple regression objective, avoiding inner optimization loops and eliminating the need for adversarial training. This enables scalable learning of a one-step pushforward map.

\textbf{Distinction from Sinkhorn GANs.}
The use of EOT in generative modeling was popularized by Sinkhorn GAN~\cite{genevay2018learning}, which directly employs the Sinkhorn divergence as the target loss function. This approach requires auto-differentiating through the Sinkhorn iterations and backpropagating gradients through the transport plan, inducing numerical instability and memory overhead.

\method instead interprets the Sinkhorn divergence as an energy functional that defines the velocity field of the dynamics. The transport plan is treated as an empirical oracle, without backpropagation through Sinkhorn iterations. This avoids instability and allows the method to scale effectively.

\textbf{Distinction from moment matching.}
Generative Moment Matching Networks (GMMNs)~\cite{li2015generative} learn generators through particle updates driven by the squared MMD, which can be viewed as an instance of the MMD-based variant of~\method in Table~\ref{tab:wgf_functionals}. However, MMD relies on kernel-based interactions between particles, which can suffer from the curse of dimensionality and yield vanishing gradients when the current pushforward distribution is far from the target, as also reflected in our ablation experiments. Inductive Moment Matching~\cite{zhou2025inductive} performs kernel-based moment matching along a prescribed static interpolation between the prior and target distributions. As a result, it inherits the same kernel-related limitations and typically requires multiple pushforward sampling steps to simulate transitions along the interpolator.~\method instead leverages the OT-based dynamics and directly parametrizes a one-step map that does not require any intermediate simulations at inference time.

\section{Proofs}\label{app:proof}
In this section, we present the detailed statements of assumptions and theorems discussed in the main paper, along with their proofs.

In the main text, Theorem~\ref{thm:convergence} is stated for the Sinkhorn velocity
\(V_{q,p}^{\varepsilon}\) defined in \eqref{eq:continuous-sinkhorn-drift}. For generality, here we prove the result under Assumption~\ref{ass} for a generic velocity field
\[
V:\mathcal P_2(\mathbb R^d)\times \mathcal P_2(\mathbb R^d)\times \mathbb R^d
\to \mathbb R^d,
\qquad
(q,p,x)\mapsto V_{q,p}(x),
\]
where \(q\) is the evolving source distribution and \(p\) is the fixed target distribution. Proposition~\ref{prop:sinkhorn-bounded-domain} then verifies the assumption for the Sinkhorn velocity when \(q_0\) and \(p\) have bounded support.

\subsection{Complete statement and the proof of Theorem~\ref{thm:convergence}}

For \(N\in\mathbb{N}\) and step size \(\eta>0\), let particles \(\{x_i^{(k)}\}_{i=1}^N\) evolve according to
\begin{equation}
\label{eq:euler_particles}
x_i^{(k+1)}=x_i^{(k)}+\eta\,V_{\widehat q_k^N,\widehat p^M}(x_i^{(k)}),
\qquad
\widehat q_k^{N}:=\frac1N\sum_{i=1}^N\delta_{x_i^{(k)}},
\qquad
t_k:=k\eta,
\end{equation}
with the initial data points $x_i^{(0)}$  sampled i.i.d. from $q_0$.

Define the piecewise-linear interpolation \(x_i^{\eta}:[0,\infty)\to\mathbb{R}^d\) by
\begin{equation}\label{eq:linear_interpolation}
x_i^{\eta}(t):=x_i^{(k)}+(t-t_k)\,V_{\widehat q_k^{N},\widehat p^M}(x_i^{(k)}),\quad t\in[t_k,t_{k+1}), \quad i=1,2,\cdots,N,
\end{equation}
and the associated empirical measure
\[
\widehat q_t^{N,M,\eta}:=\frac1N\sum_{i=1}^N\delta_{x_i^{\eta}(t)}.
\]

We study the limiting behavior of \eqref{eq:linear_interpolation} under a joint asymptotic regime in which \(N,M\to\infty\) and \(\eta\to 0\).

Let $m_2(\mu)$ be the root second moment of $\mu\in \mathcal{P}_2(\mathbb{R}^d)$:
\[
m_2(\mu):=\left(\int_{\mathbb R^d}\|x\|^2\,d\mu(x)\right)^{1/2}.
\]

\begin{Assumption}\label{ass}
Assume that \(V_{q,p}\) is Borel measurable and belongs to \(L^2(q)\) for every
\(q,p\in\mathcal P_2(\mathbb R^d)\). Assume that there exist constants
\(L_0,L_1,L_2,L_3<\infty\) such that, for all
\(q,\widetilde q,p,\widetilde p\in\mathcal P_2(\mathbb R^d)\),
\begin{eqnarray}
    \|V_{q,p}\|_{L^2(q)}
&\le& L_0\bigl(m_2(q)+m_2(p)\bigr),
\label{ass:L2growth}\\
\|V_{q,p}(x)-V_{q,p}(x')\|
&\le& L_1\|x-x'\|,
\qquad x,x'\in\mathbb R^d,
\label{ass:spatialLip}\\
\|V_{q,p}-V_{\widetilde q,p}\|_{L^2(\widetilde q)}
&\le& L_2\mathcal W_2(q,\widetilde q),
\label{ass:L2q}\\
\|V_{q,p}-V_{q,\widetilde p}\|_{L^2(q)}
&\le& L_3\mathcal W_2(p,\widetilde p).
\label{ass:L2p}
\end{eqnarray}
\end{Assumption}

\begin{Remark}[Some examples on \(\mathcal P_2(\mathbb R^d)\)]
    The conditions in Assumption~\ref{ass} are in the spirit of the Lipschitz and
linear-growth assumptions commonly used for stability of continuity equations
and mean-field particle systems; see, e.g.,
\cite{ambrosio2008gradient,carmona2018probabilistic}.   Assumption~\ref{ass} can be verified directly for several natural velocity
fields on the full space \(\mathbb R^d\). One example is the velocity induced
by the squared MMD energy. More precisely, for a positive-definite kernel
\(k:\mathbb R^d\times\mathbb R^d\to\mathbb R\), recall the corresponding energy potential defined in \eqref{eq:MMD_energy}.
If \(k(x,y)=\kappa(x-y)\) is translation invariant and
\(\nabla \kappa:\mathbb R^d\to\mathbb R^d\) is globally Lipschitz, then the
corresponding Wasserstein velocity becomes
\[
    V_{q,p}(x)
    =
    \int_{\mathbb R^d} \kappa(x-y)\,p(dy)
    -
    \int_{\mathbb R^d} \kappa(x-y)\,q(dy).
\]
In this case, Assumption~\ref{ass} holds on \(\mathcal P_2(\mathbb R^d)\), with
constants depending only on the Lipschitz constant of \(\nabla \kappa\). For example, the Gaussian kernel $\exp\left(-\frac{\|x-y\|^2}{2\sigma^2}\right)$ belongs to this class.
Another simple example is the moment-matching velocity
\[
    V_{q,p}(x)
    =
    \bar p-\bar q,
  \,\, \textrm{ with }\,\,
    \bar q:=\int_{\mathbb R^d} y\,q(dy),
\,\,
    \bar p:=\int_{\mathbb R^d} y\,p(dy),
\]
which also satisfies Assumption~\ref{ass}  on
\(\mathcal P_2(\mathbb R^d)\). 
\end{Remark}

\begin{Remark}[Localization on bounded domains]\label{rem:bounded-localization}
When the initial distribution and the target distribution have bounded supports, the Sinkhorn
velocity satisfies the conditions in Assumption~\ref{ass} along the relevant flow trajectory.
Indeed, the Sinkhorn flow and its empirical approximations remain supported on a bounded domain
over every finite time interval \([0,T]\); see Proposition~\ref{prop:sinkhorn-bounded-domain}.
Therefore, it is enough to verify the estimates in Assumption~\ref{ass} on this bounded class of
measures, rather than globally on \(\mathcal P_2(\mathbb R^d)\).
\end{Remark}

Under these conditions, one can establish a quantitative convergence result showing that the interpolated empirical measure \(\widehat q_t^{N,M,\eta}\) converges to the unique solution of a nonlinear PDE as \(N,M\to\infty\) and \(\eta\to0\).

Now we present the \emph{formal version} of Theorem~\ref{thm:convergence} below, followed by the detailed proof.
\begin{Theorem}\label{thm:convergence_detail} Assume Assumption \ref{ass} holds and $q_0, p\in \mathcal P_2(\mathbb{R}^d)$.
   For every $T>0$, there exists a unique
$q\in C([0,T];\mathcal P_2(\mathbb R^d))$ solving the nonlinear continuity equation
\begin{equation}
\label{eq:mf_pde}
\partial_t q_t+\nabla\cdot\big(q_t\,V_{q_t,p} \big)=0
\end{equation}
in the weak sense
\begin{equation*}
\label{eq:weak_form}
\int_{\mathbb R^d}\varphi\,dq_t-\int_{\mathbb R^d}\varphi\,dq_0
=
\int_0^t\int_{\mathbb R^d}\nabla\varphi(x)\cdot V_{q_s,p} (x)\,dq_s(x)\,ds,
\qquad \forall \varphi\in C_c^1(\mathbb R^d).
\end{equation*}
Moreover, for all $N, M$ and all $\eta>0$,
\begin{equation*}
\label{eq:quant_bound_relax}
\begin{aligned}
    \sup_{t\in[0,T]} \mathcal W_2\big(\widehat q_t^{N,M,\eta},q_t\big)
    &\le
e^{(L_1+L_2)T} \mathcal W_2(\widehat q_0^N,q_0)\\
&\quad + \frac{L_3}{L_1+L_2}\bigl(e^{(L_1+L_2)T}-1\bigr)
\mathcal W_2(\widehat p^M,p)\\
&\quad + C_T\,\eta\bigl[m_2(\widehat q_0^N)+m_2(\widehat p^M)\bigr],
\end{aligned}
\end{equation*}
where \(C_T<\infty\) depends only on \(T,L_0,L_1,L_2\).
Consequently,  as $\eta\rightarrow 0$ and $N, M\to\infty$,
\begin{equation}\label{eq:bound_uniform}
    \sup_{t\in[0,T]} \mathcal W_2\big(\widehat q_t^{N,M,\eta},q_t\big)\to0, \qquad\text{almost surely.}
\end{equation}
\end{Theorem}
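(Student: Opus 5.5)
We treat this as a McKean--Vlasov-type stability result proved by synchronous coupling along characteristics. The plan has three parts. First, build the limiting flow $q$ by a fixed-point argument. Second, compare the interpolated particle system with a flow driven by the \emph{empirical} data $(\widehat q_0^N,\widehat p^M)$. Third, compare that flow with $q$ by a Gr\"onwall estimate.

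\textbf{Step 1: well-posedness.} Fix $T$ and a curve $\mu\in C([0,T];\mathcal P_2)$. By \eqref{ass:spatialLip}, the field $x\mapsto V_{\mu_t,p}(x)$ is $L_1$-Lipschitz. Measurability in $t$ follows from continuity of $\mu$ and \eqref{ass:L2q}, applied together with spatial Lipschitzness so that the $L^2$ bound upgrades to a pointwise bound on compacts. So the ODE $\dot X=V_{\mu_t,p}(X)$ has a unique flow $\Phi^\mu_t$, with linear growth controlled through \eqref{ass:L2growth} and $L_1$.

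Define $\Gamma(\mu)_t:=(\Phi^\mu_t)_\#q_0$. For two curves $\mu,\nu$, couple $X^\mu_t$ and $X^\nu_t$ with the same initial point $X_0\sim q_0$. Then
\[
\tfrac{d}{dt}\|X^\mu-X^\nu\|_{L^2}\le L_1\|X^\mu-X^\nu\|_{L^2}+\|V_{\mu_t,p}-V_{\nu_t,p}\|_{L^2(\mathrm{Law}(X^\nu_t))}.
\]
The last term is not directly of the form \eqref{ass:L2q}, because the $L^2$ is taken against $\Gamma(\nu)_t$ rather than $\nu_t$. This is handled by using a weighted sup-norm $\sup_t e^{-\lambda t}\mathcal W_2$ on the closed set of curves, and exploiting that at a fixed point the relevant measure coincides with the argument. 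Alternatively, one can mimic the particle estimate below and iterate Picard on laws directly. Either way, Gr\"onwall yields a contraction for $\lambda$ large, giving existence and uniqueness in the class of Lagrangian solutions. The weak form follows by differentiating $\int\varphi(\Phi_t)\,dq_0$. Uniqueness among weak solutions uses the standard superposition principle (Ambrosio--Gigli--Savar\'e), since $V$ is Lipschitz.

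\textbf{Step 2: stability in the initial data and the target.} Let $q$ solve \eqref{eq:mf_pde} from $q_0$ with target $p$, and let $\tilde q$ solve it from $\widehat q_0^N$ with target $\widehat p^M$. Take an optimal coupling of $(\widehat q_0^N,q_0)$ and transport it along the two characteristic flows. Set $D(t)=\|X_t-\tilde X_t\|_{L^2}$, which dominates $\mathcal W_2(\tilde q_t,q_t)$. Then
\[
\|V_{\tilde q_t,\widehat p^M}(\tilde X)-V_{q_t,p}(X)\|\le \|V_{\tilde q_t,\widehat p^M}-V_{\tilde q_t,p}\|(\tilde X)+\|V_{\tilde q_t,p}-V_{q_t,p}\|(\tilde X)+\|V_{q_t,p}(\tilde X)-V_{q_t,p}(X)\|.
\]
Taking $L^2$ norms and using \eqref{ass:L2p}, \eqref{ass:L2q} and \eqref{ass:spatialLip} gives
\[
D'\le (L_1+L_2)D+L_3\mathcal W_2(\widehat p^M,p).
\]
Gr\"onwall then produces exactly the first two terms of the bound.

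\textbf{Step 3: time discretization.} This is the main obstacle. The Euler particles are themselves an empirical-measure McKean--Vlasov system with data $(\widehat q_0^N,\widehat p^M)$, so it suffices to compare the interpolated scheme with the exact flow $\tilde q$ using the identity coupling of particle labels. On $[t_k,t_{k+1})$ the scheme uses the frozen velocity $V_{\widehat q_k^N,\widehat p^M}(x_i^{(k)})$ instead of $V_{\widehat q^{N,M,\eta}_t,\widehat p^M}(x_i^\eta(t))$. The defect is bounded by
\[
L_1\|x_i^\eta(t)-x_i^{(k)}\|+L_2\mathcal W_2(\widehat q_t^{N,M,\eta},\widehat q_k^N).
\]
Both terms are at most $(L_1+L_2)\,\eta\,\|V_{\widehat q_k^N,\widehat p^M}\|_{L^2(\widehat q_k^N)}$, which \eqref{ass:L2growth} bounds by $L_0\,\eta\,(m_2(\widehat q_k^N)+m_2(\widehat p^M))$. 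A discrete Gr\"onwall bound on second moments gives $m_2(\widehat q_k^N)\le e^{CT}(m_2(\widehat q_0^N)+m_2(\widehat p^M))$. Inserting the defect into the coupled ODE and applying Gr\"onwall again yields $C_T\eta[m_2(\widehat q_0^N)+m_2(\widehat p^M)]$.

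The delicate point is to make the constants match the statement. The $\eta$-error must be routed through the empirical flow $\tilde q$ so that $L_3$ never appears in $C_T$. Also, the step-2 bound must not absorb $\eta$-terms. This is why we use the triangle inequality $\mathcal W_2(\widehat q^{N,M,\eta}_t,q_t)\le\mathcal W_2(\widehat q^{N,M,\eta}_t,\tilde q_t)+\mathcal W_2(\tilde q_t,q_t)$.

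Finally, the almost sure limit \eqref{eq:bound_uniform} follows from $\mathcal W_2(\widehat q_0^N,q_0)\to0$ and $\mathcal W_2(\widehat p^M,p)\to0$ almost surely. This is a consequence of Varadarajan's theorem together with the strong law of large numbers for second moments. The same facts show that $m_2(\widehat q_0^N)$ and $m_2(\widehat p^M)$ stay bounded almost surely, so the $\eta$-term vanishes as well.
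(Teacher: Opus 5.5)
Your Steps 2 and 3 coincide with the paper's proof. Step 2 is exactly its stability estimate \eqref{eq:dobrushin}: characteristics are coupled synchronously, and the three-term split is evaluated at the point distributed as the second solution, so that \eqref{ass:L2q} and \eqref{ass:L2p} apply in $L^2(\nu_t)$. Step 3 is its Euler estimate for the $N$-particle ODE, whose vector field is $(L_1+L_2)$-Lipschitz in $\|\cdot\|_{2,N}$. This is followed by the same triangle inequality through the continuous-time empirical flow, so $L_3$ never enters $C_T$. You measure the local defect along the Euler iterates with a discrete Gr\"onwall bound, whereas the paper uses the exact trajectory; both give $C_T=C_T(T,L_0,L_1,L_2)$. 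Note that your per-particle defect bound only holds after averaging over $i$, since \eqref{ass:L2q} is an $L^2$ statement.

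The genuine difference is well-posedness. The paper runs no fixed point. It applies \eqref{eq:dobrushin} to pairs of empirical particle flows with targets $\widehat p^M,\widehat p^{M'}$ to get a Cauchy family in $C([0,T];\mathcal P_2)$, and passes to the limit in the weak form. Uniqueness then comes from \eqref{eq:dobrushin} with $\widetilde p=p$, tacitly using the same linear-uniqueness fact you cite. Its estimates only ever compare genuine solutions, for which the law of each characteristic is one of the two measures in the velocity difference.

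Your weighted-norm contraction does not respect this, and it fails. Assumption~\ref{ass} controls $V_{\mu_t,p}-V_{\nu_t,p}$ only in $L^2(\mu_t)$ or $L^2(\nu_t)$. Moving to $L^2(\Gamma(\nu)_t)$ by Lipschitzness costs $2L_1\mathcal W_2(\Gamma(\nu)_t,\nu_t)$, which does not vanish as $\mu\to\nu$.

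For example, $V_{q,p}(x)=e_1\min\{\operatorname{dist}(x,\operatorname{supp} q),1\}$ satisfies Assumption~\ref{ass} with $L_1=L_2=1$, $L_3=0$. Indeed, it vanishes $q$-a.e., and $\operatorname{dist}(y,\operatorname{supp} q)\le\|x-y\|$ for $\gamma$-a.e.\ $(x,y)$ under any coupling $\gamma$. Take $q_0=\delta_0$, $\nu_t=\delta_{-te_1}$ and $\mu_t=(1-\epsilon)\delta_{-te_1}+\epsilon\delta_0$. Then $\sup_t\mathcal W_2(\mu_t,\nu_t)\le\sqrt\epsilon\,T$, yet $\Gamma(\mu)_t\equiv\delta_0$. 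Meanwhile $\Gamma(\nu)$, which does not depend on $\epsilon$, moves the mass off the origin along $e_1$. So $\Gamma$ is not even continuous, and ``at a fixed point the measures coincide'' yields uniqueness, not a contraction. The same example shows that $\mathcal W_2$-closeness gives no pointwise control of $V$ away from the supports, so your time-measurability justification is also too quick; the paper passes over that point too.

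Your alternative, Picard iteration $\mu^{n+1}=\Gamma(\mu^n)$, does work, and the reason should be stated. The characteristic driven by $\mu^{n-1}$ has law $\mu^n_t$, so \eqref{ass:L2q} applies with $\widetilde q=\mu^n_t$. Writing $D_n(t)$ for the $L^2$ distance between the characteristics driven by $\mu^n$ and $\mu^{n-1}$, this gives $D_n(t)\le L_2\int_0^t e^{L_1(t-s)}D_{n-1}(s)\,ds$, hence factorial convergence. To see that the limit $\mu$ is a fixed point, use $\|V_{\mu^n_t,p}-V_{\mu_t,p}\|_{L^2(\mu^{n+1}_t)}\le L_2\mathcal W_2(\mu^n_t,\mu_t)+2L_1\mathcal W_2(\mu^{n+1}_t,\mu_t)$.

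With that repair, your route is a self-contained deterministic construction that avoids the paper's limit passage in the nonlinear weak form. The paper's route instead gets existence essentially for free from the stability estimate it needs anyway.
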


\begin{proof}
For generic \(\pmb{x}=(x_1,\ldots,x_N)\in(\mathbb R^d)^N\), define
\begin{eqnarray*}
    \widehat q^N:=\frac1N\sum_{i=1}^N\delta_{x_i},
\quad
V^{N,M}(\pmb{x})
:=
\bigl(
V_{\widehat q^N,\widehat p^M} (x_1),\ldots,
V_{\widehat q^N,\widehat p^M} (x_N)
\bigr).
\end{eqnarray*}
Equip \((\mathbb R^d)^N\) with the norm
\[
\|\pmb{x}\|_{2,N}:=\left(\frac1N\sum_{i=1}^N \|x_i\|^2\right)^{1/2}.
\]
Let \(\pmb{x}'=(x'_1,\ldots,x'_N)\), and set
\(\widehat q^{N\prime}:=N^{-1}\sum_{i=1}^N\delta_{x'_i}\). By
\eqref{ass:spatialLip} and \eqref{ass:L2q},
\begin{align*}
\|V^{N,M}(\pmb{x})-V^{N,M}(\pmb{x}')\|_{2,N}
&\le
\left(\frac1N\sum_{i=1}^N
\bigl\|V_{\widehat q^N,\widehat p^M} (x_i)
      -V_{\widehat q^N,\widehat p^M} (x'_i)\bigr\|^2
\right)^{1/2} \\
&\quad+
\left(\frac1N\sum_{i=1}^N
\bigl\|V_{\widehat q^N,\widehat p^M} (x'_i)
      -V_{\widehat q^{N\prime},\widehat p^M} (x'_i)\big\|^2
\right)^{1/2} \\
&\le L_1\|\pmb{x}-\pmb{x}'\|_{2,N}
+L_2\mathcal W_2(\widehat q^N,\widehat q^{N\prime}) \\
&\le (L_1+L_2)\|\pmb{x}-\pmb{x}'\|_{2,N},
\end{align*}
where 
\[
\mathcal W_2(\widehat q^N,\widehat q^{N\prime})
\le \left(\frac1N\sum_{j=1}^N\|x_j-x_j'\|^2\right)^{1/2}
=\|\pmb{x}-\pmb{x}'\|_{2,N}
\]
by coupling  \(x_i\) with \(x'_i\). Moreover, by \eqref{ass:L2growth},
\[
\|V^{N,M}(\pmb{x})\|_{2,N}
=
\|V_{\widehat q^N,\widehat p^M} \|_{L^2(\widehat q^N)}
\le L_0\bigl(m_2(\widehat q^N)+m_2(\widehat p^M)\bigr).
\]
Thus the finite-particle vector field is globally Lipschitz and has linear growth in
\(\|\cdot\|_{2,N}\).

Consider now the ODE
\begin{equation}
\label{eq:odeN}
\dot{\pmb{x}}^{N,M}(t)=V^{N,M}(\pmb{x}^{N,M}(t)),
\qquad
\pmb{x}^{N,M}(0)=(x^{(0)}_1,\dots,x^{(0)}_N),
\end{equation}
with the initialization \(x^{(0)}_i\) defined below \eqref{eq:euler_particles}.
Since \(V^{N,M}\) is globally Lipschitz in \(\|\cdot\|_{2,N}\), it holds that \eqref{eq:odeN} has a unique global solution by the Picard--Lindel\"of theorem.

Let
\[
\widehat{ q}_t^{N,M}:=\frac1N\sum_{i=1}^N\delta_{x_i^{N,M}(t)}.
\]
For any \(\varphi\in C_c^1(\mathbb R^d)\),
\begin{eqnarray*}
    \frac{d}{dt}\int \varphi\,d\widehat  q_t^{N,M}
&=&
\frac{d}{dt}\frac1N\sum_{i=1}^N \varphi(x_i^{N,M}(t))
=
\frac1N\sum_{i=1}^N \nabla\varphi(x_i^{N,M}(t))\cdot \dot x_i^{N,M}(t)\\
&=&
\int \nabla\varphi(x)\cdot V_{ \widehat q_t^{N,M},\widehat p^M} (x)\,d\widehat q_t^{N,M}(x).
\end{eqnarray*}
Thus \(\widehat q_t^{N,M}\) is a weak solution of
\[
\partial_t \widehat{ q}_t^{N,M}+\nabla\cdot\left(\widehat{ q}_t^{N,M}V_{\widehat{q}_t^{N,M},\widehat p^M} \right)=0.
\]

We next derive the stability estimate. Let
\(\mu,\nu\in C([0,T];\mathcal P_2(\mathbb R^d))\) be weak solutions of
\[
\partial_t \mu_t+\nabla\cdot(\mu_t V_{\mu_t,p} )=0,
\qquad
\partial_t \nu_t+\nabla\cdot(\nu_t V_{\nu_t,\widetilde p} )=0,
\]
where \(p,\widetilde p\in\mathcal P_2(\mathbb R^d)\) are fixed. First note that
\eqref{ass:L2growth} and \eqref{ass:spatialLip} imply the pointwise linear-growth bound
\begin{equation}
\label{eq:pointwise-growth-from-L2}
\|V_{r,a} (x)\|
\le
L_1\|x\|+(L_0+L_1)m_2(r)+L_0m_2(a),
\qquad r,a\in\mathcal P_2(\mathbb R^d).
\end{equation}
Indeed,
\[
\|V_{r,a} (0)\|
\le
\left(\int \|V_{r,a} (0)-V_{r,a} (z)\|^2\,dr(z)\right)^{1/2}
+
\|V_{r,a} \|_{L^2(r)}
\le
L_1m_2(r)+L_0\bigl(m_2(r)+m_2(a)\bigr).
\]
Since \(\mu,\nu\in C([0,T];\mathcal P_2(\mathbb R^d))\), for these fixed
solution trajectories we have
\[
M_T:=\sup_{t\in[0,T]}\bigl(m_2(\mu_t)+m_2(\nu_t)\bigr)<\infty .
\]
Let \(\gamma_0\in\Pi(\mu_0,\nu_0)\) be optimal for
\(\mathcal W_2(\mu_0,\nu_0)\). For each \(x_0,y_0\in\mathbb R^d\), define
\(X(\cdot;x_0)\) and \(Y(\cdot;y_0)\) as the solutions on \([0,T]\) of
\[
\begin{cases}
\dfrac{d}{dt}X(t;x_0)=V_{\mu_t,p}(X(t;x_0)),\\
X(0;x_0)=x_0,
\end{cases}
\qquad
\begin{cases}
\dfrac{d}{dt}Y(t;y_0)=V_{\nu_t,\widetilde p}(Y(t;y_0)),\\
Y(0;y_0)=y_0.
\end{cases}
\]
The global well-posedness of these ODEs on \([0,T]\) follows from
\eqref{eq:pointwise-growth-from-L2} and \eqref{ass:spatialLip}, since
\(\mu,\nu\in C([0,T];\mathcal P_2(\mathbb R^d))\). Since \(\mu_t\) and
\(\nu_t\) solve the corresponding continuity equations, their characteristic
representations give
\[
X(t;\cdot)_\#\mu_0=\mu_t,
\qquad
Y(t;\cdot)_\#\nu_0=\nu_t.
\]
Therefore
\[
\bigl(X(t;\cdot),Y(t;\cdot)\bigr)_\#\gamma_0\in\Pi(\mu_t,\nu_t).
\]
Define
\[
D(t):=
\left(
\int_{\mathbb R^d\times\mathbb R^d}
\|X(t;x_0)-Y(t;y_0)\|^2\,d\gamma_0(x_0,y_0)
\right)^{1/2}.
\]
Since \(\bigl(X(t;\cdot),Y(t;\cdot)\bigr)_\#\gamma_0\in\Pi(\mu_t,\nu_t)\), we have
\[
\mathcal W_2(\mu_t,\nu_t)\le D(t).
\]

For \(\delta>0\), set
\[
D_\delta(t):=(D(t)^2+\delta)^{1/2}.
\]
For a.e. \(t\),
\begin{align*}
\frac{d}{dt}D_\delta(t)
&=
\frac{1}{D_\delta(t)}
\int
\bigl(X(t;x_0)-Y(t;y_0)\bigr)\cdot
\Bigl[
V_{\mu_t,p} (X(t;x_0))
-
V_{\nu_t,\widetilde p} (Y(t;y_0))
\Bigr]
\,d\gamma_0(x_0,y_0) \\
&\le
\left(
\int
\Bigl\|
V_{\mu_t,p} (X(t;x_0))
-
V_{\nu_t,\widetilde p} (Y(t;y_0))
\Bigr\|^2
\,d\gamma_0(x_0,y_0)
\right)^{1/2} \\
&\le
\left(
\int
\Bigl\|
V_{\mu_t,p} (X(t;x_0))
-
V_{\mu_t,p} (Y(t;y_0))
\Bigl\|^2
\,d\gamma_0
\right)^{1/2} \\
&\quad+
\left(
\int
\Bigl\|
V_{\mu_t,p} (Y(t;y_0))
-
V_{\nu_t,p} (Y(t;y_0))
\Bigr\|^2
\,d\gamma_0
\right)^{1/2} \\
&\quad+
\left(
\int
\Bigl\|
V_{\nu_t,p} (Y(t;y_0))
-
V_{\nu_t,\widetilde p} (Y(t;y_0))
\Bigr\|^2
\,d\gamma_0
\right)^{1/2} \\
&\le
L_1D(t)
+
\|V_{\mu_t,p} -V_{\nu_t,p} \|_{L^2(\nu_t)}
+
\|V_{\nu_t,p} -V_{\nu_t,\widetilde p} \|_{L^2(\nu_t)} \\
&\le
(L_1+L_2)D(t)+L_3\mathcal W_2(p,\widetilde p).
\end{align*}
Here we used that the law of \(Y(t;y_0)\) under \(\gamma_0\) is \(\nu_t\), together with
\(\mathcal W_2(\mu_t,\nu_t)\le D(t)\). Integrating the preceding inequality and letting
\(\delta\downarrow0\) gives
\[
D(t)
\le
D(0)+\int_0^t
\Bigl((L_1+L_2)D(s)+L_3\mathcal W_2(p,\widetilde p)\Bigr)\,ds.
\]
Since \(D(0)=\mathcal W_2(\mu_0,\nu_0)\), Gronwall's inequality yields
\begin{equation}
\label{eq:dobrushin}
\mathcal W_2(\mu_t,\nu_t)
\le
 e^{(L_1+L_2)t}\mathcal W_2(\mu_0,\nu_0)
+
\frac{L_3}{L_1+L_2}
\bigl(e^{(L_1+L_2)t}-1\bigr)
\mathcal W_2(p,\widetilde p).
\end{equation}
In particular, taking \(\widetilde p=p\) gives uniqueness.

We now prove existence. Let \((\widehat q_0^N)_N\) and \((\widehat p^M)_M\) be the empirical measures  so that
\[
\widehat q_0^N\to q_0,
\qquad
\widehat p^M\to p
\qquad\text{in }\mathcal W_2
\]
almost surely. Applying \eqref{eq:dobrushin} to \(\widehat{ q_t}^{N,M}\) and \(\widehat{ q_t}^{N',M'}\) yields
\begin{align*}
\sup_{t\in[0,T]}\mathcal W_2(\widehat q_t^{N,M},\widehat q_t^{N',M'})
&\le
e^{(L_1+L_2)T}\mathcal W_2(\widehat q_0^N,\widehat q_0^{N'})+
\frac{L_3}{L_1+L_2}\bigl(e^{(L_1+L_2)T}-1\bigr)
\mathcal W_2(\widehat p^M,\widehat p^{M'}).
\end{align*}
Since \(\widehat q_0^N\to q_0\) and \(\widehat p^M\to p\) in \(\mathcal W_2\) almost surely, the right-hand side tends to \(0\) as
\(N,N',M,M'\to\infty\). Hence \((\widehat q^{N,M})_{N,M}\) is a Cauchy family in
\(C([0,T];\mathcal P_2(\mathbb R^d))\), where \(\mathcal P_2(\mathbb R^d)\) is equipped with \(\mathcal W_2\). Since
\((\mathcal P_2(\mathbb R^d),\mathcal W_2)\) is complete, there exists
\[
\overline q\in C([0,T];\mathcal P_2(\mathbb R^d))
\]
such that
\begin{equation}\label{eq:conv}
    \sup_{t\in[0,T]}\mathcal W_2(\widehat{q_t}^{N,M},\overline q_t)\to0 \qquad\text{almost surely} \qquad\text{as }N,M\to\infty.
\end{equation}
In particular, \(\overline q_0=q_0\).

We also need a uniform second-moment bound. Set
\[
m_2^{N,M}(t):=\left(\int \|x\|^2\,d\widehat q_t^{N,M}(x)\right)^{1/2}.
\]
For a.e. \(t\), using the ODE and Assumption~\ref{ass},
\[
\frac{d}{dt}m_2^{N,M}(t)
\le
\|V_{\widehat q_t^{N,M},\widehat p^M} \|_{L^2(\widehat q_t^{N,M})}
\le
L_0\bigl(m_2^{N,M}(t)+m_2(\widehat p^M)\bigr).
\]
Therefore, by Gronwall's inequality,
\begin{equation}
\label{eq:second-moment-bound}
m_2^{N,M}(t)
\le
e^{L_0t}m_2(\widehat q_0^N)
+
\bigl(e^{L_0t}-1\bigr)m_2(\widehat p^M),
\qquad 0\le t\le T.
\end{equation}
Since \(\widehat q_0^N\to q_0\) and \(\widehat p^M\to p\) in \(\mathcal W_2\) almost surely,
the sequences \(m_2(\widehat q_0^N)\) and \(m_2(\widehat p^M)\) are bounded. Hence there exists
\(C_T>0\) such that
\[
\sup_{N,M}\sup_{t\in[0,T]}\int \|x\|^2\,d\widehat q_t^{N,M}(x)\le C_T.
\]
From \eqref{eq:second-moment-bound} and the convergence in \eqref{eq:conv}, passing to the limit also gives
\[
\sup_{t\in[0,T]}\int \|x\|^2\,d\overline q_t(x)\le C_T.
\]

It remains to pass to the limit in the weak formulation. Fix \(\varphi\in C_c^1(\mathbb R^d)\).
For each \(s\in[0,T]\), let \(\gamma_s^{N,M}\) be an optimal coupling of
\(\widehat q_s^{N,M}\) and \(\overline q_s\). Write
\begin{align*}
I_{N,M}(s)
&:=\int \nabla\varphi(x)\cdot
V_{\widehat q_s^{N,M},\widehat p^M} (x)\,d\widehat q_s^{N,M}(x),\\
I(s)
&:=\int \nabla\varphi(y)\cdot
V_{\overline q_s,p} (y)\,d\overline q_s(y).
\end{align*}
Then
\begin{align*}
|I_{N,M}(s)-I(s)|
&\le
\left|\int
\bigl(\nabla\varphi(x)-\nabla\varphi(y)\bigr)\cdot
V_{\widehat q_s^{N,M},\widehat p^M} (x)
\,d\gamma_s^{N,M}(x,y)\right|\\
&\quad+
\|\nabla\varphi\|_\infty
\left(
\int
\bigl\|V_{\widehat q_s^{N,M},\widehat p^M} (x)
      -V_{\overline q_s,p} (y)\bigr\|^2
\,d\gamma_s^{N,M}(x,y)
\right)^{1/2}.
\end{align*}
The second term is bounded by
\[
\|\nabla\varphi\|_\infty
\Bigl[(L_1+L_2)\mathcal W_2(\widehat q_s^{N,M},\overline q_s)
+L_3\mathcal W_2(\widehat p^M,p)\Bigr],
\]
and converges to zero uniformly in \(s\). For the first term, let \(\omega_\varphi\) be a modulus of
continuity of \(\nabla\varphi\). For every \(r>0\), Cauchy--Schwarz and the optimality of
\(\gamma_s^{N,M}\) give
\begin{align*}
&\left|\int
\bigl(\nabla\varphi(x)-\nabla\varphi(y)\bigr)\cdot
V_{\widehat q_s^{N,M},\widehat p^M} (x)
\,d\gamma_s^{N,M}(x,y)\right|\\
&\qquad\le
\omega_\varphi(r)\,
\|V_{\widehat q_s^{N,M},\widehat p^M} \|_{L^1(\widehat q_s^{N,M})}
+2\|\nabla\varphi\|_\infty
\|V_{\widehat q_s^{N,M},\widehat p^M} \|_{L^2(\widehat q_s^{N,M})}
\frac{\mathcal W_2(\widehat q_s^{N,M},\overline q_s)}{r}.
\end{align*}
The velocity norms are uniformly bounded on \([0,T]\) by \eqref{ass:L2growth} and
\eqref{eq:second-moment-bound}. Taking \(N,M\to\infty\) and then \(r\downarrow0\) proves that
\(I_{N,M}\to I\) uniformly on \([0,T]\). Passing to the limit in the weak formulation gives
\[
\int \varphi\,d\overline q_t-\int \varphi\,dq_0
=
\int_0^t\int \nabla\varphi(x)\cdot V_{\overline q_s,p}(x)\,d\overline q_s(x)\,ds.
\]
Thus \(\overline q\) is a weak solution of \eqref{eq:mf_pde}. By uniqueness, \(\overline q\) is the
unique weak solution; we denote it by \(q\).

Applying the stability estimate in \eqref{eq:dobrushin} with
\[
\mu_t=q_t,
\qquad
\nu_t=\widehat{q}_t^{N,M},
\qquad
\widetilde p=\widehat p^M
\]
gives, for all \(t\in[0,T]\),
\begin{equation}
\label{eq:W1-t-emp}
\mathcal W_2(\widehat{q}_t^{N,M},q_t)
\le
e^{(L_1+L_2)t}\mathcal W_2(\widehat q_0^N,q_0)
+
\frac{L_3}{L_1+L_2}(e^{(L_1+L_2)t}-1)\mathcal W_2(\widehat p^M,p).
\end{equation}
Taking the supremum over \(t\in[0,T]\) gives the corresponding bound for the continuous-time particle system.

On the other hand, let \(\pmb{x}^{N,M,\eta}(t)\) be the piecewise linear Euler interpolation of \eqref{eq:odeN}, namely
\[
\pmb{x}^{N,M,\eta}(t_{k+1})
=
\pmb{x}^{N,M,\eta}(t_k)+\eta\,V^{N,M}(\pmb{x}^{N,M,\eta}(t_k)),
\]
and
\[
\pmb{x}^{N,M,\eta}(t)
=
\pmb{x}^{N,M,\eta}(t_k)+(t-t_k)V^{N,M}(\pmb{x}^{N,M,\eta}(t_k)),
\qquad t\in[t_k,t_{k+1}).
\]
Let
\[
\widehat q_t^{N,M,\eta}:=\frac1N\sum_{i=1}^N\delta_{x_i^{N,M,\eta}(t)}.
\]

Set
\[
e(t):=\|\pmb{x}^{N,M,\eta}(t)-\pmb{x}^{N,M}(t)\|_{2,N}.
\]

Since \(V^{N,M}\) is globally Lipschitz with constant \(L_1+L_2\), the standard Euler estimate gives
\[
\sup_{t\in[0,T]}e(t)
\le
\frac12\bigl(e^{(L_1+L_2)T}-1\bigr)\eta
\sup_{s\in[0,T]}
\|V^{N,M}(\pmb{x}^{N,M}(s))\|_{2,N}.
\]
Moreover, by \eqref{ass:L2growth} and \eqref{eq:second-moment-bound}, for every \(s\in[0,T]\),
\[
\begin{aligned}
\|V^{N,M}(\pmb{x}^{N,M}(s))\|_{2,N}
&=
\|V_{\widehat q_s^{N,M},\widehat p^M}^{\varepsilon}\|_{L^2(\widehat q_s^{N,M})}  \\
&\le
L_0\bigl(m_2(\widehat q_s^{N,M})+m_2(\widehat p^M)\bigr) \\
&\le
L_0 e^{L_0T}
\bigl[m_2(\widehat q_0^N)+m_2(\widehat p^M)\bigr].
\end{aligned}
\]
Therefore,
\begin{equation}
\label{eq:euler-error-bound}
\sup_{t\in[0,T]} e(t)
\le
\frac{L_0 e^{L_0T}}{2}
\bigl(e^{(L_1+L_2)T}-1\bigr)
\eta
\bigl[m_2(\widehat q_0^N)+m_2(\widehat p^M)\bigr].
\end{equation}

Since the coupling that matches \(x_i^{N,M,\eta}(t)\) with
\(x_i^{N,M}(t)\) is admissible, we have
\[
\mathcal W_2(\widehat q_t^{N,M,\eta},\widehat q_t^{N,M})
\le
\|\pmb{x}^{N,M,\eta}(t)-\pmb{x}^{N,M}(t)\|_{2,N}
=
e(t).
\]
Combining this estimate with \eqref{eq:W1-t-emp} and
\eqref{eq:euler-error-bound}, and taking the supremum over \(t\in[0,T]\), proves the quantitative bound in the theorem.

Finally, since \(q_0,p\in\mathcal P_2(\mathbb R^d)\), the empirical measures
satisfy
\[
\mathcal W_2(\widehat q_0^N,q_0)\to0,
\qquad
\mathcal W_2(\widehat p^M,p)\to0,
\qquad\text{almost surely}.
\]
Moreover,
\begin{align*}
\int_{\mathbb R^d}\|x\|^2\,d\widehat q_0^N(x)
\to
\int_{\mathbb R^d}\|x\|^2\,dq_0(x), \quad
\int_{\mathbb R^d}\|x\|^2\,d\widehat p^M(x)
\to
\int_{\mathbb R^d}\|x\|^2\,dp(x),
\qquad\text{almost surely}.
\end{align*}
Letting \(N,M\to\infty\) and \(\eta\downarrow0\), we obtain
\[
\sup_{t\in[0,T]}
\mathcal W_2(\widehat q_t^{N,M,\eta},q_t)
\to0,
\qquad\text{almost surely}.
\]
\end{proof}

\subsection{Assumption verification for Sinkhorn}
Recall that the Sinkhorn velocity in \eqref{eq:continuous-sinkhorn-drift}:
\[
V_{q,p}^{\varepsilon}(x)
=
T_{q,p}^{\varepsilon}(x)-T_{q,q}^{\varepsilon}(x),
\]
where \(T_{q,p}^{\varepsilon}\) denotes the barycentric projection of the entropic optimal coupling
between \(q\) and \(p\) for the quadratic cost.

Before verifying Assumption~\ref{ass}, we first localize the dynamics. If
\(\operatorname{supp}(q_0)\subset B_{R_0}\) and
\(\operatorname{supp}(p)\subset B_R\), then the WGF under Sinkhorn divergence remains supported in a bounded
ball on every finite time interval; see Proposition~\ref{prop:sinkhorn-support-propagation}. On this bounded domain, we then use compact-domain
stability and regularity estimates for Sinkhorn barycentric projections, collected in
Lemma~\ref{lem:sinkhorn-barycentric-compact}.  Combining these two ingredients allows us to verify
Assumption~\ref{ass} for the Sinkhorn velocity along the localized flow and its finite-particle
approximations; see Proposition~\ref{prop:sinkhorn-bounded-domain}.

\begin{Proposition}[Bounded support propagation for the Sinkhorn flow]
\label{prop:sinkhorn-support-propagation}
Assume that
\[
\operatorname{supp}(q_0)\subset B_{R_0},
\qquad
\operatorname{supp}(p)\subset B_R,
\]
where \(B_R:=\{x\in\mathbb R^d:\|x\|\le R\}\). Let \(q_t\) solve
\[
\partial_t q_t+\nabla\cdot(q_tV_{q_t,p}^{\varepsilon})=0.
\]
Then, on every finite time interval \([0,T]\),
\[
\operatorname{supp}(q_t)\subset B_{e^tR_0+(e^t-1)R}.
\]
\end{Proposition}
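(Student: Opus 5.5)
We will argue along characteristics, using that barycentric projections are averages of points. The key pointwise fact is that \(T_{q,p}^{\varepsilon}(x)=\int y\,\pi_{q,p}^{\varepsilon}(dy|x)\) is a conditional mean under a probability measure supported on \(\operatorname{supp}(p)\). Hence \(\|T_{q,p}^{\varepsilon}(x)\|\le R\) whenever \(\operatorname{supp}(p)\subset B_R\), and similarly \(\|T_{q,q}^{\varepsilon}(x)\|\le \rho\) whenever \(\operatorname{supp}(q)\subset B_\rho\). This holds for every \(x\), since the Gibbs-kernel formula \eqref{eq:sinkhorn_dual} extends the conditional law to all of \(\mathbb R^d\) and the extension is still a probability measure on \(\operatorname{supp}(p)\), respectively \(\operatorname{supp}(q)\). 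Consequently \(\|V_{q,p}^{\varepsilon}(x)\|\le R+\rho\) uniformly in \(x\).

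Let \(\rho(t)\) denote the radius of the smallest ball containing \(\operatorname{supp}(q_t)\). As in the proof of Theorem~\ref{thm:convergence_detail}, represent \(q_t=X(t;\cdot)_\#q_0\) with characteristics \(\dot X=V_{q_t,p}^{\varepsilon}(X)\). The support of \(q_t\) is then the closure of \(X(t;\operatorname{supp}q_0)\). A naive bound \(\|\dot X\|\le R+\rho(t)\) only gives \(\rho(t)\le R_0+\int_0^t(R+\rho)\), i.e.\ \(\rho(t)\le e^tR_0+(e^t-1)R\). This is exactly the claimed radius, so the crude estimate already suffices and no sharper dissipative structure is needed.

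To make this rigorous I would compare with the scalar ODE \(\dot r=r+R\), \(r(0)=R_0\), whose solution is \(r(t)=e^tR_0+(e^t-1)R\). For any \(x_0\in\operatorname{supp}q_0\),
\[
\|X(t;x_0)\|\le R_0+\int_0^t\bigl(R+\rho(s)\bigr)\,ds .
\]
Taking the supremum over \(x_0\) gives \(\rho(t)\le R_0+\int_0^t(R+\rho(s))\,ds\), and Gronwall's inequality yields \(\rho(t)\le r(t)\).

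The main obstacle is justifying that \(\rho(t)\) is finite and measurable a priori, so that Gronwall applies; otherwise the argument is circular. I would handle this with a bootstrap. Fix \(\bar r>r(T)\) and let \(t^*=\sup\{t\le T:\rho(s)\le\bar r\ \forall s\le t\}\). The velocity is bounded on \([0,t^*)\), so characteristics are Lipschitz in time and \(\rho\) is finite there. The Gronwall bound then gives \(\rho\le r<\bar r\) on \([0,t^*]\), and continuity of characteristics forces \(t^*=T\). An alternative is to first prove the bound for the finite-particle system \eqref{eq:odeN} (with \(\widehat p^M\) supported in \(B_R\)), where finiteness is trivial. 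One then passes to the limit using the \(\mathcal W_2\) convergence of Theorem~\ref{thm:convergence_detail}, since supports contained in a fixed closed ball are preserved under weak limits. I would likely present this second route, as it avoids regularity questions about \(V^\varepsilon\) on the full space.
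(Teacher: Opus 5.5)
Your core argument is the paper's proof. Both barycentric projections are means of probability measures carried by $\operatorname{supp}p\subset B_R$ and $\operatorname{supp}q_t\subset B_{\rho(t)}$, where $\rho(t)=\sup\{\|x\|:x\in\operatorname{supp}q_t\}$, i.e.\ the radius of the smallest \emph{origin-centred} ball. Hence $\|V^\varepsilon_{q_t,p}\|\le R+\rho(t)$, and integrating along trajectories, taking the supremum and applying Gronwall gives the radius $e^tR_0+(e^t-1)R$. The paper stops there and never discusses a priori finiteness of $\rho$. Your remark that the Gibbs-kernel formula makes the bound valid for every $x\in\mathbb R^d$ is a small improvement.

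The problem is the rigorization you say you would actually present: it is circular.
\begin{itemize}
\item Theorem~\ref{thm:convergence_detail} requires Assumption~\ref{ass} on all of $\mathcal P_2(\mathbb R^d)$, and this fails for $V^\varepsilon$. Take $p=\delta_0$ and $q=\tfrac12(\delta_a+\delta_{-a})$. Then $V^\varepsilon_{q,p}(x)=-a\tanh(a\cdot x/\varepsilon)$, whose Lipschitz constant $\|a\|^2/\varepsilon$ is unbounded in $a$.
\item The paper applies the theorem to the Sinkhorn flow only after localizing to $\mathcal P(B_{\bar R_T})$ (Proposition~\ref{prop:sinkhorn-bounded-domain}, Remark~\ref{rem:bounded-localization}). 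That localization rests on the very proposition you are proving.
\end{itemize}

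The first half of your route is fine. The bound for the particle systems, where finiteness is trivial, is also proved in the paper, together with its Euler version $\rho_{k+1}\le(1+\eta)\rho_k+\eta R$. Running the existence part of the theorem inside $\mathcal P(B_{\bar R_T})$ then yields \emph{some} solution obeying the bound.

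The failure is in identifying the given $q_t$ with that limit. This needs the stability estimate \eqref{eq:dobrushin} with $\mu=q$, which invokes \eqref{ass:spatialLip} and \eqref{ass:L2q} at $q_t$, i.e.\ it presupposes that $q_t$ is already localized. So for an arbitrary solution this route proves nothing, and it relocates rather than avoids the full-space regularity issue.

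Your bootstrap has a related weakness. If you only know that each characteristic is continuous, $\rho$ is merely lower semicontinuous. Nothing then controls it just after $t^*$ except a velocity bound that again involves $\rho$.

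The clean fix is to state the hypothesis the paper uses implicitly: $q_t=X(t;\cdot)_\#q_0$ for a flow jointly continuous on $[0,T]\times\operatorname{supp}q_0$. Since $\operatorname{supp}q_0$ is compact, $\rho(t)=\max_{x_0\in\operatorname{supp}q_0}\|X(t;x_0)\|$ is then finite and continuous. Your direct Gronwall argument is then rigorous, with no bootstrap needed.
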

The same support bound holds for the discrete finite-particle dynamics in \eqref{eq:euler_particles} and for the
piecewise-linear interpolation in \eqref{eq:linear_interpolation}, provided the initial particles are supported in \(B_{R_0}\)
and the target particles are supported in \(B_R\).
\begin{proof}
Let us define the radius
\[
\rho(t):=\sup\{\|x\|:x\in\operatorname{supp}(q_t)\}.
\]
Since \(T_{q_t,p}^{\varepsilon}(x)\) is the barycenter of a probability measure supported in
\(\operatorname{supp}(p)\subset B_R\), and \(T_{q_t,q_t}^{\varepsilon}(x)\) is the barycenter of a
probability measure supported in \(\operatorname{supp}(q_t)\), we have
\[
\|T_{q_t,p}^{\varepsilon}(x)\|\le R,
\qquad
\|T_{q_t,q_t}^{\varepsilon}(x)\|\le \rho(t).
\]
Therefore,
\[
\|V_{q_t,p}^{\varepsilon}(x)\|
\le
\|T_{q_t,p}^{\varepsilon}(x)\|
+
\|T_{q_t,q_t}^{\varepsilon}(x)\|
\le
R+\rho(t),
\qquad x\in\operatorname{supp}(q_t).
\]
Equivalently, for any particle trajectory \(x(t)\) in the support of the flow,
\[
\frac{d}{dt}\|x(t)\|
\le
\|V_{q_t,p}^{\varepsilon}(x(t))\|
\le
R+\rho(t).
\]
Taking the supremum over all such trajectories gives
\[
\rho(t)
\le
R_0+\int_0^t(R+\rho(s))\,ds.
\]
By Gronwall's inequality,
\[
\rho(t)
\le
e^tR_0+(e^t-1)R.
\]
This proves
\[
\operatorname{supp}(q_t)
\subset
B_{e^tR_0+(e^t-1)R}.
\]

For the continuous-time finite-particle dynamics, the same argument applies with \(q_t\) replaced by
the empirical measure. For the Euler scheme, if \(\widehat q_k^N\) is supported in \(B_{\rho_k}\),
then
\[
\left\|x_i^{(k+1)}\right\|
\le
\left\|x_i^{(k)}\right\|
+
\eta \left\|V_{\widehat q_k^N,\widehat p^M}^{\varepsilon}(x_i^{(k)})\right\|
\le
\rho_k+\eta(R+\rho_k).
\]
Hence
\[
\rho_{k+1}\le (1+\eta)\rho_k+\eta R.
\]
Iterating this recursion gives
\[
\rho_k
\le
(1+\eta)^kR_0+\bigl((1+\eta)^k-1\bigr)R
\le
e^{t_k}R_0+(e^{t_k}-1)R.
\]
The same estimate controls the piecewise-linear interpolation between \(t_k\) and \(t_{k+1}\).
\end{proof}

\begin{Lemma}[Compact-domain stability of Sinkhorn barycentric projections]
\label{lem:sinkhorn-barycentric-compact}
Let \(K\subset\mathbb R^d\) be compact, let \(D_K:=\operatorname{diam}(K)\), and fix
\(\varepsilon>0\). There exists \(C_{K,\varepsilon}<\infty\) such that, for all
\(q,\widetilde q,p,\widetilde p\in\mathcal P(K)\),
\begin{equation}
\label{eq:compact-T-measure-stability}
\sup_{x\in K}
\bigl\|T_{q,p}^{\varepsilon}(x)-T_{\widetilde q,\widetilde p}^{\varepsilon}(x)\bigr\|
\le
C_{K,\varepsilon}
\bigl(\mathcal W_2(q,\widetilde q)+\mathcal W_2(p,\widetilde p)\bigr),
\end{equation}
and, for all \(x,x'\in K\),
\begin{equation}
\label{eq:compact-T-spatial-stability}
\bigl\|T_{q,p}^{\varepsilon}(x)-T_{q,p}^{\varepsilon}(x')\bigr\|
\le
\frac{D_K^2}{4\varepsilon}\|x-x'\|.
\end{equation}
\end{Lemma}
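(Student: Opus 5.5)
The plan is to prove the spatial estimate \eqref{eq:compact-T-spatial-stability} by a direct computation of $\nabla_x T^\varepsilon_{q,p}$. The measure estimate \eqref{eq:compact-T-measure-stability} will then follow by reducing it to stability of the Sinkhorn dual potentials, which I obtain from the Hilbert-metric contraction of the Sinkhorn map.

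\textbf{Step 1: the conditional kernel on all of $\mathbb R^d$.} By \eqref{eq:sinkhorn_dual} and \eqref{eq:dual_grad_representation},
\[
T^\varepsilon_{q,p}(x)=\frac{\int y\,k_v(x,y)\,dp(y)}{\int k_v(x,y)\,dp(y)},\qquad k_v(x,y):=\exp\Bigl(\tfrac{v(y)-\frac12\|x-y\|^2}{\varepsilon}\Bigr),
\]
with $v=v^\varepsilon_{q,p}$. This formula makes sense for every $x\in\mathbb R^d$, not only on $\operatorname{supp}(q)$. I use it as the canonical smooth extension of $T^\varepsilon_{q,p}$. Because this extension is defined on all of $\mathbb R^d$, it does not matter whether $K$ is convex.

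\textbf{Step 2: spatial Lipschitz bound.} Differentiating under the integral gives $\nabla_x k_v=\frac{y-x}{\varepsilon}k_v$. The $x$-terms cancel between numerator and denominator, so
\[
\nabla_x T^\varepsilon_{q,p}(x)=\tfrac1\varepsilon\operatorname{Cov}_{\pi(\cdot|x)}(y).
\]
Here $\pi(\cdot|x)$ is a probability measure supported in $\operatorname{supp}(p)\subset K$. For any unit vector $u$, the scalar $u\cdot y$ ranges over an interval of length at most $D_K$. Popoviciu's inequality therefore bounds its variance by $D_K^2/4$. Since the covariance is symmetric positive semidefinite, its operator norm is at most $D_K^2/4$. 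Integrating $\nabla T$ along the segment $[x,x']$, which is legitimate because the extension lives on all of $\mathbb R^d$, gives \eqref{eq:compact-T-spatial-stability}.

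\textbf{Step 3: measure stability, split into two contributions.} I write $T^\varepsilon_{q,p}-T^\varepsilon_{\widetilde q,\widetilde p}$ as a change in the potential ($v\to\widetilde v$ with $p$ fixed) plus a change in the integrating measure ($p\to\widetilde p$ with $\widetilde v$ fixed).
\begin{itemize}
\item \emph{Denominator lower bound.} On $K$ the Sinkhorn potentials can be normalized to have oscillation at most $D_K^2/2$, since they are $c$-transforms of each other. Hence the denominator $\int k_v\,dp$ is bounded below by $e^{-c_K/\varepsilon}$, after factoring out a common constant.
\item \emph{Lipschitz integrands.} By the same $c$-transform representation, $v$ is $D_K$-Lipschitz on $K$. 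Hence $y\mapsto k_v(x,y)$ and $y\mapsto y\,k_v(x,y)$ are Lipschitz on $K$, uniformly in $x\in K$, with constants depending on $K$ and $\varepsilon$.
\item \emph{Change of measure.} By Kantorovich duality, the change of measure costs at most $C\,\mathcal W_1(p,\widetilde p)\le C\,\mathcal W_2(p,\widetilde p)$.
\item \emph{Change of potential.} This costs at most $C\,\|v-\widetilde v\|_{\infty,K}$, after fixing the additive constant. Fixing it is harmless because $T$ is invariant under $v\mapsto v+\text{const}$.
\end{itemize}

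\textbf{Step 4: stability of the potentials (main obstacle).} It remains to show
\[
\inf_{c\in\mathbb R}\|v_{q,p}-v_{\widetilde q,\widetilde p}-c\|_{\infty,K}\le C_{K,\varepsilon}\bigl(\mathcal W_2(q,\widetilde q)+\mathcal W_2(p,\widetilde p)\bigr).
\]
My plan is to view the potentials as the fixed point of the composed soft $c$-transform map $v\mapsto(v^{c,p})^{c,q}$, where each soft $c$-transform is computed against one marginal. In the log-scale Hilbert projective metric (equivalently, oscillation seminorm), this map is a strict contraction. By Birkhoff's theorem the contraction factor is $\lambda=\tanh^2\!\bigl(D_K^2/(4\varepsilon)\bigr)<1$, because the kernel $e^{-\|x-y\|^2/(2\varepsilon)}$ has bounded log-oscillation on $K\times K$.

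A fixed-point perturbation argument then gives
\[
\operatorname{osc}(v-\widetilde v)\le(1-\lambda)^{-1}\times(\text{one-step discrepancy}).
\]
The one-step discrepancy is the difference between soft $c$-transforms of the same function taken with respect to $(q,p)$ versus $(\widetilde q,\widetilde p)$. By the Lipschitz integrand argument of Step 3 it is bounded by $C(\mathcal W_1(q,\widetilde q)+\mathcal W_1(p,\widetilde p))$.

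I expect the delicate part to be the bookkeeping here: tracking the normalization constant and converting oscillation bounds into sup-norm bounds. If this step fails, the fallback is to cite the known Lipschitz stability of entropic potentials on compact sets, e.g.\ Carlier--Chizat--Laborde or Eckstein--Nutz.
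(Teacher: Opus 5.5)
Your proof is correct. For the spatial estimate you follow the paper exactly: the $x$-derivative of the Gibbs-weighted conditional mean is $\varepsilon^{-1}$ times a covariance whose operator norm is at most $D_K^2/4$. Your remark that the formula defines $T^\varepsilon_{q,p}$ on all of $\mathbb R^d$, so the gradient bound can be integrated along segments even when $K$ is not convex, fills in a step the paper leaves implicit.

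For the measure-stability estimate you take a genuinely different route. The paper invokes the Lipschitz dependence of Schr\"odinger potentials on the marginals, from $\mathcal W_2$ into $C^1$, due to Carlier--Chizat--Laborde. It then transfers this to $T$ through the identity $\nabla u^\varepsilon_{q,p}(x)=x-T^\varepsilon_{q,p}(x)$. You need only $C^0$ stability of $v$ modulo constants. You prove it from scratch via the Birkhoff/Franklin--Lorenz contraction of the Sinkhorn map in the oscillation seminorm. Your factor $\lambda=\tanh^2\bigl(D_K^2/(4\varepsilon)\bigr)$ is right, since the Gibbs kernel's cross-ratio is $\exp\bigl(\langle x-x',y-y'\rangle/\varepsilon\bigr)$. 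You then push this through the explicit ratio formula for $T$, using $\operatorname{osc}_K v\le D_K^2/2$, $\operatorname{Lip}_K v\le D_K$, and Kantorovich--Rubinstein duality.

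The paper's route buys brevity but rests on a comparatively deep external theorem. Yours is essentially self-contained and gives the slightly stronger bound with $\mathcal W_1$ in place of $\mathcal W_2$. It also yields an explicit constant, exponential in $D_K^2/\varepsilon$, through $(1-\lambda)^{-1}=\cosh^2\bigl(D_K^2/(4\varepsilon)\bigr)$. The price is the bookkeeping you anticipate. All potentials must be extended to the whole of $K$ via their soft $c$-transforms, so that $\widetilde v$ can be integrated against $p$ and $\widetilde u$ against $q$. The contraction and the one-step discrepancy must also be measured in the same seminorm on $C(K)$. These details go through as you describe.
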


\begin{proof}
We first prove \eqref{eq:compact-T-measure-stability}. By the compact-domain stability result for
entropic OT potentials \cite[Corollary~2.4]{carlier2024displacement}, applied to the rescaled
quadratic cost \((x,y)\mapsto \|x-y\|^2/(2\varepsilon)\) on \(K\times K\), there exists
\(C_{K,\varepsilon}<\infty\) such that, for all
\(q,\widetilde q,p,\widetilde p\in\mathcal P(K)\),
\[
\sup_{x\in K}
\bigl\|
\nabla u_{q,p}^{\varepsilon}(x)
-
\nabla u_{\widetilde q,\widetilde p}^{\varepsilon}(x)
\bigr\|
\le
C_{K,\varepsilon}
\bigl(
\mathcal W_2(q,\widetilde q)
+
\mathcal W_2(p,\widetilde p)
\bigr),
\]
where \(u^\varepsilon_{q,p}\) denotes the Schr\"odinger potential introduced in
\eqref{eq:sinkhorn_dual}. Since \(K\) is compact and the quadratic cost is smooth on \(K\times K\),
the assumptions required for \cite[Corollary~2.4]{carlier2024displacement} to hold are satisfied. Using the identity in
\eqref{eq:dual_grad_representation}, we immediately obtain \eqref{eq:compact-T-measure-stability}.

It remains to relate the Schr\"odinger potential to the barycentric projection. Recall from \eqref{eq:dual_grad_representation} that $\nabla u_{q,p}^{\varepsilon}(x)=x-T_{q,p}^{\varepsilon}(x)$.
Therefore,
\[
\begin{aligned}
\sup_{x\in K}
\bigl\|
T_{q,p}^{\varepsilon}(x)
-
T_{\widetilde q,\widetilde p}^{\varepsilon}(x)
\bigr\|
&=
\sup_{x\in K}
\bigl\|
\nabla u_{q,p}^{\varepsilon}(x)
-
\nabla u_{\widetilde q,\widetilde p}^{\varepsilon}(x)
\bigr\|  \\
&\le
C_{K,\varepsilon}
\bigl(
\mathcal W_2(q,\widetilde q)
+
\mathcal W_2(p,\widetilde p)
\bigr).
\end{aligned}
\]
This proves \eqref{eq:compact-T-measure-stability}.

It remains to prove the spatial estimate. Since
\[
T_{q,p}^{\varepsilon}(x)
=
\int_K y\,\pi_{q,p}^{\varepsilon}(dy\mid x),
\]
differentiating the conditional mean with respect to \(x\) gives
\[
\nabla_xT_{q,p}^{\varepsilon}(x)
=
\frac1{\varepsilon}
\operatorname{Cov}_{\pi_{q,p}^{\varepsilon}(dy\mid x)}(Y).
\]
For any probability measure supported on \(K\), the operator norm of its covariance matrix is at
most \(D_K^2/4\). Therefore
\[
\|\nabla_xT_{q,p}^{\varepsilon}(x)\|_{\mathrm{op}}
\le
\frac{D_K^2}{4\varepsilon},
\qquad x\in K.
\]
This proves \eqref{eq:compact-T-spatial-stability}.
\end{proof}

\begin{Proposition}[Verification of Assumption~\ref{ass} for bounded-support Sinkhorn flow]
\label{prop:sinkhorn-bounded-domain}
Assume
\[
\operatorname{supp}(q_0)\subset B_{R_0},
\qquad
\operatorname{supp}(p)\subset B_R,
\]
and fix \(T>0\). Then all measures appearing in the Sinkhorn flow and in the finite-particle
approximations on \([0,T]\) are supported in $B_{\bar R_T}$ with
\[
\bar R_T:=\max\{R,\ e^TR_0+(e^T-1)R\}.
\]
Moreover, for the Sinkhorn velocity $V_{q,p}^{\varepsilon}(x)$ defined in \eqref{eq:continuous-sinkhorn-drift}, Assumption~\ref{ass} holds on \(\mathcal P(B_{\bar R_T})\), with constants depending only on
\((T,R_0,R,\varepsilon)\). More explicitly, there exist
\(L_0,L_1,L_2,L_3<\infty\), depending only on \((\bar R_T,\varepsilon)\), such that
\eqref{ass:L2growth}--\eqref{ass:L2p} hold for all
\(q,\widetilde q,p,\widetilde p\in\mathcal P(B_{\bar R_T})\), with the spatial estimate required
only for \(x,x'\in B_{\bar R_T}\).
\end{Proposition}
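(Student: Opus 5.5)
The argument has two parts. The first is localization, which follows from Proposition~\ref{prop:sinkhorn-support-propagation}. The second is checking the four inequalities of Assumption~\ref{ass} on $\mathcal P(K)$ with $K:=B_{\bar R_T}$, using Lemma~\ref{lem:sinkhorn-barycentric-compact} with $D_K=2\bar R_T$.

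\textbf{Localization.} Proposition~\ref{prop:sinkhorn-support-propagation} gives $\operatorname{supp}(q_t)\subset B_{e^tR_0+(e^t-1)R}\subset B_{\bar R_T}$ for $t\in[0,T]$. The remark after it gives the same bound for the continuous-time particle system, the Euler iterates and the piecewise-linear interpolation, provided the initial particles lie in $B_{R_0}$ and the target particles in $B_R$. For empirical samples this holds almost surely, since they are drawn from $q_0$ and $p$. The target measures $p$ and $\widehat p^M$ are supported in $B_R\subset B_{\bar R_T}$. Hence every measure in the flow and in its approximations lies in $\mathcal P(K)$.

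\textbf{Growth bound \eqref{ass:L2growth}.} Both $T^\varepsilon_{q,p}(x)$ and $T^\varepsilon_{q,q}(x)$ are barycenters of probability measures supported in $K$, so $\|V^\varepsilon_{q,p}(x)\|\le 2\bar R_T$ on $K$. This gives a uniform bound on $\|V^\varepsilon_{q,p}\|_{L^2(q)}$. The assumption, however, asks for a bound by $L_0(m_2(q)+m_2(p))$, and $m_2$ can be small. A cleaner route is
\[
\|T^\varepsilon_{q,p}\|_{L^2(q)}^2\le\iint\|y\|^2\,d\pi^\varepsilon_{q,p}=m_2(p)^2,
\]
which follows from Jensen's inequality applied to the conditional mean. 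Likewise $\|T^\varepsilon_{q,q}\|_{L^2(q)}\le m_2(q)$. So $L_0=1$ works, globally on $\mathcal P_2$.

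\textbf{Spatial Lipschitz \eqref{ass:spatialLip}.} By \eqref{eq:compact-T-spatial-stability}, each of $T^\varepsilon_{q,p}$ and $T^\varepsilon_{q,q}$ is $D_K^2/(4\varepsilon)$-Lipschitz on $K$. Hence $L_1=2D_K^2/(4\varepsilon)=2\bar R_T^2/\varepsilon$ for $x,x'\in K$.

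\textbf{Measure stability \eqref{ass:L2q} and \eqref{ass:L2p}.} Apply \eqref{eq:compact-T-measure-stability} twice. For a change in $q$:
\[
\sup_{x\in K}\|T^\varepsilon_{q,p}(x)-T^\varepsilon_{\widetilde q,p}(x)\|\le C_{K,\varepsilon}\mathcal W_2(q,\widetilde q)
\quad\text{and}\quad
\sup_{x\in K}\|T^\varepsilon_{q,q}(x)-T^\varepsilon_{\widetilde q,\widetilde q}(x)\|\le 2C_{K,\varepsilon}\mathcal W_2(q,\widetilde q).
\]
Since $\widetilde q$ is supported in $K$, the sup norm bounds the $L^2(\widetilde q)$ norm, giving $L_2=3C_{K,\varepsilon}$. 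A change in $p$ affects only the first term, so $L_3=C_{K,\varepsilon}$. Measurability and $V\in L^2(q)$ follow from continuity and boundedness on $K$. All constants depend only on $(\bar R_T,\varepsilon)$, and therefore only on $(T,R_0,R,\varepsilon)$.

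\textbf{Main obstacle.} The delicate step is the measure-stability estimate, because both arguments of the self-transport $T^\varepsilon_{q,q}$ move together. It also rests entirely on the cited uniform gradient-stability result for Schr\"odinger potentials. One must check that this result covers the symmetric case $(q,q)$, and that the sup over $K$ suffices even though the velocity is evaluated only on $\operatorname{supp}(q)$. Since the supports are contained in $K$, this should be harmless. A secondary point is ensuring that the proof of Theorem~\ref{thm:convergence_detail} only ever invokes the assumption on measures in $\mathcal P(K)$ and at points of $K$. The localization step guarantees exactly this, so the theorem applies to the Sinkhorn flow on $[0,T]$.
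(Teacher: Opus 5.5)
Your proposal is correct and follows the paper's proof step for step. It localizes via Proposition~\ref{prop:sinkhorn-support-propagation}, gets $L_0=1$ from Jensen's inequality on the conditional means, gets $L_1=\operatorname{diam}(B_{\bar R_T})^2/(2\varepsilon)=2\bar R_T^2/\varepsilon$ from the spatial estimate of Lemma~\ref{lem:sinkhorn-barycentric-compact}, and gets $L_2=3C_{K,\varepsilon}$, $L_3=C_{K,\varepsilon}$ from its measure-stability estimate (the self-transport term cancels in the $p$-direction), exactly as the paper does, including the same constants.
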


\begin{proof}
The support statement follows from Proposition~\ref{prop:sinkhorn-support-propagation}. It remains
to verify Assumption~\ref{ass} on \(B_{\bar R_T}\).

The \(L^2\)-growth estimate follows from Jensen's inequality. Specifically, if
\((X,Y)\sim\pi_{q,p}^{\varepsilon}\), then
\[
\int \|T_{q,p}^{\varepsilon}(x)\|^2\,dq(x)
=
\int \|\mathbb E[Y\mid X=x]\|^2\,dq(x)
\le
\int \|y\|^2\,dp(y).
\]
Similarly,
\[
\int \|T_{q,q}^{\varepsilon}(x)\|^2\,dq(x)
\le
\int \|y\|^2\,dq(y).
\]
Hence
\[
\|V_{q,p}^{\varepsilon}\|_{L^2(q)}
\le
\|T_{q,p}^{\varepsilon}\|_{L^2(q)}
+
\|T_{q,q}^{\varepsilon}\|_{L^2(q)}
\le
m_2(p)+m_2(q),
\]
so \eqref{ass:L2growth} holds with \(L_0=1\).

By Lemma~\ref{lem:sinkhorn-barycentric-compact}, for \(x,x'\in B_{\bar R_T}\),
\[
\begin{aligned}
\|V_{q,p}^{\varepsilon}(x)-V_{q,p}^{\varepsilon}(x')\|
&\le
\|T_{q,p}^{\varepsilon}(x)-T_{q,p}^{\varepsilon}(x')\|
+
\|T_{q,q}^{\varepsilon}(x)-T_{q,q}^{\varepsilon}(x')\|  \\
&\le
\frac{\operatorname{diam}(B_{\bar R_T})^2}{2\varepsilon}\|x-x'\|.
\end{aligned}
\]
Thus \eqref{ass:spatialLip} holds on \(B_{\bar R_T}\).

For the \(q\)-stability estimate, Lemma~\ref{lem:sinkhorn-barycentric-compact} gives
\[
\begin{aligned}
\sup_{x\in B_{\bar R_T}}
\|V_{q,p}^{\varepsilon}(x)-V_{\widetilde q,p}^{\varepsilon}(x)\|
&\le
\sup_{x\in B_{\bar R_T}}
\|T_{q,p}^{\varepsilon}(x)-T_{\widetilde q,p}^{\varepsilon}(x)\|  \\
&\quad+
\sup_{x\in B_{\bar R_T}}
\|T_{q,q}^{\varepsilon}(x)-T_{\widetilde q,\widetilde q}^{\varepsilon}(x)\| \\
&\le
3C_{B_{\bar R_T},\varepsilon}\mathcal W_2(q,\widetilde q).
\end{aligned}
\]
Integrating with respect to \(\widetilde q\) gives \eqref{ass:L2q}.

For the \(p\)-stability estimate, the self-interaction term cancels:
\[
V_{q,p}^{\varepsilon}(x)-V_{q,\widetilde p}^{\varepsilon}(x)
=
T_{q,p}^{\varepsilon}(x)-T_{q,\widetilde p}^{\varepsilon}(x).
\]
Therefore, by Lemma~\ref{lem:sinkhorn-barycentric-compact},
\[
\sup_{x\in B_{\bar R_T}}
\|V_{q,p}^{\varepsilon}(x)-V_{q,\widetilde p}^{\varepsilon}(x)\|
\le
C_{B_{\bar R_T},\varepsilon}\mathcal W_2(p,\widetilde p).
\]
Integrating with respect to \(q\) gives \eqref{ass:L2p}.
\end{proof}


\section{Additional implementation details}\label{app:implementation}

\textbf{ImageNet experiments.} We provide detailed hyperparameters and configurations in Table~\ref{tab:hypers_all}. Note that we largely follow the configurations and setups adopted in~\cite{deng2026generative}, without extensive hyperparameter tuning. We conduct experiments on 8 Nvidia H100 GPU nodes, where each GPU has 80 GB of memory. The training on L/2 takes around 3 days.

\begin{table*}[t]
    \centering
    \caption{Configurations and hyperparameters for ImageNet experiments.
    }
    \label{tab:hypers_all}
      \resizebox{1\textwidth}{!}{
    \tablestyle{1pt}{1.0}
    \scriptsize
    \begin{tabular}{l |x{80}|x{80}x{100}x{100}}
    \toprule
    & \textbf{Ablation setup} (Table~\ref{tab:fid_ablation}) & \textbf{\method B/2} (Table~\ref{tab:in256_latent}) & \textbf{\method L/2} (Table~\ref{tab:in256_latent}) & \textbf{\method XL/2} (Table~\ref{tab:in256_latent}) \\
    \midrule
    \rowcolor[gray]{0.9} \multicolumn{5}{l}
    {\textit{\textbf{Generator Architecture}}} \\
    arch & DiT-B/2 & DiT-B/2 & DiT-L/2 & DiT-XL/2 \\
    input size & 32$\times$32$\times$4 & 32$\times$32$\times$4 & 32$\times$32$\times$4 & 32$\times$32$\times$4 \\
    patch size & 2$\times$2 & 2$\times$2 & 2$\times$2 & 2$\times$2 \\
    hidden dim & 768 & 768 & 1024 & 1152 \\
    depth & 12 & 12 & 24 & 28 \\
    register tokens & 16 & 16 & 16 & 16 \\
    style embedding tokens & 32 & 32 & 32 & 32 \\
    \midrule
    \rowcolor[gray]{0.9} \multicolumn{5}{l}{\textit{\textbf{Feature Encoder}}} \\
    architecture & ResNet & ResNet & ResNet & ResNet \\
    SSL pre-train method & latent-MAE & latent-MAE & latent-MAE & latent-MAE \\
    ResNet: input size & 32$\times$32$\times$4 & 32$\times$32$\times$4 & 32$\times$32$\times$4 & 32$\times$32$\times$4 \\
    ResNet: conv$_\text{1}$ stride & 1 & 1 & 1 & 1 \\
    ResNet: base width & 256 & 640 & 640 & 640 \\
    ResNet: block type & \multicolumn{4}{c}{bottleneck} \\
    ResNet: blocks / stage & \multicolumn{4}{c}{[3, 4, 6, 3]} \\
    ResNet: size / stage & \multicolumn{4}{c}{[32$^2$, 16$^2$, 8$^2$, 4$^2$]} \\
    MAE: masking ratio & \multicolumn{4}{c}{50\%} \\
    MAE: pre-train epochs & 192 & 1280 & 1280 & 1280 \\
    classification finetune & No & 3k steps & 3k steps & 3k steps \\
    \midrule
    \rowcolor[gray]{0.9} \multicolumn{5}{l}{\textit{\textbf{Generator Optimizer}}} \\
    optimizer & \multicolumn{4}{c}{AdamW ($\beta_1$ = 0.9, $\beta_2$ = 0.95)} \\
    learning rate & 2e-4 & 4e-4 & 4e-4 & 3e-4 \\
    weight decay & 0.01 & 0.0 & 0.01 & 0.01 \\
    warmup steps & 5k & 10k & 10k & 10k \\
    gradient clip & 2.0 & 2.0 & 2.0 & 2.0 \\
    training steps & 30k & 200k & 200k & 200k \\
    training epochs & 100 & 1280 & 1280 & 1280 \\
    EMA decay & 0.999 & 0.999 & 0.999 & 0.999 \\
    \midrule
    \rowcolor[gray]{0.9} \multicolumn{5}{l}{\textit{\textbf{Training Loss Computation}}} \\
    class labels $N_\text{c}$ & 64 & 128 & 128 & 128 \\
    positive samples $N_\text{pos}$ & 64 & 64 & 64 & 128 \\
    generated samples $N_\text{neg}$ & 64 & 64 & 64 & 64 \\
    effective batch $B$ ($N_\text{c}{\times}N_\text{neg}$) & 4096 & 8192 & 8192 & 8192 \\
    iteration $L$ & 10 & 10 & 1 & 1 \\
    regularization $\varepsilon$ & 0.05 & 0.05 & 0.05 & 0.01 \\
    step size $\eta$  & 1.0 & 1.0 & 1.0 & 1.0 \\
    \midrule
    \rowcolor[gray]{0.9} \multicolumn{5}{l}{\textit{\textbf{CFG Configuration}}} \\
    train: CFG $w$ range & $[0, 3]$ & $[0, 3]$ & $[0, 3]$ & $[0, 3]$ \\
    train: CFG $w$ sampling & $p(w){\propto}(w+1)^{-3}$ & $p(w) \propto (w+1)^{-5}$ & \tiny 50\%: $w{=}0$, 50\%: $p(w){\propto} (w+1)^{-3}$ & \tiny 20\%: $w{=}0$, 80\%: $p(w){\propto} (w+1)^{-4}$ \\
    train: uncond samples $N_\text{uncond}$ & 16 & 32 & 32 & 32 \\
    inference: CFG $w$ search & \multicolumn{4}{c}{[0.0, 2.5]} \\
    \bottomrule
    \end{tabular}
    }
\end{table*}

\textbf{FFHQ experiments.} For the FFHQ experiments, we operate in the 512-dimensional latent space of a pretrained Adversarial Latent Autoencoder (ALAE), utilizing a training split of 60,000 images. The generator is parameterized as a 4-layer MLP with hidden dimensions of 1024, utilizing LayerNorm and SiLU activations in the hidden blocks. Models are trained using the AdamW optimizer with a constant learning rate of $2 \times 10^{-4}$, $\beta_1=0.9$, $\beta_2=0.95$, and zero weight decay. We apply gradient clipping at 5.0 and maintain an exponential moving average (EMA) of the weights with a decay rate of 0.999. During training, we use a batch size of $N=64$ for the generated particles and $M=64$ for the target particles, with a fixed step size of $\eta=1$. For the Sinkhorn divergence, we use the quadratic cost with an entropy regularization parameter $\varepsilon=0.02$, with 10 Sinkhorn iterations. Consistent with Sec.~\ref{subsec:particle}, we explicitly use the two-batch debiased self-transport estimator, avoiding any heuristic diagonal masking.

We evaluate this setup on two distinct tasks: domain transfer and mode coverage, as shown in Sec.~\ref{subsec:other_tasks}. For the domain transfer task, we map a source distribution of senior faces (ages 55--100) to a target distribution of young adult faces (ages 18--30). The initial particles are drawn directly from the source data distribution, and we apply the residual identity initialization by zero-initializing the final layer of the MLP to learn the residual update. For the mode coverage experiment, we perform unconditional generation starting from a standard Gaussian prior. The target batches are constructed to be highly imbalanced, comprising 95\% senior faces and 5\% child faces (ages 0--12). In this setting, the standard network initialization is used without the residual connection.

\textbf{Settings for plotting Fig.~\ref{fig:title-fig} (b).}
We train~\method with the Sinkhorn divergence on toy datasets for 20,000 iterations, saving a checkpoint every 400 iterations. This yields a sequence of parameter vectors \(\{\theta_k\}_{k=0}^{K}\), where \(K=50\).
We visualize the Sinkhorn-divergence energy landscape over a two-dimensional affine slice of the parameter space. Each point \((x,y)\) on the $x$-$y$ plane is mapped to a parameter vector \(\theta_{x,y}\), and the corresponding height is given by
$S_\varepsilon(q_{\theta_{x,y}}, p),$
where \(q_{\theta_{x,y}}\) denotes the model distribution induced by \(\theta_{x,y}\).

We construct the $x$-$y$ grid as follows. First, we place the training trajectory along the \(x\)-axis by assigning checkpoint \(\theta_k\) to coordinate \((k,0)\) for \(k=0,\ldots,K\). Let
\[
    d_k = \theta_{k+1} - \theta_k, \qquad k=0,\ldots,K-1,
\]
denote the displacement directions along the checkpoint trajectory. We then sample a random direction \(v\) in parameter space and project it onto the orthogonal complement of \(\mathrm{span}\{d_0,\ldots,d_{K-1}\}\), followed by normalization. This direction \(v\) is used as a common transverse direction for the \(y\)-axis.

For each interval \(x\in[k,k+1]\), we interpolate along the local trajectory direction \(d_k\) and extend transversely along \(v\):
\[
    \theta_{x,y}
    =
    \theta_k
    +
    (x-k)d_k
    +
    yv,
    \qquad
    k \le x \le k+1,\quad -Y \le y \le Y .
\]
This defines the landscape grid over \(0\le x\le K\) and \(-Y\le y\le Y\), while ensuring that the saved checkpoints lie exactly on the \(x\)-axis.

To visualize the landscape beyond the training trajectory, we further extend the \(x\)-axis to \(-X\le x\le K+X\). For the left extension \(-X\le x\le 0\), we use the initial trajectory direction \(d_0=\theta_1-\theta_0\):
\[
    \theta_{x,y}
    =
    \theta_0 + x d_0 + yv .
\]
For the right extension \(K\le x\le K+X\), we use the final trajectory direction \(d_{K-1}=\theta_K-\theta_{K-1}\):
\[
    \theta_{x,y}
    =
    \theta_K + (x-K)d_{K-1} + yv .
\]
Together, this construction yields a continuous two-dimensional slice of the parameter space on which we evaluate and plot the Sinkhorn-divergence energy.

\section{Sampling throughput comparison}
\label{app:throughput}
We compare our one-step~\method against multi-step diffusion model SiT~\cite{ma2024sit} and LightningDiT~\cite{yao2025reconstruction} in terms of actual sampling throughput measured by wall-clock time. We benchmark on a single Nvidia H100 GPU with a batch size of 4 for all models. For SiT-XL/2 and LightningDiT-XL/2, we adopt 250 sampling steps with CFG enabled and strictly follow the configuration in the original papers~\cite{ma2024sit,yao2025reconstruction} that leads to a reported FID of 2.06 and 1.35, respectively. Note that we also include the latent preparation and VAE decoding time when recording the actual runtime and calculating the throughput. The results are depicted in Table~\ref{tab:throughput_comparison}, where~\method clearly yields significant speedup while achieving competitive generation quality measured by FID on ImageNet 256$\times$256.

\begin{table}[htbp]
    \centering
    \small
    \caption{\textbf{Generation quality and throughput comparison.}
    Throughput is measured in \emph{images per second}. The time is recorded as the \emph{actual wallclock time}. The speedup is computed \emph{w.r.t.} SiT-XL/2.}
    \label{tab:throughput_comparison}
    \begin{tabular}{lccc}
        \toprule
         & FID $\downarrow$ & Throughput (images/sec) $\uparrow$ & Speedup $\uparrow$ \\
        \midrule
        \method, B/2 & 1.52 & 107.76 & 115.87$\times$ \\
        \method, L/2 & 1.35 & 84.88 & 91.27$\times$ \\
        \method, XL/2 & 1.29 & 77.88 & 83.74$\times$ \\
        LightningDiT-XL/2~\cite{yao2025reconstruction} & 1.35 & 1.14 & 1.22$\times$ \\
        SiT-XL/2~\cite{ma2024sit} & 2.06 & 0.93 & -- \\
        \bottomrule
    \end{tabular}
\end{table}

\section{Additional experimental results}
\label{app:additional-ablation}
\textbf{Sinkhorn iterations.} We provide more ablation results on number of Sinkhorn iterations $L$ used when computing the Sinkhorn barycentric projection (Alg.~\ref{alg:sinkhorn_projection}). We follow the same ablation setup in Sec.~\ref{exp:abl} with distribution guidance. The results in Table~\ref{tab:ablation_sinkhorn_iter} show that performance improves as the number of Sinkhorn iterations increases, plateauing at around 10 iterations.

\begin{table}[htbp]
    \centering
    \begin{minipage}[t]{0.3\textwidth}
    \small
        \centering
        \caption{Ablation on Sinkhorn iterations $L$.}
        \label{tab:ablation_sinkhorn_iter}
        \begin{tabular}{c c}
            \toprule
            $L$ & FID, 1-NFE $\downarrow$ \\
            \midrule
            1  & 7.92 \\
            5 & 7.45 \\
            10 & 7.29 \\
            20 & 7.33 \\
            \bottomrule
        \end{tabular}
    \end{minipage}\hfill
    \begin{minipage}[t]{0.3\textwidth}
    \small
        \centering
        \caption{Velocity guidance vs. distribution guidance applied to KL divergence.}
        \label{tab:ablation_kl_velo_guidance}
        \begin{tabular}{c c}
            \toprule
            CFG & FID, 1-NFE $\downarrow$ \\
            \midrule
            \texttt{dist.}  & 10.17 \\
            \texttt{velo.} & 7.68 \\
            \bottomrule
        \end{tabular}
    \end{minipage}\hfill
    \begin{minipage}[t]{0.3\textwidth}
         \small
        \centering
        \caption{Ablation on $\varepsilon$ with $\ell_2$ OT cost.}
        \label{tab:more-epsilon-distance}
        \begin{tabular}{c c}
            \toprule
            $\varepsilon$ & FID, 1-NFE $\downarrow$ \\
            \midrule
            0.025 & 7.19 \\
            0.05 & 7.16 \\
            0.075 & 7.40 \\
            \bottomrule
        \end{tabular}
    \end{minipage}
    
\end{table}

\textbf{Velocity guidance.} We further show in Table~\ref{tab:ablation_kl_velo_guidance} that the proposed velocity guidance not only enhances the performance of Sinkhorn divergence-driven WGF, but also that of KL divergence. This observation aligns with our theoretical analysis in Appendix~\ref{app:kl_velocity_guidance}, where we draw a connection to the exponentially tilted target distribution when using velocity guidance with KL divergence. Note that KL divergence with velocity guidance also outperforms the design in~\cite{deng2026generative} with 8.46 FID, further highlighting the significance of our principled WGF framework.

\textbf{Entropic regularization parameter $\varepsilon$.} We provide more results under the ablation study setting for the entropic regularization parameter $\varepsilon$ with the $\ell_2$ distance cost, under velocity guidance. As shown in Table~\ref{tab:more-epsilon-distance}, the $\ell_2$ distance OT cost yields consistently higher FIDs than the quadratic OT cost, while both achieve the best performance under $\varepsilon=0.05$.

\section{Additional generated samples}\label{app:generated_samples}
In this section, we provide additional generated image samples. We provide uncurated samples on ImageNet 256$\times$256 in Fig.~\ref{fig:imagenet-more-samples-L} and Fig.~\ref{fig:imagenet-more-samples-XL-and-B}, and additional samples in Fig.~\ref{fig:imagenet-more-samples-XL-largecfg} with a larger guidance scale. We provide uncurated samples on FFHQ for the mode coverage experiment in Fig.~\ref{fig:ffhq-more-drifting} and Fig.~\ref{fig:ffhq-more-ours} for Drifting Model and our~\method, respectively.

\begin{figure}[t!]
    \centering
    \includegraphics[width=0.99\linewidth]{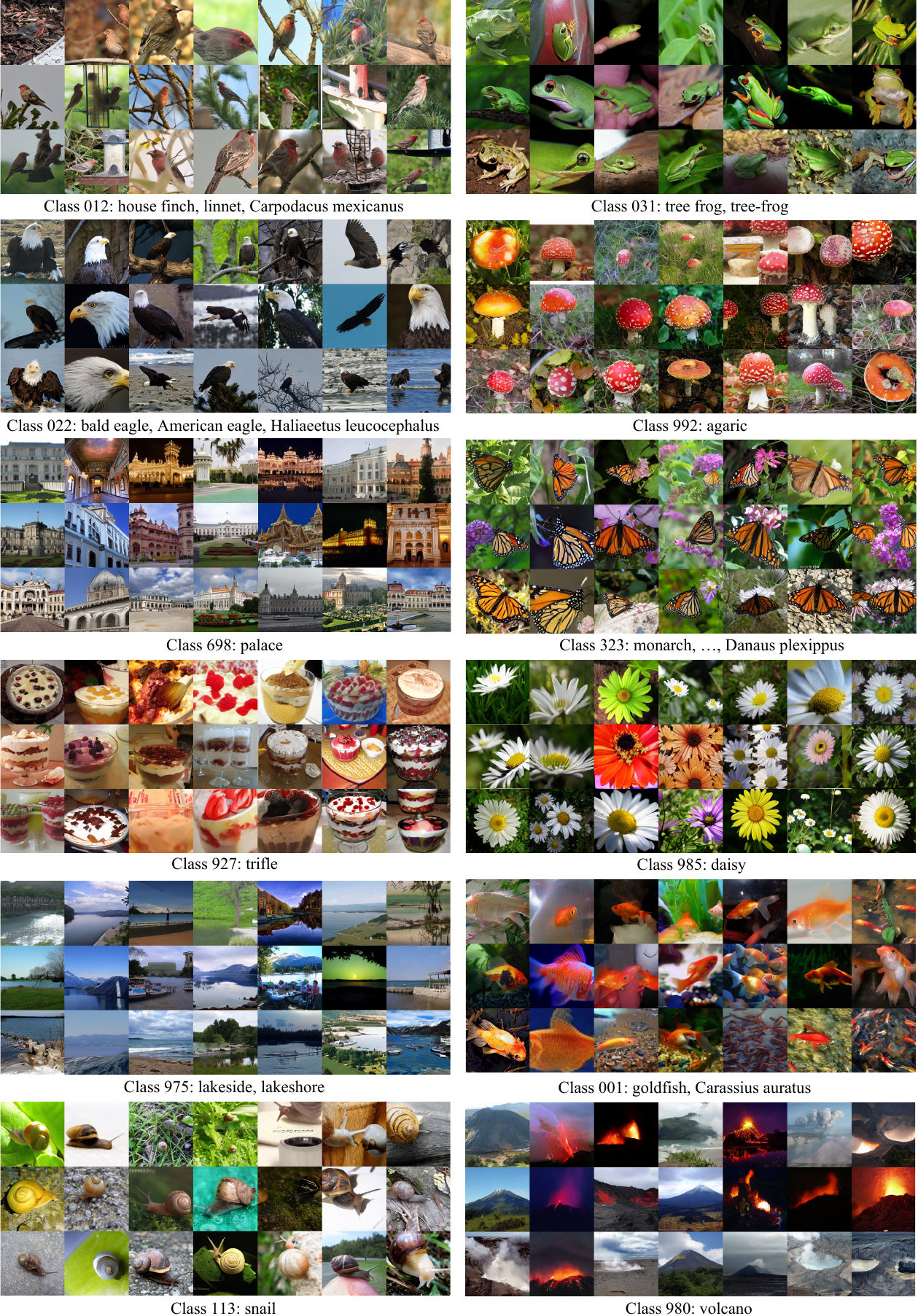}
    \caption{Uncurated samples generated by~\method, L/2 with CFG $w=0.15$.}
    \label{fig:imagenet-more-samples-L}
\end{figure}

\begin{figure}[t!]
    \centering
    \includegraphics[width=0.99\linewidth]{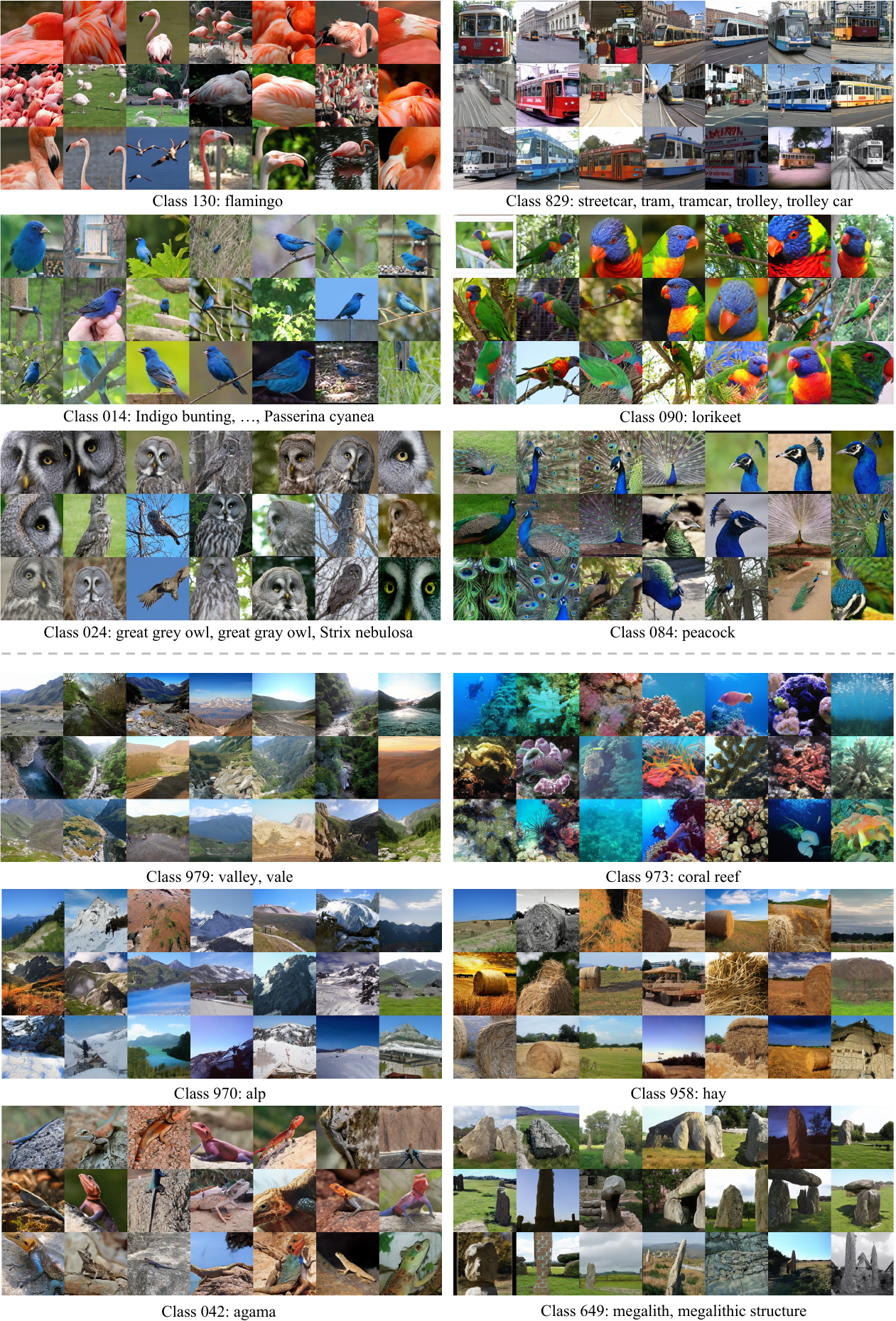}
    \caption{Uncurated samples generated by~\method, XL/2 \emph{(Top)} and B/2 \emph{(Bottom)} with CFG $w=0.15$.}
    \label{fig:imagenet-more-samples-XL-and-B}
\end{figure}

\begin{figure}[t!]
    \centering
    \includegraphics[width=0.94\linewidth]{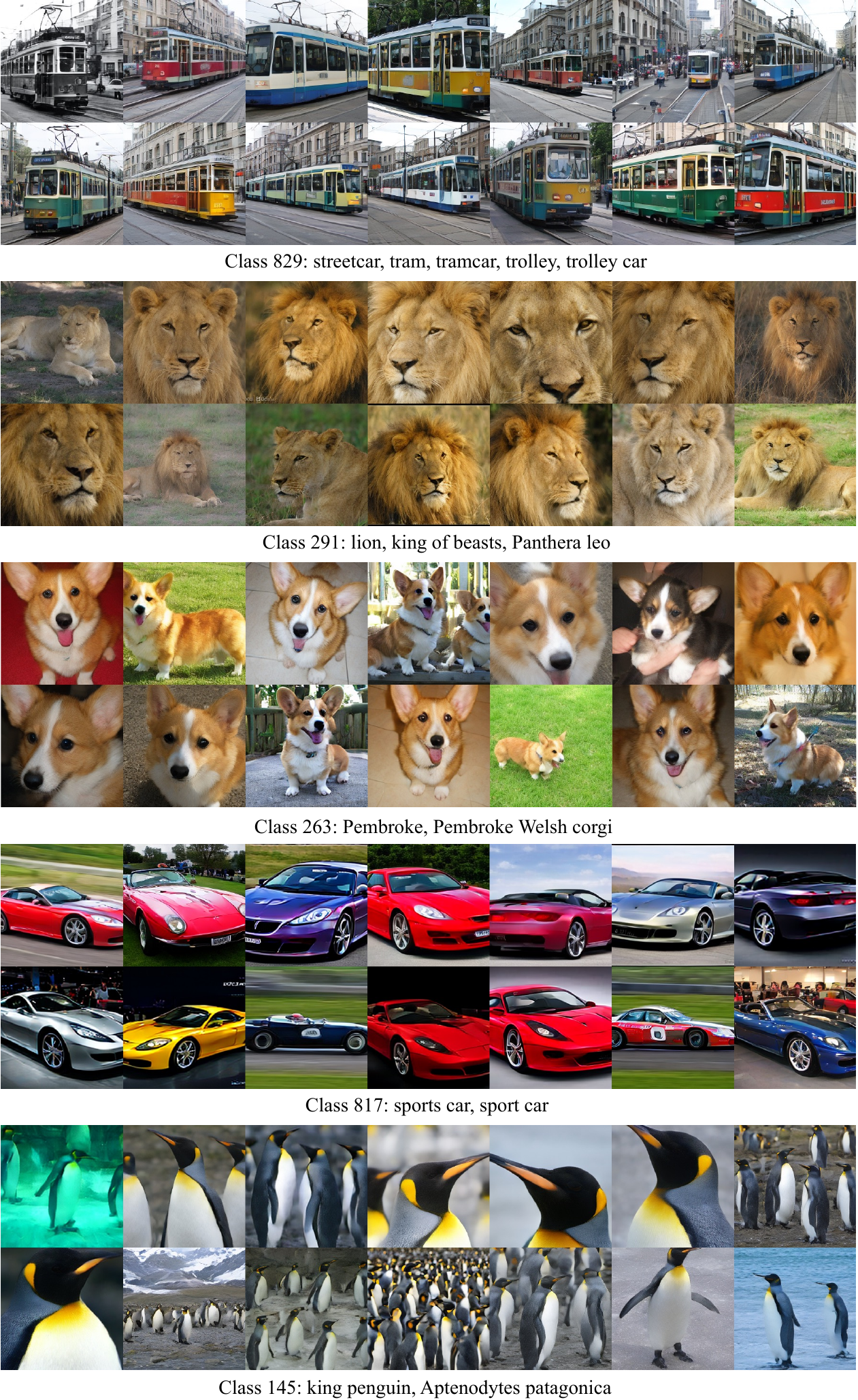}
    \caption{Uncurated samples generated by~\method, XL/2 with CFG $w=2.0$.}
    \label{fig:imagenet-more-samples-XL-largecfg}
\end{figure}

\begin{figure}[t!]
    \centering
    \includegraphics[width=1.0\linewidth]{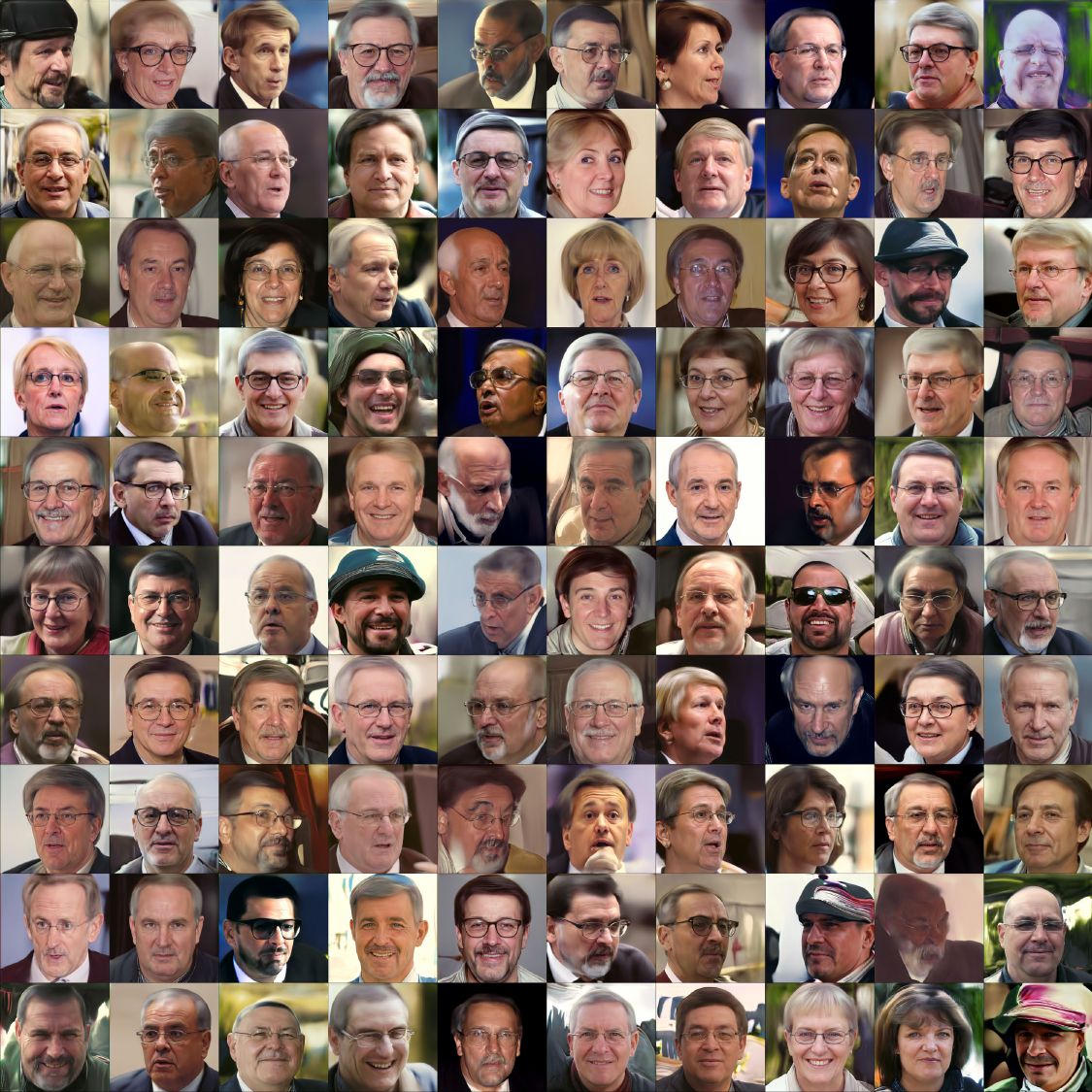}
    \caption{Uncurated samples generated by Drifting Model in the mode coverage experiment (Sec.~\ref{subsec:other_tasks}). None of the 100 randomly generated images is a child's face, showing failure in covering the minority mode.}
    \label{fig:ffhq-more-drifting}
\end{figure}

\begin{figure}[t!]
    \centering
    \includegraphics[width=1.0\linewidth]{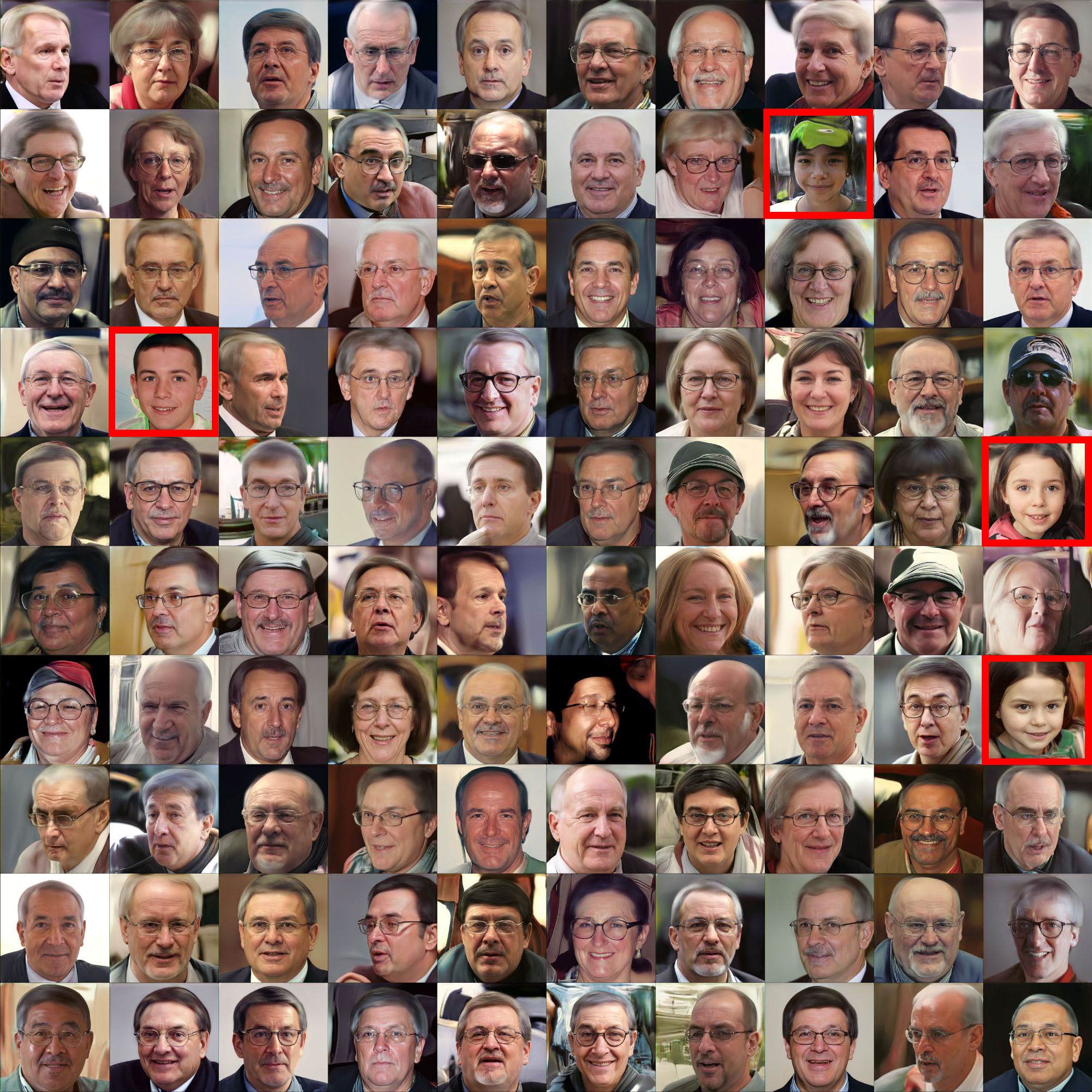}
    \caption{Uncurated samples generated by~\method in the mode coverage experiment (Sec.~\ref{subsec:other_tasks}). 4 of the 100 randomly generated images are child faces, which is close to the 5\% child face ratio in the down-sampled population.}
    \label{fig:ffhq-more-ours}
\end{figure}

\section{Limitations \& Broader impacts}\label{app:limitation}
\paragraph{Limitations.}
While~\method achieves strong one-step generation performance, several limitations remain. First, our empirical evaluation is focused on ImageNet 256$\times$256 and FFHQ; extending the framework to higher-resolution, text-conditioned, video, or other multimodal generation settings remains future work. Second, training still requires pretrained feature encoders or autoencoders in our large-scale experiments, and the choice of the feature encoders remains heuristic. Third, our theoretical convergence result analyzes the particle dynamics under regularity and asymptotic assumptions, and does not fully characterize finite-network optimization or finite-compute training in high-dimensional neural generators. 

\paragraph{Broader impacts.}
This work advances fast, high-fidelity generative modeling by reducing sampling from many iterative steps to a single generator evaluation, which can lower latency and inference cost and make generative models more accessible for creative tools, simulation, data augmentation, and domain-transfer applications. At the same time, improved image generation can also amplify risks associated with synthetic media, including impersonation, disinformation, biased representations, and privacy concerns, especially in face-related applications such as age translation. Responsible deployment should therefore include safeguards such as provenance tracking or watermarking, clear usage restrictions, dataset and model documentation, consent-aware use of face data, and monitoring or filtering mechanisms when models are released beyond research settings.

\end{document}

%% file: math_commands.tex
\usepackage{amsmath,amsfonts,bm}

\def\eqref#1{equation~\ref{#1}}

\def\1{\bm{1}}

\DeclareMathAlphabet{\mathsfit}{\encodingdefault}{\sfdefault}{m}{sl}
\SetMathAlphabet{\mathsfit}{bold}{\encodingdefault}{\sfdefault}{bx}{n}

\def\gD{{\mathcal{D}}}

\def\gF{{\mathcal{F}}}

\def\gL{{\mathcal{L}}}

\def\gP{{\mathcal{P}}}

\def\sN{{\mathbb{N}}}

\def\sR{{\mathbb{R}}}

\newcommand{\E}{\mathbb{E}}

\newcommand{\KL}{D_{\mathrm{KL}}}

\renewcommand{\eqref}[1]{\textup{(\ref{#1})}}

%% file: sec/intro.tex
\begin{figure}[htbp]
    \centering
    \includegraphics[width=0.96\linewidth]{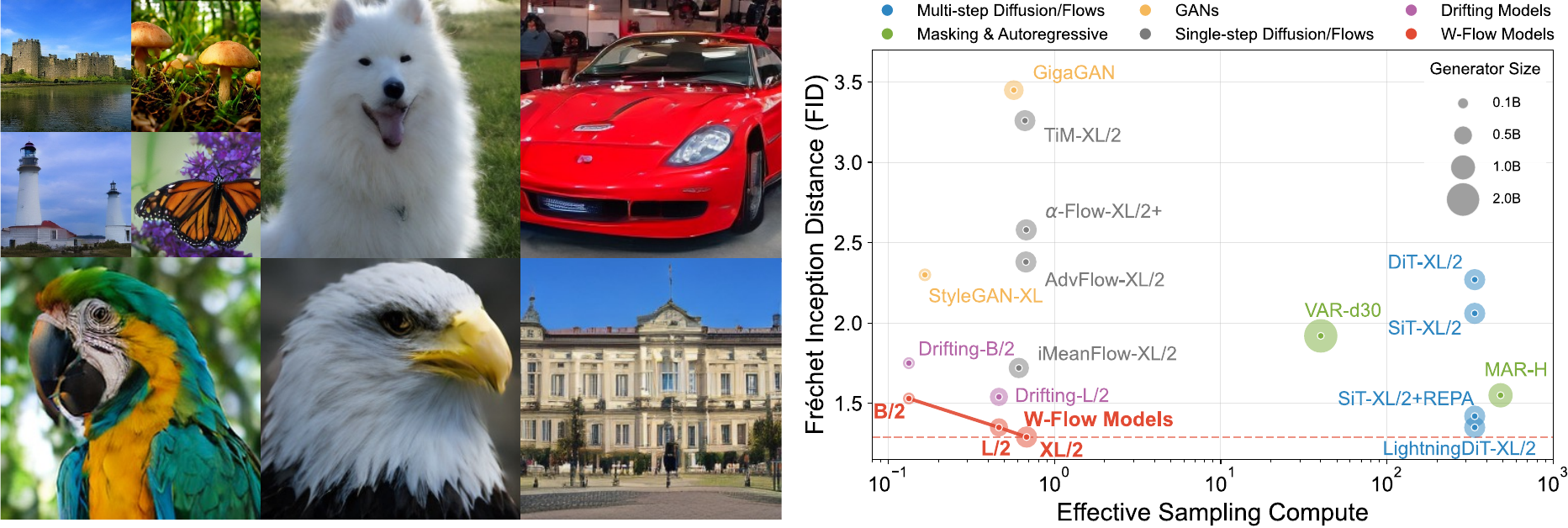}
    \caption{\emph{(Left)} \textbf{1-NFE samples} from~\method-L/2 trained from scratch on ImageNet-256$\times$256. \emph{(Right)} Sample quality (measured by FID) vs. effective sampling compute~\cite{lu2024simplifying} (billion parameters $\times$ number of function evaluations during sampling) evaluated on ImageNet 256$\times$256.}
    \label{fig:main-samples}
\end{figure}

\section{Introduction}

Generative modeling can be viewed as transporting a simple reference distribution (often referred to as a prior) to a complex target distribution (e.g., data distribution). In modern high-dimensional settings, this transport is rarely learned \emph{from scratch} as a single global map. Instead, the dominant paradigm constructs it through a sequence of simpler updates. \black{Diffusion models~\cite{sohl2015deep,song2020score,song2019generative,ho2020denoising} learn a sequence of `infinitesimal' denoising steps and then unroll them over {\it many} inference steps. Flow models~\cite{lipman2022flow,liu2022flow} learn a sequence of velocity fields and likewise unroll the dynamics over many steps.}   
Despite high sample quality, \black{multi-step inference} leads to significant latency and computational cost.

\looseness=-1
\black{It would be convenient to use an iterative procedure \emph{during training} to specify the evolution of the generated data distribution, and then compress this evolution into a static map. 
In this way, the resulting generator would directly transform an input from the reference distribution into an output from the target distribution in one step.
This would combine the efficiency of one-step generation with the flexibility of a distributional evolution during training.
The fundamental question would thus be:}
\begin{center}
\emph{What principles should govern the training dynamics of one-step generators?}    
\end{center}
Existing approaches fall short of providing a satisfactory answer. Generative Adversarial Networks (GANs)~\cite{goodfellow2014generative,radford2015unsupervised,arjovsky2017wasserstein} evolve the model via discriminator-induced gradients, yet inherit the well-known instability of minimax optimization. More recently, Drifting Model~\cite{deng2026generative} achieves strong empirical performance by iteratively displacing particles\footnote{\black{A particle is one candidate sample/image as it evolves through the algorithm.}} via handcrafted attractive and repulsive fields. Yet, the underlying dynamics remain heuristic and lack a clear and principled interpretation, leaving the model vulnerable to unpredictable convergence and mode collapse (see Fig.~\ref{fig:mode_collapse}).

{\color{black}In this paper, we propose~\method, a new framework for one-step generative modeling.~\method prescribes how the \emph{entire model distribution} should evolve toward the target distribution. This evolution is given by a Wasserstein gradient flow (WGF): at each training step, the current generated distribution moves along the steepest descent direction of a
chosen energy functional to the target distribution. We instantiate this energy functional with the Sinkhorn divergence, an optimal-transport-based distance that can be estimated efficiently from mini-batches.} The resulting Sinkhorn update combines two batch-level transport plans: a generated-to-real transport plan and a generated-to-generated self-transport plan. This makes each update depend on the batch-level transport structure, rather than on a single matched sample or nearest neighbor. We then train a generator to imitate these updates, thereby compressing the multi-step training evolution into a single one-step map at inference time. This fundamentally differs from few/one-step diffusion methods~\cite{song2023consistency,zhou2025inductive}, which
typically distill or follow a path originally designed for iterative reverse
diffusion.

Notably,~\method achieves a new state of the art for one-step generation on ImageNet 256$\times$256, obtaining 1.29 FID at XL scale and 1.35 at L scale, significantly outperforming~\cite{deng2026generative} while also surpassing many multi-step diffusion and flow models (see Fig.~\ref{fig:main-samples}). Beyond class-conditional generation, experiments on FFHQ further demonstrate improved mode coverage and strong domain transfer capability, enabled by the globally coordinated OT dynamics. Overall,~\method establishes a new framework for designing the training dynamics of generators, showing that WGFs can serve as a powerful foundation for fast and high-fidelity generative modeling.

%% file: sec/related_work.tex
\section{Related work}

\looseness=-1
\textbf{(Few/one-step) diffusion and flow-based models.} Diffusion~\cite{sohl2015deep,song2020score,ho2020denoising} and flow models~\cite{lipman2022flow,liu2022flow} learn to transport a {\color{black}simple reference distribution} 
to a {\color{black}complex target distribution} through a forward noising process and its time-reverse.
However, they require many sequential steps for sampling, leading to substantial computational cost and latency.
Many works distill multi-step teachers into few/one-step models via variational score distillation~\cite{yin2024one,yin2024improved,wang2023prolificdreamer,salimans2024multistep} or learning shortcuts along the diffusion ODE specified by the teacher~\cite{song2023consistency,song2023improved,lu2024simplifying,salimans2022progressive}. Efforts have also been made to directly learn a few/one-step generator from scratch, typically by enforcing certain self-consistency conditions on the trajectory~\cite{geng2025mean,geng2025improved,boffi2025build,song2023consistency} or the intermediate marginals~\cite{zhou2025inductive}. These methods largely inherit their training signal from a
predefined diffusion transport path, which is originally designed for iterative
generation. We instead directly define the training-time evolution of a one-step generator via WGFs.

{\color{black}\textbf{One-step generator mappings.} An alternative is to directly learn a static map that takes an input from the reference distribution and outputs a sample from the target distribution.} The most studied is the family of GANs~\cite{goodfellow2014generative,arjovsky2017wasserstein,radford2015unsupervised,genevay2018learning}, which train the generator using discriminator-driven gradients in a minimax game.
However, adversarial training is often unstable and requires considerable effort to stabilize. Recently, Drifting Model~\cite{deng2026generative} guides the generator via an empirically constructed drifting field using kernel interactions and achieves strong performance~\citep{li2026generative, gao2026drift}. Yet, the design of this technique remains largely heuristic, lacking a principled understanding of the induced dynamics. Subsequent works~\citep{lai2026unified,turan2026generative} connect Drifting Model to score-based formulations under Gaussian kernels, while~\citep{he2026sinkhorn, cao2026gradient} explore links to Sinkhorn divergence and WGFs. {\color{black}In particular, the concurrent work~\cite{he2026sinkhorn} studies a closely related Sinkhorn-drifting field and interprets Drifting Model as a Sinkhorn-divergence WGF approximation. Our focus is different: we prove convergence of the induced particle dynamics, propose large-scale training framework, and scale the method to ImageNet 256$\times$256.}

\looseness=-1
\textbf{Optimal transport in generative models.}
Existing applications of OT in generative models broadly fall into two categories. The first uses OT-inspired discrepancies as the training objectives. Wasserstein GANs~\cite{arjovsky2017wasserstein,gulrajani2017improved,tolstikhin2017wasserstein} optimize a dual form of the Wasserstein distance via an adversarial critic, while Sinkhorn GANs~\cite{genevay2018learning} leverage OT-based divergence computed by Sinkhorn iterations. The second employs OT to guide dynamic transport paths, including Schrödinger bridges~\cite{alouadi2026lightsbb,de2021diffusion,ma2025schr} and OT-inspired flow models~\citep{albergo2022building,lipman2022flow,liu2022flow,lin2025adversarial}. Yet, static OT losses require adversarial approximation
or differentiation through iterative solvers, while dynamic transport methods still rely on multi-step sampling.

\looseness=-1
\textbf{Wasserstein gradient flows in generative models.} WGFs provide a principled description of the evolution of probability measures. A standard discretization is the Jordan–Kinderlehrer–Otto (JKO) scheme~\cite{jordan1998variational}, which has inspired several recent approaches and theoretical frameworks~\citep{mokrov2021large,choi2024scalable,fan2021variational,xie2025flow}. However, JKO-based methods require solving a complex inner optimization problem at each step, often involving adversarial training over dual potentials, which makes them difficult to scale in practice.

%% file: sec/method.tex
\section{Method}\label{sec:method}

\begin{figure}[t!]
    \centering
    \includegraphics[width=0.98\linewidth]{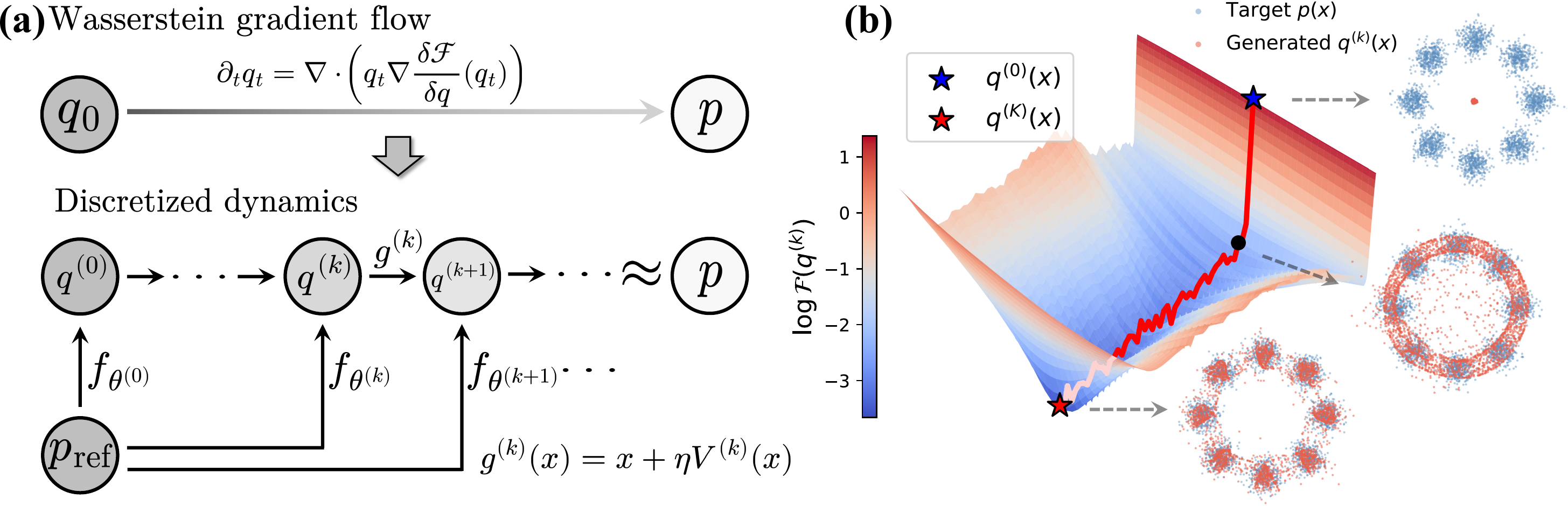}
    \caption{\textbf{(a)} The conceptual diagram of~\method. \textbf{(b)} Visualization of the training dynamics projected onto the Sinkhorn divergence landscape on 8 Gaussian mixtures, shown on a logarithmic scale.
    } 
    \label{fig:title-fig} 
\end{figure}

We introduce~\method, a {\color{black}deterministic {\it one-step} generator} with its training dynamics guided by a WGF. The main idea is to separate the design of training dynamics from the parameterization of the generator: we first prescribe how the model distribution should evolve toward the data distribution during training, and then learn a neural network to realize this evolution through particle updates.

\subsection{\method: Learning {\color{black}one-step transport map} under Wasserstein gradient flow}


{\color{black}\textbf{One-step transport maps.}}
{\color{black}Given a reference distribution $p_{\mathrm{ref}}\in\gP(\sR^m)$ and a target distribution $p\in\gP(\sR^n)$, the goal is to learn a mapping $f:\sR^m\to\sR^n$ such that if an input $z$ is sampled from the reference distribution, the output $f(z)$ follows the target distribution $p$. In distribution space, this is denoted by the pushforward operator $f_{\#}$, meaning we aim to learn a map $f$ such that $f_{\#}p_{\mathrm{ref}}\approx p$. In practice, $p$ is only observed through a finite dataset of samples.}

We adopt an incremental construction of $f$, which breaks this complex transport into a sequence of simpler steps. Specifically, let $f^{(0)}:\mathbb{R}^m\rightarrow\mathbb{R}^n$ be an initial map and set $q^{(0)}=f^{(0)}_{\#}p_{\mathrm{ref}}$. We then build the generator through a sequence of local transport maps $\{g^{(k)}\}_{k=0}^{K-1}:\mathbb{R}^n\rightarrow\mathbb{R}^n$ satisfying
\begin{align}
\label{eq:discrete-pushforward}
q^{(k+1)} = g^{(k)}_{\#} q^{(k)}, \qquad
f^{(k+1)} = g^{(k)} \circ f^{(k)}=g^{(k)}\circ \cdots\circ g^{(0)}\circ f^{(0)},
\end{align}
so that the final map $f^{(K)}$ pushes $p_{\mathrm{ref}}$ to $q^{(K)} \approx p$ (see Fig.~\ref{fig:title-fig}(a)).
Explicitly, $f^{(K)}$ is a \emph{static} map obtained by composing stepwise transport maps $\{g^{(k)}\}_{k=0}^{K-1}$ and the initial $f^{(0)}$. The role of this sequence is \emph{not} to define a standalone multi-step generative model, but rather to provide an incremental procedure for constructing the target one-step map $f^{(K)}$ used at inference time.

\textbf{Velocity field.} We parameterize the maps $g^{(k)}$ to reflect incremental \emph{displacements} between iterations:
\begin{align}
\label{eq:gt-euler}
    g^{(k)}(x)\coloneqq x+\eta V^{(k)}(x),
\end{align}
where $V^{(k)}:\sR^n\to\sR^n$ is a continuous velocity field, also referred to as a drifting field~\cite{deng2026generative}, and $\eta>0$ is the step size. Under this parameterization, each iteration moves {\color{black}a data sample $x \sim q^{(k)}$} 
only locally, while the composition of these maps yields a flexible global transport, and the field $V^{(k)}$ determines how the intermediate distributions evolve.

In principle, the recursion in \eqref{eq:discrete-pushforward} would make the generator an increasingly long composition, so exact sampling from $f^{(k)}_\# p_{\mathrm{ref}}$ becomes prohibitively costly when $k$ is large. To keep the model one-step throughout training, we represent the generator by a neural network and learn $f_{\theta^{(k+1)}}$ to match the updated map $f^{(k+1)}$. Using $f_{\theta^{(k)}}$ as a surrogate for $f^{(k)}$, we optimize 
\begin{align}
\label{eq:stepwise_loss}
    \gL^{(k)}(\theta)\coloneqq \E_{z\sim p_{\mathrm{ref}}} \left\| f_\theta(z) - g^{(k)}\circ \tilde{f}^{(k)}(z) \right\|^2,
    \qquad 
    \tilde{f}^{(k)}(z)\coloneqq \mathrm{sg}\!\left(f_{\theta^{(k)}}(z)\right),
\end{align}
where $\mathrm{sg}(\cdot)$ denotes stop-gradient. The parameter $\theta^{(k+1)}$ is then obtained by taking a gradient step in \eqref{eq:stepwise_loss}. At $\theta = \theta^{(k)}$, the loss scale is $\E_{z\sim p_{\mathrm{ref}}} \left\|\eta V^{(k)}(f_{\theta^{(k)}}(z)) \right\|^2$, hence $\gL^{(k)}=0$ implies $V^{(k)}=0$.

A central challenge, however, is to design the vector field $V^{(k)}$ to guide the iteration in \eqref{eq:discrete-pushforward} towards the target distribution. 
Existing approaches, such as~\cite{deng2026generative}, are largely heuristic and guarantee only that the target distribution is a stationary point of the dynamics: $V^{(k)} = 0$ if $q^{(k)} = p$. However, the converse generally fails.
In particular, during training, observing that $V^{(k)}\to 0$ does not imply that $q^{(k)} \to p$.
Moreover, the intermediate distributional dynamics 
$\{q^{(k)}\}$ induced by such velocity fields remain largely uncharacterized, and convergence guarantees are generally unavailable.

\textbf{Continuous-time Wasserstein velocity field.}
To obtain a principled and well-behaved velocity field $V^{(k)}$ in Eq.~\eqref{eq:gt-euler}, we take a step back and first specify the desired evolution of the model distribution in continuous time, and then derive the corresponding discrete updates. Specifically, we model the evolution of $\{q^{(k)}\}$ via a WGF.
Let $\gF:\gP(\sR^n)\to\mathbb R$ be an energy functional defined on probability distributions. The associated WGF is given by
\begin{align}
\label{eq:wgf}
    \partial_t q_t
    =
    \nabla\cdot\!\left(
    q_t \nabla \frac{\delta \gF}{\delta q}(q_t)
    \right),
\end{align}
where $\frac{\delta \mathcal{F}}{\delta q}(q_t)$ denotes the first variation of $\mathcal{F}$ at $q_t$. Eq.~\eqref{eq:wgf} defines a \emph{continuous} evolution of distributions $q_t\in\gP(\sR^n)$ that decreases the energy $\gF$. Equivalently, this evolution can be expressed through the continuity equation with a specific form of the velocity field
\begin{align}
\label{eq:velocity}
    \partial_t q_t + \nabla\cdot(q_t V_t)=0,\quad  V_t(x) = - \nabla \frac{\delta \gF}{\delta q}(q_t)(x).
\end{align}
This characterization provides a clear variational interpretation: the distribution $q_t$ evolves along the \emph{steepest descent} direction of $\gF$ in Wasserstein space (see Fig.~\ref{fig:title-fig}(b) for an illustration).
To drive the dynamics toward the target distribution $p$, we choose the energy functional $\gF(q) = \gD(q \,\|\, p)$
where $\gD$ is a suitable divergence such that $p = \arg\min_q \gF(q)$. Under this choice, the induced velocity field in \eqref{eq:velocity} transports $q_t$ to progressively reduce its discrepancy from $p$.

\textbf{Time discretization.} 
Discretizing the continuous-time evolution provides a concrete prescription for the stepwise transport maps introduced earlier. Applying an explicit Euler scheme to the WGF over time steps $\{t^{(k)}:t^{(k)}=k\eta,\;k\in\sN_{\ge 0}\}$ yields
\begin{align}
\label{eq:wgf-discrete}
    g^{(k)}(x)\coloneqq x + \eta V^{(k)}(x),
    \qquad
    V^{(k)}(x) = - \nabla \frac{\delta \gF}{\delta q}(q^{(k)})(x),
\end{align}
\looseness=-1
where $q^{(k)}=q_{t^{(k)}}$. This recovers exactly the stepwise velocity-based update in Eq.~\eqref{eq:gt-euler}, now with the velocity field $V^{(k)}$ specified by the underlying gradient flow. {\color{black}Each map $g^{(k)}$ operates directly on individual data points in $\mathbb R^n$. By moving these points, the resulting empirical distribution} approximates the continuous transport trajectory in distribution space, while composing these local updates yields a map $f^{(k)}$ that approximates the flow at time $t^{(k)}$. 
In sum, the gradient flow defines the \emph{training-time dynamics} in distribution space, whereas the learned model is ultimately a \emph{static map} obtained by compressing these local transports into a single generator.
The diagram is provided in Fig.~\ref{fig:title-fig} (a).


\subsection{Instantiation via Sinkhorn divergence}\label{subsec:algo_framework}
\textbf{Choices of the energy functional $\mathcal{F}$.} 
The choice of $\mathcal F$ is critical, as it determines both the geometry of the WGF and the tractability of the induced velocity field. Ideally, $\mathcal F$ should satisfy two properties: (i) its minimizer coincides with the target distribution $p$, and (ii) the induced flow is well-behaved and can be implemented efficiently in high dimensions.

\begin{table}[t!]
\centering
\small
\caption{Comparison of energy functionals and their induced continuous velocity fields under Wasserstein gradient flow. Detailed derivations are in Appendix~\ref{app:other}.}
\label{tab:wgf_functionals}
\begin{tabular}{lcc}
\toprule
 \textbf{Divergence $\gD$}& \textbf{Energy Functional $\mathcal{F}(q_t)$} & \textbf{Continuous Velocity Field $V_t(x)$} \\
\midrule
Squared MMD & $\frac{1}{2} \mathrm{MMD}^2(q_t, p)$ & $\int \nabla_x k(x, y) p(dy) - \int \nabla_x k(x, y) q_t(dy)$ \\
\addlinespace
KL divergence & $D_{\mathrm{KL}}(q_t \| p)$ & $\nabla \log p(x) - \nabla \log q_t(x)$ \\
\addlinespace
Sinkhorn divergence & $S_\varepsilon(q_t, p)$ & $T_{q_t,p}^\varepsilon(x) - T_{q_t,q_t}^\varepsilon(x)$~~(Eq.~\eqref{eq:continuous-sinkhorn-drift})\\
\bottomrule
\end{tabular}
\vskip -0.15in
\end{table}

Common choices include Maximum Mean Discrepancy (MMD)~\cite{li2015generative} and KL divergence (see Table~\ref{tab:wgf_functionals} for the corresponding velocity fields). However, both face significant limitations in practice. MMD induces a kernel-based interaction between the {\color{black}sampled data points}.
{\color{black}When $q_t$ is far from $p$, the kernel values can become small, causing the gradients to vanish and the process to stall.}
The KL divergence involves the score function $\nabla \log q_t(x)$, which is generally intractable and must be approximated, introducing additional bias and instability. Our ablations in Table \ref{subtab:ablation-divergence} confirm that both choices lead to inferior performance.

These limitations motivate us to seek an alternative energy functional that is computationally tractable and geometrically well-behaved. To this end, we adopt the Sinkhorn divergence, an OT-based discrepancy that avoids explicit score estimation and admits a natural and direct implementation on data samples.
While the convergence of WGFs to a global minimizer is not automatic, a recent result in \cite{hardion2026wasserstein} shows that, under Gaussian distributional assumptions, the WGF of the Sinkhorn divergence converges to its global minimizer.


\textbf{Entropic optimal transport and Sinkhorn divergence.} 
A natural candidate is the entropically regularized OT cost, which is computationally attractive due to Sinkhorn iterations. However, it is biased: in general,
$\OT_\varepsilon(p,p)\neq 0$, so  $\OT_\varepsilon(\cdot,p)$ does not define an energy whose minimizer is exactly the target distribution. This motivates the use of the \emph{debiased} Sinkhorn divergence. {\color{black}Concurrently, \cite{he2026sinkhorn} also employs Sinkhorn divergence; our implementation incorporates the two-batch self-transport and velocity guidance, which are essential to achieving strong empirical performance in large-scale training (Sec.~\ref{subsec:particle} and Sec.~\ref{subsec:implementation}).}

We start by introducing some general notations. Let \(\mathcal P_2(\mathbb R^n)\) denote the set of probability measures on
\(\mathbb R^n\) with finite second moment. Take \(q,p \in \mathcal{P}_2(\mathbb{R}^n)\). Define the set of couplings
\(
    \Pi(q,p)
:=
\big\{
\pi \in \mathcal{P}(\mathbb{R}^n \times \mathbb{R}^n)
:\ \pi(\cdot,\mathbb{R}^n)=q,\ \pi(\mathbb{R}^n,\cdot)=p
\big\},
\) which is contained in \(\mathcal P_2(\mathbb R^n \times \mathbb R^n)\) as both marginals have finite second moments. 
We then define the entropic optimal transport (EOT) functional under quadratic cost
\begin{equation}
\label{eq:OT_general}
\OT_\varepsilon(q,p)
:=
\inf_{\pi\in\Pi(q,p)}
\left\{
\int \tfrac12\|x-y\|^2\,\pi(dx,dy)
+
\varepsilon\,\KL(\pi \,\|\, q\otimes p)
\right\},
\end{equation}
where \(\varepsilon>0\) is the entropic regularization parameter.
The associated Sinkhorn divergence is given by
\begin{equation}
\label{eq:Sinkhorn_general}
S_\varepsilon(q,p)
=
\OT_\varepsilon(q,p)
-\tfrac12 \OT_\varepsilon(q,q)
-\tfrac12 \OT_\varepsilon(p,p).
\end{equation}

\textbf{Velocity field induced by Sinkhorn divergence.}
Let \(\pi_{q,p}^\varepsilon\in\Pi(q,p)\) and \(\pi_{q,q}^\varepsilon\in\Pi(q,q)\) be the optimal couplings for \(\OT_\varepsilon(q,p)\) and \(\OT_\varepsilon(q,q)\), respectively. Expressing these joint couplings in terms of their conditional distributions as $\pi_{q,p}^\varepsilon(dx,dy)=q(dx)\,\pi_{q,p}^\varepsilon(dy\mid x)$ and $\pi_{q,q}^\varepsilon(dx,dx')=q(dx)\,\pi_{q,q}^\varepsilon(dx'\mid x),$
we define the barycentric projections
\begin{equation}
\label{eq:barycentric_cont}
T_{q,p}^\varepsilon(x):=\int y\,\pi_{q,p}^\varepsilon(dy\mid x),
\qquad
T_{q,q}^\varepsilon(x):=\int x'\,\pi_{q,q}^\varepsilon(dx'\mid x).
\end{equation}
It is well known (see, \emph{e.g.},~\cite{genevay2018learning,peyre2019computational,nutz2022entropic} and the proof in Appendix~\ref{app:other}) that the velocity field coincides with the Wasserstein gradient direction of the Sinkhorn divergence
\begin{equation}
\label{eq:continuous-sinkhorn-drift}
V_{q,p}^\varepsilon(x)
=
-\nabla \frac{\delta S_\varepsilon(\cdot,p)}{\delta q}(q)(x)
=
T_{q,p}^\varepsilon(x)-T_{q,q}^\varepsilon(x).
\end{equation}
 \textbf{Characterization of equilibrium.}
    The velocity field vanishes $V_{q,p}^\varepsilon\equiv0$  if and only if $q=p$ (see \cite{feydy2019interpolating}).
    In particular, the Sinkhorn divergence admits no spurious stationary points: the only equilibrium is the target distribution. This is a notable advantage over local drifting dynamics, where the interaction field can vanish at some $q\neq p$, causing the dynamics to stall before reaching the target.

\looseness=-1
\textbf{Interpretation of the velocity field.}
The velocity field in Eq.~\eqref{eq:continuous-sinkhorn-drift} admits a natural interpretation as a balance between attraction and self-interaction. The first term \(T_{q,p}^\varepsilon(x)\) transports the current distribution toward the target distribution, while the correction term \(T_{q,q}^\varepsilon(x)\) enforces consistency with the current distribution through a global mass-conserving coupling. This structure ensures that the resulting dynamics are globally coordinated, in contrast to purely local interaction rules, and plays a key role in stabilizing the transport and preventing mode collapse (see Sec.~\ref{subsec:other_tasks} for experimental illustrations).

\subsection{Interactive particle dynamics}\label{subsec:particle}

In practice, we operate on finite mini-batches. Given $N$ i.i.d. samples $\{x_i\}_{i=1}^{N} \in \mathbb R^n$ from $q_t$ and $M$ i.i.d. samples $\{y_j\}_{j=1}^{M} \in \mathbb R^n$ from $p$,
let the weighted empirical measures be
$\widehat q_t = \sum_{i=1}^N a_i\,\delta_{x_i}$, $\widehat p = \sum_{j=1}^M b_j\,\delta_{y_j},$ with $a_i,b_j>0,$ and $\sum_i a_i=\sum_j b_j=1$.  
In this discrete setting, the general definition in Eq.~\eqref{eq:OT_general} reduces to the finite-dimensional problem over the coupling matrix $\pi \in \Pi(\widehat q_t,\widehat p)$: 
\begin{equation}
\label{eq:particle-OT}
\OT_\varepsilon \big(\widehat q_t,\widehat p\big)
:= \min_{\pi\in \Pi(\widehat q_t,\widehat p)} \ \sum_{i,j} \frac{1}{2} \|x_i - y_j\|^2 \pi_{ij} + \varepsilon \sum_{i,j}\pi_{ij}(\log \pi_{ij}-1),
\end{equation}
\begin{wrapfigure}[18]{r}{0.35\textwidth}
  \begin{center}
  \vskip -0.28in
    \includegraphics[width=0.98\linewidth]{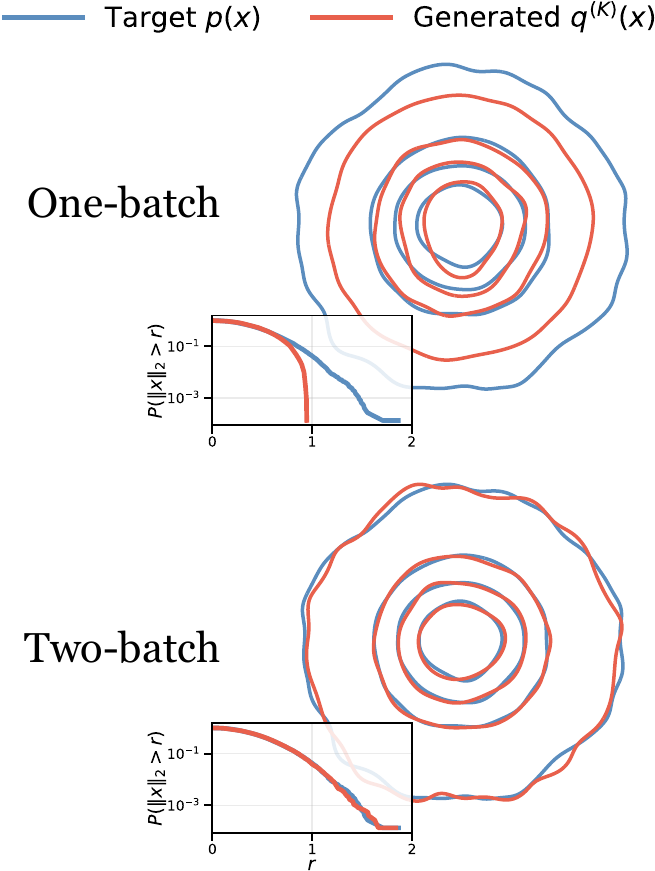}
  \end{center}
  \caption{Comparison between one-batch and two-batch estimators on learning a 2D Gaussian.}
  \label{fig:two_batch}
\end{wrapfigure}
where $\Pi(\widehat q_t,\widehat p)$ is the set of matrices with prescribed marginals. Denote the optimal solution $\pi^{\varepsilon,*}_{\widehat q_t,\widehat p}$\,.

\textbf{Two-batch estimate for self-transport.}
Na\"{\i}vely estimating the self-entropic OT term
${\rm OT}_\varepsilon(\widehat q_t,\widehat q_t)$ from a single empirical batch
introduces a self-matching artifact: since each particle can be
matched to itself at zero transport cost, the resulting coupling tends to
overemphasize the diagonal entries $\pi_{ii}$. To mitigate it, we
adopt a \emph{two-batch} strategy. We draw an independent second batch
$\{x'_l\}_{l=1}^{N}$ from $q_t$ and define the empirical measure
$\widehat q'_t = \sum_{l=1}^N a_l \delta_{x'_l}$. We then compute the coupling
between $\widehat q_t$ and $\widehat q'_t$, yielding the optimal plan
$\pi^{\varepsilon,*}_{\widehat q_t,\widehat q'_t}$. 
This construction removes the need for the diagonal masking
heuristic used in~\cite{deng2026generative} and admits a cleaner theoretical interpretation. Fig.~\ref{fig:two_batch} compares the
two-batch strategy with the na\"{\i}ve one-batch estimate on
learning a 2D Gaussian. The one-batch strategy fails to capture the Gaussian
tails, whereas the two-batch strategy recovers the target  
faithfully. Additional illustrations are in
Appendix~\ref{app:self_transport}.

The discrete counterparts of the barycentric maps in \eqref{eq:barycentric_cont} are, therefore,
\begin{equation}
\label{eq:discrete-baryproj}
    T_{\widehat q_t, \widehat p}^\varepsilon(x_i) = \frac{1}{a_i} \sum_j (\pi_{\widehat q_t, \widehat p}^{\varepsilon, *})_{ij} y_j, \quad T_{\widehat q_t, \widehat q'_t}^\varepsilon(x_i) = \frac{1}{a_i} \sum_l (\pi_{\widehat q_t, \widehat q'_t}^{\varepsilon, *})_{il} x'_l,
\end{equation}
where in practice the optimal couplings $\pi_{\widehat q_t, \widehat p}^{\varepsilon, *}$ and $\pi_{\widehat q_t, \widehat q'_t}^{\varepsilon, *}$ can be efficiently approximated using the Sinkhorn-Knopp algorithm~\cite{sinkhorn1967concerning}. We present the subroutine in Alg.~\ref{alg:sinkhorn_projection}.
With these, the particle-based estimate of the velocity field in Eq.~\eqref{eq:continuous-sinkhorn-drift} is 
\begin{align}
\label{eq:discrete-sinkhorn-drift}
   V^\varepsilon_{\widehat q_t,\widehat p}(x_i) \footnotemark  = T^\varepsilon_{\widehat q_t,\widehat p}(x_i) - T^\varepsilon_{\widehat q_t,\widehat q'_t}(x_i). 
\end{align}\footnotetext{Here we omit the second-batch $\widehat q'$ in the subscript of $V$ for brevity.}

\textbf{Convergence of the particle dynamics.} 
Now that $V^{\varepsilon}_{q,p}$ and $V^{\varepsilon}_{\widehat q, \widehat p}$ are defined, we present a convergence guarantee from the empirical dynamics to the population dynamics. This theorem
shows that the finite-particle, discrete-time transport mechanism used in our method is a consistent approximation of the intended population-level WGF. As a result, the training dynamics induced by our algorithm faithfully track the continuous distributional evolution toward the target, thereby giving a principled justification for learning the generator through these sequential local transports. See Fig.~\ref{fig:title-fig}(b) for an illustration. The convergence result holds for more general velocity fields, but we present below using the notation of Sinkhorn velocity. The full statement, assumptions, and proof are deferred to Appendix~\ref{app:proof}.

\begin{Theorem}[Informal]
\label{thm:convergence}Let \(\widehat p^M\) be the empirical measure based on \(M\) i.i.d. samples \(\{y_j\}_{j=1}^M\) from the target measure \(p\). The particles \(\{x_i^{(k)}\}_{i=1}^N\) evolve as $x_i^{(k+1)}=x_i^{(k)}+\eta\,V_{\widehat q_k^N,\widehat p^M}^\varepsilon(x_i^{(k)})$,
where \(\widehat q_k^N\) denotes the empirical measure of \(\{x_i^{(k)}\}_{i=1}^N\). Define the continuous-time interpolation as $x_i^\eta(t)
    :=
    x_i^{(k)} + (t-t_k)\,V^\varepsilon_{\widehat q_k^N,\widehat p^M}(x_i^{(k)})$,
     $t\in[t_k,t_{k+1})$, with the empirical measure
   $ \widehat q_t^{N,M,\eta}\coloneqq \frac{1}{N}\sum_{i=1}^N \delta_{x_i^\eta(t)}$.
Under suitable regularity conditions, for every $T>0$, there exists a unique
\(
q\in C([0,T];\mathcal P_2(\mathbb R^d))
\)
that solves the continuity equation
\[
 \partial_t q_t+\nabla\cdot\big(q_t\,V^\varepsilon_{q_t,p}\big)=0
\]
in the weak sense. Moreover, as $\eta\to 0$ and $N,M\to\infty$, it holds that
\[
\sup_{t\in[0,T]} \mathcal{W}_2\big(\widehat q_t^{N,M,\eta},q_t\big)\to0.
\]
\end{Theorem}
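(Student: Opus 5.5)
The informal Theorem~\ref{thm:convergence} will follow from its formal counterpart, Theorem~\ref{thm:convergence_detail}, applied to the Sinkhorn velocity $V^\varepsilon_{q,p}$. The only genuinely new ingredient is checking Assumption~\ref{ass}. For that we assume $\operatorname{supp}(q_0)\subset B_{R_0}$ and $\operatorname{supp}(p)\subset B_R$, which are the ``suitable regularity conditions''.

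The first step is to localize. By Proposition~\ref{prop:sinkhorn-support-propagation}, the population flow $q_t$, the continuous-time particle system, and the Euler iterates with their piecewise-linear interpolation all stay supported in $B_{\bar R_T}$ on $[0,T]$. The empirical target $\widehat p^M$ lies in $B_R\subset B_{\bar R_T}$ almost surely. Every measure that enters the argument therefore belongs to $\mathcal P(B_{\bar R_T})$. On that class, Proposition~\ref{prop:sinkhorn-bounded-domain} supplies constants $L_0=1$ and $L_1,L_2,L_3$ depending only on $(\bar R_T,\varepsilon)$, and these satisfy \eqref{ass:L2growth}--\eqref{ass:L2p}. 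The spatial Lipschitz bound \eqref{ass:spatialLip} is only guaranteed for $x,x'\in B_{\bar R_T}$, so before applying the global theory I will extend $V^\varepsilon_{q,p}$ off the ball. One way is to compose with the radial projection onto $B_{\bar R_T}$; alternatively, I will check directly that every use of \eqref{ass:spatialLip} in the proof of Theorem~\ref{thm:convergence_detail} evaluates $V$ only at points in the supports. The extension is harmless because it agrees with the true velocity wherever the measures live.

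The second step is to invoke Theorem~\ref{thm:convergence_detail} and rerun its structure: particle Lipschitz bounds, the Dobrushin-type stability estimate \eqref{eq:dobrushin}, existence through the Cauchy argument together with the completeness of $(\mathcal P_2,\mathcal W_2)$, and the Euler error bound. This gives a unique weak solution $q\in C([0,T];\mathcal P_2)$ and the quantitative estimate
\[
\sup_{t\le T}\mathcal W_2(\widehat q_t^{N,M,\eta},q_t)\lesssim e^{(L_1+L_2)T}\mathcal W_2(\widehat q_0^N,q_0)+\mathcal W_2(\widehat p^M,p)+C_T\eta\,[m_2(\widehat q_0^N)+m_2(\widehat p^M)].
\]
Uniqueness as stated is for the class of solutions supported in $B_{\bar R_T}$. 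That is the natural class here, since the extended field makes the global statement meaningful.

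The third step is to send $N,M\to\infty$ and $\eta\to0$. By the strong law of large numbers in $\mathcal W_2$, $\mathcal W_2(\widehat q_0^N,q_0)\to0$ and $\mathcal W_2(\widehat p^M,p)\to0$ almost surely. With bounded support, the second moments are at most $\bar R_T^2$, so the $\eta$ term vanishes. This gives the claimed convergence.

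I expect the main obstacle to be the localization, and in particular the self-transport term. The particle velocity uses a second batch $\widehat q'$ rather than $\widehat q$ itself, so the empirical field is not literally $V_{\widehat q^N,\widehat p^M}$. The cleanest route is to read the theorem with $T_{\widehat q,\widehat q}$, as written in its statement. If the two-batch version is needed, I would bound the discrepancy using the measure-stability estimate of Lemma~\ref{lem:sinkhorn-barycentric-compact}, which gives $\sup_x\|T_{\widehat q,\widehat q'}-T_{\widehat q,\widehat q}\|\le C\,\mathcal W_2(\widehat q',\widehat q)$. This term would then be absorbed as an additional vanishing perturbation in the Gronwall argument.
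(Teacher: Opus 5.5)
Your proposal is correct and follows the paper's route. The informal statement is obtained from Theorem~\ref{thm:convergence_detail} under Assumption~\ref{ass}, with the Sinkhorn case reduced to bounded supports via Propositions~\ref{prop:sinkhorn-support-propagation} and~\ref{prop:sinkhorn-bounded-domain}. Your handling of the localization is more explicit than the paper's Remark~\ref{rem:bounded-localization}. If you use the radial projection $\mathrm{pr}$ onto $B_{\bar R_T}$, apply it to the measure arguments as well, i.e.\ work with $V^\varepsilon_{\mathrm{pr}_\# q,\mathrm{pr}_\# p}\circ\mathrm{pr}$. Since $\mathrm{pr}$ is $1$-Lipschitz and norm-reducing, all four conditions of Assumption~\ref{ass} then hold on all of $\mathcal P_2(\mathbb R^d)$, and uniqueness for the extended equation holds in the full class $C([0,T];\mathcal P_2(\mathbb R^d))$.
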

The main result is formulated for the \method setting, capturing both the two-measure
velocity field \(V^\varepsilon_{q,p}\) and the empirical target approximation. Its proof uses stability and particle-approximation techniques for nonlocal continuity equations. These techniques are largely inspired by related arguments in the literature, though the assumptions and settings are different; see \cite{piccoli2013transport}. {\color{black}Relatedly, Proposition~3.8 of \cite{he2026sinkhorn} studies the statistical approximation of the Sinkhorn drifting field, whereas our result establishes the convergence of the particle dynamics under a more general setting (see Assumption \ref{ass}).}

\algrenewcommand\algorithmicindent{0.5em}%
\begin{figure}[t]
\begin{minipage}[t]{0.49\textwidth}
\begin{algorithm}[H]
  \caption{\method Training}
  \label{alg:training_sinkhorn_drifting}
  \small
  \textbf{Input:} Generator $f_\theta$, reference distribution $p_{\mathrm{ref}}$, target distribution $p$, step size $\eta$, regularization $\varepsilon$, iterations $L$
  \begin{algorithmic}[1]
    \Repeat
    \vskip 0.03in
      \State Sample input $\{z_i\}_{i=1}^N\sim p_{\mathrm{ref}},\quad\{z'_l\}_{l=1}^N \sim p_{\mathrm{ref}}$ 
      \State Sample target $\{y_j\}_{j=1}^M \sim p$ 
      \State $x_i\leftarrow f_\theta(z_i),\quad x'_l\leftarrow\mathrm{sg}(f_\theta(z'_l))$
      \State $\{T^{\varepsilon}_{\hat q_\theta,\hat p}(x_i)\}
      \leftarrow
      \textcolor{blue}{\textsc{Sinkhorn}}(\{x_i\},\{y_j\},\varepsilon,L)$
      \State $\{T^{\varepsilon}_{\hat q_\theta,\hat q'_\theta}(x_i)\}
      \leftarrow
      \textcolor{blue}{\textsc{Sinkhorn}}(\{x_i\},\{x'_l\},\varepsilon,L)$
      \State $V^\varepsilon(x_i)
      \leftarrow
      T^{\varepsilon}_{\hat q_\theta,\hat p}(x_i)
      -
      T^{\varepsilon}_{\hat q_\theta,\hat q'_\theta}(x_i)$
      \State $\tilde x_i \leftarrow \mathrm{sg}(x_i + \eta V^\varepsilon(x_i))$
      \State Take a gradient descent step on
      \Statex  $\qquad \nabla_\theta \frac{1}{N}\sum_{i=1}^N \|f_\theta(z_i)-\tilde x_i\|^2$
      \vskip 0.04in
    \Until{converged}
  \end{algorithmic}
\end{algorithm}
\end{minipage}
\hfill
\begin{minipage}[t]{0.49\textwidth}
\begin{algorithm}[H]
  \caption{Sinkhorn Projection (\textcolor{blue}{\textsc{Sinkhorn}})}
  \label{alg:sinkhorn_projection}
  \small
  \textbf{Input:} Source particles $\{x_i\}_{i=1}^N$, target particles $\{y_j\}_{j=1}^M$, regularization $\varepsilon$, Sinkhorn iterations $L$
  \begin{algorithmic}[1]
    \State $a\leftarrow\frac{1}{N}\mathbf{1}_N,\quad b\leftarrow\frac{1}{M}\mathbf{1}_M$ \Comment{\textcolor{gray}{Uniform weights}}
    \State $C_{ij} \leftarrow \frac{1}{2}\|x_i-y_j\|^2$ \Comment{\textcolor{gray}{Cost matrix}}
    \State $K_{ij} = \exp(-C_{ij}/\varepsilon)$\Comment{\textcolor{gray}{Gibbs kernel}}
    \State $v^{(0)} \leftarrow \mathbf{1}_M$ \Comment{\textcolor{gray}{Initialization}}
    \For{$\ell=0,\dotsc,L-1$} \Comment{\textcolor{gray}{Sinkhorn-Knopp}}
      \State $u^{(\ell+1)} \leftarrow a \oslash (K v^{(\ell)})$
      \State $v^{(\ell+1)} \leftarrow b \oslash (K^\top u^{(\ell+1)})$
    \EndFor
    \State $\Pi^\varepsilon
    \leftarrow
    \operatorname{diag}(u^{(L)})\,K\,\operatorname{diag}(v^{(L)})$ \Comment{\textcolor{gray}{Coupling}}
    \State $T^\varepsilon(x_i)
    =
    \frac{1}{a_i}\sum_{j=1}^M \Pi^\varepsilon_{ij} y_j$ \Comment{\textcolor{gray}{Projection}}
    \State \textbf{return} $\{T^\varepsilon(x_i)\}_{i=1}^N$
  \end{algorithmic}
\end{algorithm}
\end{minipage}
\vspace{-1.5em}
\end{figure}


\subsection{Implementation}\label{subsec:implementation}

\textbf{Training.} Empirically, we construct the training loss by learning the generator $f_\theta$ to simulate the discrete particle dynamics driven by the velocity in Eq.~\eqref{eq:discrete-sinkhorn-drift}. This can be viewed as a finite-sample counterpart of the population-level objective in \eqref{eq:stepwise_loss}, where expectations with respect to \(q^{(k)}\) and \(p\) are replaced by empirical measures. Specifically, at each training step, we optimize
\begin{equation}\label{eq:ot-loss}
\mathcal{L}_{\text{\method}}(\theta)
= \frac{1}{N}\sum_{i=1}^N \left\|x_i - \mathrm{sg}\bigl(x_i + \eta\,V^\varepsilon_{\widehat q_\theta,\widehat p}(x_i)\bigr)\right\|^2,
\quad x_i=f_\theta(z_i),
\end{equation}
where $\{z_i\}_{i=1}^N$ are i.i.d. samples from $p_{\mathrm{ref}}$.
Crucially, the stop-gradient operator $\mathrm{sg}(\cdot)$ precisely aligns with the discrete WGF dynamics by treating the regression target as fixed, which also avoids differentiation through the OT plan and stabilizes the optimization.
We summarize the overall training procedure in Alg.~\ref{alg:training_sinkhorn_drifting}.

\textbf{Classifier-free guidance.} Our framework naturally supports baking classifier-free guidance (CFG)~\cite{ho2022classifier} into the pushforward map $f_\theta$ at \emph{training time}. The idea is to use a small batch of unconditional samples as additional reference particles, thereby introducing an extra repulsive effect on the generated samples. 
A natural implementation in~\cite{deng2026generative} is to modify the repulsive distribution as
\begin{align}
\label{eq:dist-cfg}
    \tilde{q}^w_\theta(x|c) \coloneqq \frac{1}{w+1}q_\theta(x|c)+\frac{w}{w+1}p(x|\varnothing),
\end{align}
where $w\geq 0$ is the guidance scale and $p(x|\varnothing)$ is the unconditional (marginal) distribution. Operationally, this amounts to sampling additional particles from $p(x|\varnothing)$ with importance weight $w$ \emph{w.r.t.} $q_\theta$. 
However, this approach implicitly {\it alters} the target being learned. One can show that it corresponds to learning a \emph{linearly} extrapolated target distribution $\tilde{p}^w(x|c)= p(x|c) + w(p(x|c) - p(x|\varnothing))$, which deviates from the standard CFG formulation based on exponential tilting~\cite{ho2022classifier,dhariwal2021diffusion}.

\looseness=-1
In light of this, we instead introduce guidance directly at the \emph{velocity field level}. Specifically, we define
\begin{align}
\label{eq:velocity-cfg}
    \tilde{V}^{\varepsilon,w}_{q_\theta,p}(x_i)\!\coloneqq\! \left(T^\varepsilon_{q_\theta(\cdot|c),p(\cdot|c)}(x_i) \!-\! T^\varepsilon_{q_\theta(\cdot|c),q_\theta(\cdot|c)}(x_i)\right) \!+\! w \underline{\left(T^\varepsilon_{q_\theta(\cdot|c),p(\cdot|c)}(x_i) \!-\! T^\varepsilon_{q_\theta(\cdot|c),p(\cdot|\varnothing)}(x_i)\right)},
\end{align}
where $T^\varepsilon_{q_\theta(\cdot|c),p(\cdot|\varnothing)}$ denotes the Sinkhorn barycentric projection (Alg.~\ref{alg:sinkhorn_projection}) between $q_\theta(\cdot|c)$ and the unconditional data distribution $p(\cdot|\varnothing)$. This construction injects guidance via the underlined term that computes the difference between the conditional and unconditional velocity fields (see Appendix~\ref{app:kl_velocity_guidance}).
Due to these conceptual differences, we refer to Eq.~\eqref{eq:dist-cfg} as \emph{distribution guidance} and to Eq.~\eqref{eq:velocity-cfg} as \emph{velocity guidance}. Although the effective target distribution induced by Eq.~\eqref{eq:velocity-cfg} is generally not available in closed form, it admits a natural geometric interpretation through a barycentric construction. Moreover, we show in Appendix~\ref{app:kl_velocity_guidance} that, when the KL divergence is used as $\gF$, this formulation exactly recovers the standard exponentially tilted target. This provides a conceptual justification for adopting the same design under the Sinkhorn divergence, whose empirical benefits are demonstrated in Sec.~\ref{exp:abl}.

\looseness=-1
\textbf{Sinkhorn divergence in feature space.}
Rather than evaluating the Sinkhorn divergence directly in the target space $\sR^n$, we may first lift the particles $\{x_i\},\{x'_l\},\{y_j\}$ into a feature space through a feature map
$\phi:\mathbb{R}^n \rightarrow \mathbb{R}^d$. The velocity field and training loss are then computed using the embedded particles $\{\phi(x_i)\}, \{\phi(x'_l)\}, \{\phi(y_j)\}$, while gradients are backpropagated through $\phi$ to update the generator. In practice, following~\cite{deng2026generative}, we use multiple feature maps and sum the corresponding losses.

%% file: sec/experiments.tex
\section{Experiments}\label{sec:experiments}

\textbf{Experiment setup.} We conduct our main experiments on ImageNet~\cite{deng2009imagenet} generation with 256$\times$256 resolution. We directly adopt the DiT-style~\cite{peebles2023scalable} generator architecture in~\cite{deng2026generative}. We also use the officially released pretrained MAE feature extractors from~\cite{deng2026generative}. More details are deferred to Appendix~\ref{app:implementation}.


\subsection{Ablation study}\label{exp:abl}

\textbf{Ablation setting.}
To enable an informative comparison in a controlled setting, we follow the default ablation setup of~\cite{deng2026generative}. Specifically, all ablation experiments are conducted in the SD-VAE~\cite{rombach2022high} latent space, using a B/2 DiT generator together with the pretrained latent-MAE feature encoder from~\cite{deng2026generative}. The generator is trained for 100 epochs, and FID~\cite{fid} is evaluated on 50K generated images. We set $\varepsilon=0.05$ by default with additional detailed hyperparameters provided in Appendix~\ref{app:implementation}.

\textbf{Energy functionals in gradient flow.} We implement and compare three different energy functionals: squared MMD, KL divergence, and Sinkhorn divergence (Eq.~\eqref{eq:Sinkhorn_general}). Results in Table~\ref{subtab:ablation-divergence} confirm that Sinkhorn divergence performs the best, whereas both MMD and KL divergence yield noticeably weaker results, highlighting the advantage of the OT-driven dynamics. The Sinkhorn-based gradient flow also significantly outperforms the heuristic design of Drifting Model when both use distribution guidance.

\textbf{Classifier-free guidance.} We further equip the Sinkhorn~\method with our proposed velocity-guidance CFG (denoted as \texttt{velo.}). As reported in Table~\ref{subtab:ablation-cfg}, this guidance mechanism is fully compatible with Sinkhorn~\method and further improves the FID to 7.08, outperforming distribution guidance (\texttt{dist.}). We adopt it as the default configuration in the subsequent studies.

\textbf{Entropy regularization parameter $\varepsilon$.} We next study the effect of the parameter  $\varepsilon$ in the EOT problems; the results are reported in Table~\ref{subtab:ablation-cfg}. Overall,~\method is fairly robust to the choice of $\varepsilon$. Among the values we consider, $\varepsilon=0.05$ yields the best performance under the ablation setting. See Table~\ref{tab:more-epsilon-distance} for more details. For the main experiments, we set $\varepsilon=0.05$ by default, with detailed choices specified in Table~\ref{tab:hypers_all}.

\renewcommand{\thesubfigure}{{\alph{subfigure}}}
\begin{table*}[t!]
\centering
\begin{minipage}{0.99\linewidth}
\subfloat[
\textbf{Choice of divergence}. Sinkhorn divergence leads to the lowest FID, outperforming heuristic designs in~\cite{deng2026generative}.
]{
\begin{minipage}{0.30\linewidth}
\tablestyle{3pt}{1.02}
\begin{tabular}{x{40}x{44}}
Divergence & FID, 1-NFE \\
\hline
Drifting~\cite{deng2026generative} & ~~8.46 \\
\hline
MMD & 10.40 \\
KL & 10.17 \\
Sinkhorn  & ~~\textbf{7.29} \\
\end{tabular}
\label{subtab:ablation-divergence}
\end{minipage}
}
\hfill
\subfloat[\textbf{Ablation on CFG and $\varepsilon$}. Our proposed \texttt{velo} CFG with $\varepsilon=0.05$ for Sinkhorn divergence performs the best.
]{
\begin{minipage}{0.30\linewidth}
\tablestyle{3pt}{1.02}
\begin{tabular}{x{24}x{24}x{44}}
 $\varepsilon$ & CFG & FID, 1-NFE \\
\hline
0.05 & \texttt{dist.} & 7.29 \\
 \cellcolor{gray!20}0.05 &  \cellcolor{gray!20}\texttt{velo.} &  \cellcolor{gray!20}\textbf{7.08} \\
0.025 & \texttt{velo.} & 7.11 \\
0.075 & \texttt{velo.} & 7.26 \\
\end{tabular}
\label{subtab:ablation-cfg}
\end{minipage}
}
\hfill
\subfloat[
\textbf{OT cost and $V_t$ estimation.} ``Mask''  and ``2-batch'' refer to the \emph{diagonal mask} and \emph{two-batch estimator} in $T^\varepsilon_{q,q}$~(\textsection~\ref{subsec:particle}).
]{
\begin{minipage}{0.32\linewidth}
\tablestyle{2.5pt}{1.02}
\begin{tabular}{x{23}x{18}x{28}x{44}}
Cost & Mask & 2-batch & FID, 1-NFE \\
\hline
$\|\cdot\|$ & - & \checkmark & ~~7.16 \\
\cellcolor{gray!20}$\frac{1}{2}\|\cdot\|^2$ & \cellcolor{gray!20}- & \cellcolor{gray!20}\checkmark & \cellcolor{gray!20} ~~\textbf{7.08} \\
$\frac{1}{2}\|\cdot\|^2$ & - & -  & 17.57 \\
$\frac{1}{2}\|\cdot\|^2$ & \checkmark & - & ~~7.45 \\
\end{tabular}
\label{subtab:ablation-maskdiag}
\end{minipage}
}
\end{minipage}
\caption{\textbf{Ablation study on ImageNet 256$\times$256 generation.} We use a B/2 backbone with 100-epoch training from scratch. The FID on 50K images is reported. The default settings are marked in \colorbox{gray!20}{gray}.
}
\label{tab:fid_ablation}
\end{table*}

\textbf{The cost function in OT.} We compare our default choice of quadratic cost $c(x,y)=\frac{1}{2}\|x-y\|^2$ with the $\ell_2$ distance cost $c(x,y)=\|x-y\|$. The results in Table~\ref{subtab:ablation-maskdiag} demonstrate that the quadratic cost yields superior performance, in line with our theoretical assumptions in Eq.~\eqref{eq:OT_general}. 

\looseness=-1
\textbf{Velocity field estimation.} We validate the two-batch estimator in Sec.~\ref{subsec:particle} for the self-transport map. As shown in Table~\ref{subtab:ablation-maskdiag},  estimating  $T^\varepsilon_{\widehat q_\theta,\widehat q_\theta}$ from a single batch leads to a substantially worse FID of 17.57, due to the intrinsic self-matching bias of this estimator. Our two-batch strategy also outperforms the mask-diagonal heuristic used in~\cite{deng2026generative}, while admitting a principled theoretical interpretation.


\subsection{Main results on ImageNet}

\textbf{Setup.} Putting together the key components validated in the ablation study, we scale up training to achieve state-of-the-art results on ImageNet 256$\times$256 generation. We follow the setup of~\cite{deng2026generative} with the SD-VAE latent space, adopting the same generator architecture and the same pretrained feature encoder weights. We also largely retain their hyperparameter configuration (see Appendix~\ref{app:implementation}).

\input{tables/main_table}

\textbf{Results.}
We compare~\method with state-of-the-art one-step generators and multi-step diffusion and flow models. Results are reported in Table~\ref{tab:in256_latent}. Notably,~\method establishes a new state of the art for one-step class-conditional generation on ImageNet 256$\times$256, achieving an FID of 1.29 at XL scale and 1.35 at L scale, outperforming MeanFlow-based methods and Drifting Models by a clear margin. Remarkably,~\method B/2, with only 133M generator parameters, achieves an FID of 1.52, surpassing Drifting Model L/2 despite its substantially larger 463M-parameter generator.
Furthermore, despite 1-NFE sampling,~\method outperforms most diffusion models requiring up to 250 steps, such as LightningDiT-XL/2. See Appendix~\ref{app:throughput} for a detailed inference throughput comparison. These strong empirical results support our central claim that principled WGF dynamics can translate into exceptional generation performance.~\method also exhibits a favorable scaling behavior with model size. More image samples produced by~\method are provided in Appendix~\ref{app:generated_samples}.

\begin{figure}[t!]
    \centering
    \includegraphics[width=0.98\linewidth]{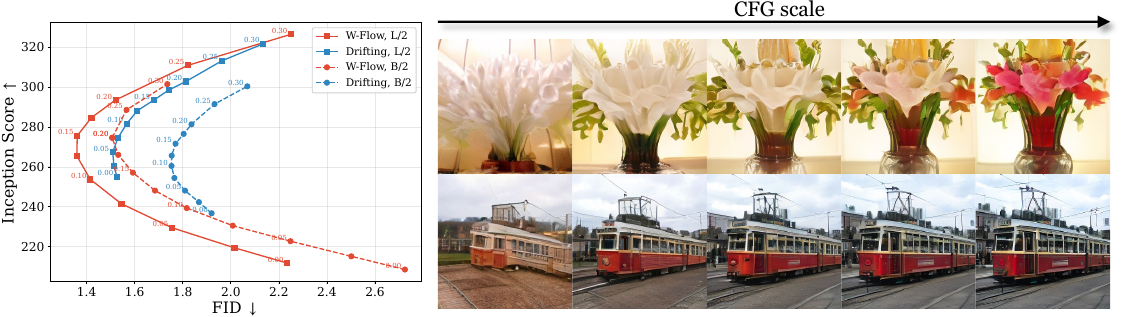}
    \caption{\textbf{Classifier-free guidance.} \emph{Left:} The FID and Inception Score curve when sweeping over CFG scales. 
    \emph{Right:} Image samples by~\method, L/2 with CFG increasing from 0.0 to 2.0.}
    \label{fig:cfg-sweep}
\end{figure}

\textbf{The effect of CFG.} 
Fig.~\ref{fig:cfg-sweep} shows the FID and Inception Score with different CFG scales at inference time;~\method clearly achieves a better tradeoff compared with Drifting~\cite{deng2026generative}. Samples in Fig.~\ref{fig:cfg-sweep} demonstrate that increasing CFG leads to clearer class-specific details and higher perceptual quality, which aligns with the expected CFG behavior.

\begin{wraptable}[6]{r}{0.33\textwidth}
  \centering
  \small
  \vskip -0.2in
 \setlength{\tabcolsep}{3pt}
  \caption{FID vs. training epoch.}
      \resizebox{1.0\linewidth}{!}{
    \begin{tabular}{lc|cc}
          & Epoch & B/2   & L/2 \\
    \hline
    \rowcolor[gray]{0.9}
    Drifting~\cite{deng2026generative} & 1280  & 1.75  & 1.54 \\
    \multirow{3}[1]{*}{\method} & 384   & 1.75  &  1.55\\
          & 640   & 1.63  &  1.46 \\
          & 1280  & 1.52  & 1.35 \\
    \end{tabular}%
    }
  \label{tab:convergence}%
\end{wraptable}%
\textbf{Convergence speed.}~\method induces training dynamics that follow the steepest descent of the energy functional, 
leading to faster convergence at training time. We empirically validate this effect in Table~\ref{tab:convergence},
where \method converges substantially faster than Drifting Model. In particular, both \method B/2 and L/2 trained for only 384 epochs closely match the FID of Drifting Model trained for 1280 epochs.

\subsection{Domain transfer and mode coverage}\label{subsec:other_tasks}

\textbf{Domain transfer.} 
Unlike standard generative models that typically map Gaussian noise to data, ~\method can instead define a WGF between arbitrary distributions. This makes it naturally suited for domain transfer tasks where the source and target share the same semantic space. By setting $p_{\mathrm{ref}}$ to be the source data distribution, it learns a one-step domain-transfer map, \emph{e.g.}, from senior to young-adult faces. To preserve source semantics, we parameterize the map as a residual network~\cite{he2016deep}, $f_\theta(z)=z+\mathcal{N}_\theta(z)$, and zero-initialize the final layer of $\mathcal{N}_\theta$, such that $f_{\theta_0}(z)=z$.

In a 2D oval-to-circle toy example in Fig.~\ref{fig:ffhq}(a), Drifting Model shows erratic intermediate states and often moves particles far from their initial positions. In contrast, the OT-induced velocity in~\method produces coordinated, nearly straight, and much shorter trajectories toward the target.

\looseness=-1
We further evaluate one-step age translation on FFHQ~\cite{karras2019style}, mapping senior faces (ages 55--100) to young-adult faces (ages 18--30) in the 512-dimensional latent space of a pretrained autoencoder~\cite{pidhorskyi2020adversarial}. We train a 4-layer MLP that maps source latent codes to the target domain. Fig.~\ref{fig:ffhq}(b) and~\ref{fig:ffhq}(c) show that~\method achieves shorter latent $\ell_2$ transport distances and better preserves source identity.

\begin{figure}[t!]
    \centering
    \includegraphics[width=1.0\linewidth]{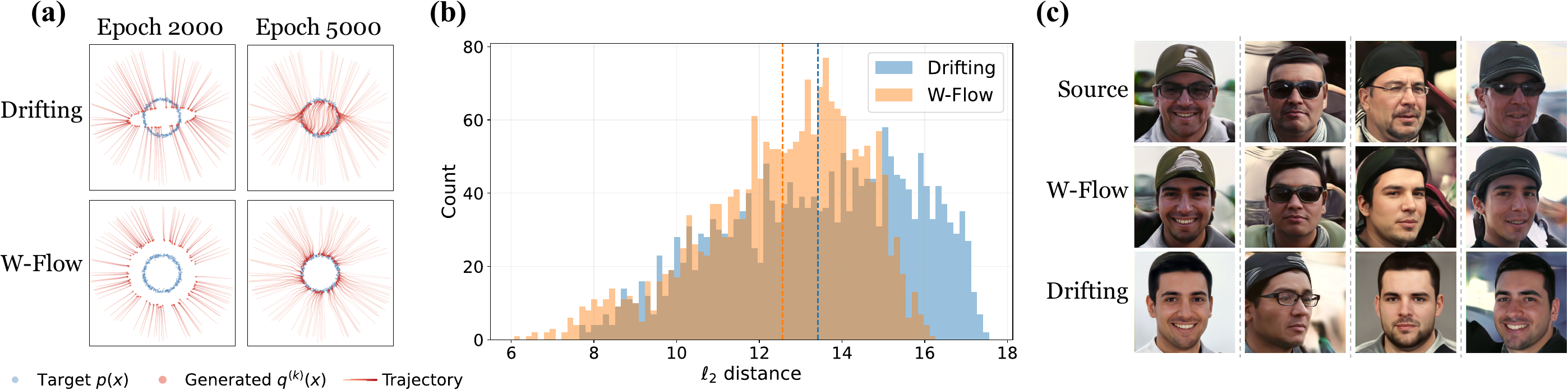}
    \caption{\textbf{(a) Oval-to-circle domain transfer.} Source and target are constructed by sampling angles uniformly from $[0, 2\pi)$ with parametric curves corrupted by Gaussian noise. \textbf{(b) \& (c) One-step facial age translation on FFHQ, mapping older faces to younger ones.} \textbf{(b)} Histogram of the latent $\ell_2$ distance  between 2,000 source images and their generated targets.
    \textbf{(c)} Visual comparison. 
    }
    \label{fig:ffhq}
    \vskip -0.1in
\end{figure}

\begin{figure}[t]
    \centering
    \includegraphics[width=0.95\linewidth]{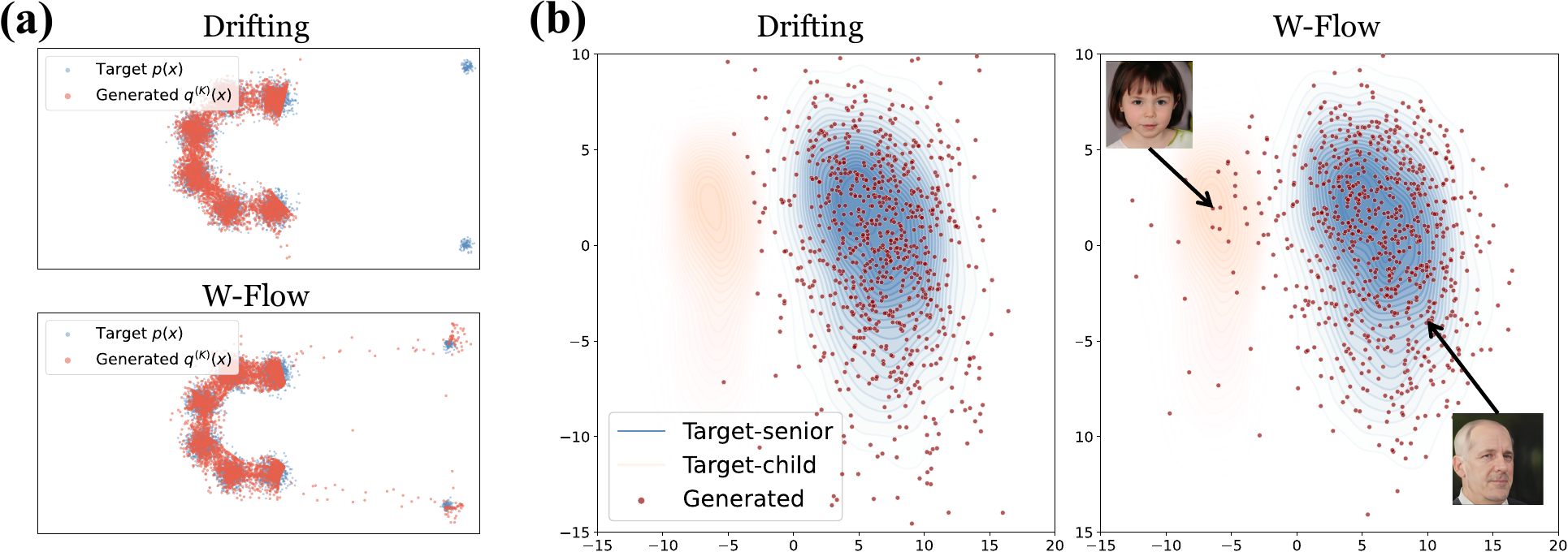}
    \caption{\textbf{Evaluation of mode coverage under imbalanced target distributions.} \textbf{(a)} Evaluation of mode coverage on a 2D Gaussian mixture dataset featuring six dominant modes and two distant minority modes. \textbf{(b)} PCA scatter plot of generated latent codes for an artificially imbalanced FFHQ target distribution (95\% senior faces, 5\% child faces). See Appendix \ref{app:generated_samples} for generated samples comparing mode coverage behavior.
    }
    \label{fig:mode_collapse}
    \vskip -0.15in
\end{figure}

\textbf{Mode coverage.}
We further investigate whether the globally structured OT plan in~\method can mitigate mode collapse, a known vulnerability of Drifting Model (see Appendix~\ref{subsec:discussion}). We construct a 2D Gaussian mixture dataset with six large, tightly concentrated modes and two small distant minority modes. As shown in Fig.~\ref{fig:mode_collapse}(a), Drifting Model greedily collapses on the dominant clusters, while the Sinkhorn barycentric projections in~\method instead enforce global marginal constraints and successfully preserve the coverage of all modes.

We further extend this observation to image generation on FFHQ under an imbalanced training distribution, where target batches contain 95\% senior faces (ages 55--100) and 5\% child faces (ages 0--12). The PCA visualization in Fig.~\ref{fig:mode_collapse}(b) shows that Drifting Model almost entirely drops the minority child mode, whereas~\method successfully captures it while maintaining high fidelity on the dominant senior mode. See Appendix~\ref{app:generated_samples} for representative samples.

%% file: tables/main_table.tex
\newcommand\headspace{\hspace{.2em}}
\colorlet{mygray}{black!40}
\newcommand{\cg}{\color{mygray}}
\begin{table*}[t]
    \centering
\caption{\textbf{Class-conditional generation on ImageNet 256$\times$256.}
        FID is computed on 50K images. All methods are reported with CFG if applicable. ``\#Params'' denotes the number of parameters of ``generator + decoder''. All generators are trained from scratch without distilling from a teacher model.}
        \label{tab:in256_latent}
        \vskip -0.05in
    \begin{minipage}[t]{0.47\textwidth}
        \centering
        \tablestyle{2pt}{1.02}
        \tablefontsize
         \resizebox{\linewidth}{!}{
        \begin{tabular}{@{}l c c r r@{}}
        \toprule
          & \#Params & NFE & FID$\downarrow$ & IS$\uparrow$ \\
        \midrule
        \rowcolor[gray]{0.9} \multicolumn{5}{l}{\textit{Multi-step Diffusion/Flows}} \\
        \headspace \cg ADM-G \tiny\cite{dhariwal2021diffusion}  & \cg \tiny 554M & \cg \tiny 250$\times$2 & \cg 4.59 & \cg 186.7 \\
        \headspace \cg DiT-XL/2 \tiny\cite{peebles2023scalable}  & \cg \tiny 675M+49M & \cg \tiny 250$\times$2 & \cg 2.27 & \cg 278.2 \\
        \headspace \cg SiT-XL/2 \tiny\cite{ma2024sit}  & \cg \tiny 675M+49M & \cg \tiny 250$\times$2 & \cg 2.06 & \cg 270.3 \\
        \headspace \cg SiT-XL/2+REPA \tiny\cite{yu2024representation} & \cg \tiny 675M+49M & \cg \tiny 250$\times$2 & \cg 1.42 & \cg 305.7 \\
        \headspace \cg LightningDiT-XL/2 \tiny\cite{yao2025reconstruction} & \cg \tiny 675M+70M & \cg \tiny 250$\times$2 & \cg 1.35 & \cg 295.3 \\
        \headspace \cg RAE+DiT$^{\text{DH}}$-XL/2 \tiny\cite{zheng2025diffusion} & \cg \tiny 839M+415M & \cg \tiny 50$\times$2 & \cg \textbf{1.13} & \cg 262.6 \\
        \midrule
        \rowcolor[gray]{0.9} \multicolumn{5}{l}{\textit{Masking \& Autoregressive}} \\
        \headspace \cg MaskGIT \tiny\cite{chang2022maskgit}  & \cg \tiny 227M+31M & \cg \tiny 8 & \cg 6.18 & \cg 182.1 \\
        \headspace \cg VAR-$d30$ \tiny\cite{tian2024visual}  & \cg \tiny 2B+61M & \cg \tiny 10$\times$2 & \cg 1.92 & \cg 323.1 \\
        \headspace \cg MAR-H \tiny\cite{li2024autoregressive}  & \tiny \cg 943M+41M & \cg \tiny 256$\times$2 & \cg 1.55 & \cg 303.7 \\
        \midrule
        \rowcolor[gray]{0.9} \multicolumn{5}{l}{\textit{GANs}} \\
        \headspace BigGAN \tiny\cite{brock2018large}  & \tiny 112M & \tiny 1 & 6.95 & 152.8 \\
        \headspace GigaGAN \tiny\cite{kang2023scaling}  & \tiny 569M & \tiny 1 & 3.45 & 225.5 \\
        \headspace StyleGAN-XL \tiny\cite{sauer2022stylegan}  & \tiny 166M & \tiny 1 & 2.30 & 265.1 \\
        \bottomrule
        \end{tabular}
        }
    \end{minipage}
    \hfill
    \begin{minipage}[t]{0.49\textwidth}
        \centering
                \tablestyle{2pt}{1.02}
        \tablefontsize
         \resizebox{\linewidth}{!}{
        \begin{tabular}{@{}l c c r r@{}}
        \toprule
                  & \#Params & NFE & FID$\downarrow$ & IS$\uparrow$ \\
        \midrule
        \rowcolor[gray]{0.9} \multicolumn{5}{l}{\textit{Single-step Diffusion/Flows}} \\
        \headspace iCT-XL/2 \tiny\cite{song2023improved}  & \tiny 675M+49M & \tiny 1 & 34.24 & -- \\
        \headspace Shortcut-XL/2 \tiny\cite{frans2024one} & \tiny 675M+49M & \tiny 1 & 10.60 & -- \\
        \headspace MeanFlow-XL/2 \tiny\cite{geng2025mean} & \tiny 676M+49M & \tiny 1 & 3.43 & -- \\
        \headspace TiM-XL/2 \tiny\cite{wang2025transition} & \tiny 664M+49M & \tiny 1 & 3.26 & 210.3 \\
        \headspace $\alpha$-Flow-XL/2+ \tiny\cite{zhang2025alphaflow} & \tiny 676M+49M & \tiny 1 & 2.58 & -- \\
        \headspace AdvFlow-XL/2 \tiny\cite{lin2025adversarial} & \tiny 673M+49M & \tiny 1 & 2.38 & 284.2 \\
        \headspace iMeanFlow-XL/2 \tiny\cite{geng2025improved} & \tiny 610M+49M & \tiny 1 & 1.72 & 282.0 \\
        \midrule
        \rowcolor[gray]{0.9} \multicolumn{5}{l}{\textit{Drifting Models}} \\
        \headspace Drifting Model, B/2 \tiny\cite{deng2026generative}  & \tiny 133M+49M & \tiny 1 & 1.75 & 263.2 \\
        \headspace Drifting Model, L/2 \tiny\cite{deng2026generative} & \tiny 463M+49M & \tiny 1 & 1.54 & 258.9 \\
        \midrule
        \headspace \textbf{\method, B/2} & \tiny 133M+49M & \tiny 1 & \textbf{1.52} & 271.8 \\
        \headspace \textbf{\method, L/2}  & \tiny 463M+49M & \tiny 1 & \textbf{1.35} & 272.5  \\
        \headspace \textbf{\method, XL/2}  & \tiny 679M+49M & \tiny 1 & \textbf{1.29} & 265.4  \\
        \bottomrule
        \end{tabular}
    }
    \end{minipage}
    \vskip -0.2in
\end{table*}

%% file: sec/conclusion.tex
\section{Conclusion}

We introduced~\method, a new paradigm for one-step generative modeling that guides the training dynamics of image generators by Wasserstein gradient flow. Our method derives the velocity field via the steepest descent of Sinkhorn divergence, yielding a globally coordinated OT-based update rule with consistent particle dynamics. Empirically,~\method sets a new state of the art for one-step ImageNet 256$\times$256 generation with 1.29 FID.
The results establish the WGF dynamics as a foundation for fast, high-fidelity generation. Future work includes exploring a broader class of energy functionals within the WGF framework and extending this paradigm beyond class-conditional image generation to more complex settings such as text-to-image generation.